\documentclass{article}

\usepackage{arxiv}
\usepackage[numbers,sort&compress]{natbib}

\usepackage[utf8]{inputenc} 
\usepackage[T1]{fontenc}    
\usepackage{url}            
\usepackage{booktabs}       
\usepackage{amsfonts}       
\usepackage{nicefrac}       
\usepackage{microtype}      
\usepackage{lipsum}		
\usepackage{graphicx}
\usepackage{doi}
\usepackage{tikz}
\usetikzlibrary{arrows.meta,positioning,fit,backgrounds,calc}

\usepackage[toc,page]{appendix}
\usepackage{titlesec}

\usepackage{algorithm}
\usepackage{algpseudocode}
\usepackage{xcolor}
\usepackage{placeins}
\usepackage{multirow}

\algrenewcommand{\algorithmicrequire}{\textbf{Input:}}
\algrenewcommand{\algorithmicensure}{\textbf{Output:}}
\algrenewcommand{\algorithmiccomment}[1]{\hfill{\color{gray}$\triangleright$~#1}}

\definecolor{merw}{RGB}{31,119,180}    
\definecolor{kernel}{RGB}{44,160,44}   
\definecolor{dist}{RGB}{214,39,40}     
\definecolor{gram}{RGB}{148,103,189}   

\definecolor{cpurple}{RGB}{148,103,189}
\definecolor{cteal}{RGB}{23,190,207}
\definecolor{cblue}{RGB}{31,119,180}
\definecolor{ccoral}{RGB}{214,39,40}
\definecolor{camber}{RGB}{188,120,30}
\definecolor{cgray}{RGB}{200,200,200}
\definecolor{cbg}{RGB}{250,250,250}

\tikzset{
  cell/.style={circle, draw=#1, fill=#1!20, font=\small\bfseries,
               inner sep=3pt, minimum size=20pt},
  mybox/.style={draw=#1!70, fill=#1!10, rounded corners=5pt,
                inner sep=7pt, align=left, font=\small},
  panel/.style={font=\small\bfseries, gray!70},
  divider/.style={gray!30, dashed, thin},
  arr/.style={-{Stealth[length=4pt]}, #1, thick},
  dasharr/.style={-{Stealth[length=4pt]}, gray!55, dashed, thick},
}

\usepackage{amsmath, amssymb}
\usepackage{subcaption}
\usepackage{bm}
\usepackage{geometry}

\usepackage{pgfplots}
\pgfplotsset{compat=1.18}
\usepackage{siunitx}
\usepackage{amsthm}
\usepackage{hyperref}
\usepackage{cleveref}

\definecolor{merw}{RGB}{31,119,180}    
\definecolor{kernel}{RGB}{44,160,44}   
\definecolor{dist}{RGB}{214,39,40}     
\definecolor{gram}{RGB}{148,103,189}   

\theoremstyle{plain}
\newtheorem{theorem}{Theorem}[section]

\usepackage{aliascnt}

\newaliascnt{proposition}{theorem}
\newtheorem{proposition}[proposition]{Proposition}
\aliascntresetthe{proposition}

\newaliascnt{lemma}{theorem}
\newtheorem{lemma}[lemma]{Lemma}
\aliascntresetthe{lemma}

\newaliascnt{corollary}{theorem}

\aliascntresetthe{corollary}

\theoremstyle{definition}

\newaliascnt{definition}{theorem}
\newtheorem{definition}[definition]{Definition}
\aliascntresetthe{definition}

\theoremstyle{remark}

\newaliascnt{remark}{theorem}
\newtheorem{remark}[remark]{Remark}
\aliascntresetthe{remark}

\newaliascnt{assumption}{theorem}
\newtheorem{assumption}[assumption]{Assumption}
\aliascntresetthe{assumption}

\crefname{theorem}{Theorem}{Theorems}
\Crefname{theorem}{Theorem}{Theorems}
\crefname{proposition}{Proposition}{Propositions}
\Crefname{proposition}{Proposition}{Propositions}
\crefname{lemma}{Lemma}{Lemmas}
\Crefname{lemma}{Lemma}{Lemmas}
\crefname{corollary}{Corollary}{Corollaries}
\Crefname{corollary}{Corollary}{Corollaries}
\crefname{definition}{Definition}{Definitions}
\Crefname{definition}{Definition}{Definitions}
\crefname{remark}{Remark}{Remarks}
\Crefname{remark}{Remark}{Remarks}

\newcommand{\lmax}{\lambda_{\max}}
\newcommand{\dM}{d_{\mathcal{M}}}
\newcommand{\diag}{\operatorname{diag}}
\newcommand{\Psimat}{\varPsi}
\newcommand{\R}{\mathbb{R}}

\newcommand{\calM}{\mathcal{M}}

\usepackage{xspace}
\newcommand{\entropath}{EntroPath\xspace}

\title{EntroPath: Maximum Entropy Path Ensemble Embedding for Manifold Learning}

\author{
Przemys\l aw Rola \\
Department of Mathematics, Krakow University of Economics \\
\texttt{przemyslaw.rola@outlook.com} \\
}

\renewcommand{\shorttitle}{\textit{arXiv} Template}

\hypersetup{
pdftitle={A template for the arxiv style},
pdfsubject={q-bio.NC, q-bio.QM},
pdfauthor={David S.~Hippocampus, Elias D.~Striatum},
pdfkeywords={First keyword, Second keyword, More},
}

\begin{document}
\maketitle

\begin{abstract}
We introduce \emph{\entropath}, a manifold learning method that recovers
geodesic geometry from data graphs through ensembles of diffusion paths.
Many existing graph-based embeddings rely either on locally normalised
random walks or on shortest-path distances. The former can concentrate
diffusion in densely sampled regions, while the latter are sensitive to
spurious shortcut edges in the graph. \emph{\entropath} instead builds its
dissimilarities from the maximum entropy random walk (MERW), which
aggregates the full ensemble of $k$-step paths between points rather than
relying on any single trajectory. We show that the resulting free-energy
dissimilarity converges to squared geodesic distance in the short-time
limit, via Varadhan's heat-kernel formula. The diffusion depth $k$
interpolates smoothly between local neighbourhood structure and global
manifold geometry, and the symmetrised kernel admits an exact Gram
factorisation connecting \emph{\entropath} to kernel methods. We further
provide scalable extensions via landmark projection and diffusion-potential
pseudotime. Across synthetic manifolds and single-cell benchmarks,
\emph{\entropath} consistently matches or outperforms diffusion- and
shortest-path-based methods,
while remaining competitive with neighbourhood-preserving embeddings (UMAP, $t$-SNE) on local-structure metrics.
Its gains are most pronounced on manifolds with
non-uniform sampling density and well-separated branching trajectories,
where path-ensemble diffusion more faithfully preserves the underlying
geodesic geometry.
\end{abstract}

\keywords{Manifold Learning \and Dimensionality Reduction \and
Geodesic Distance \and Maximum Entropy Random Walk \and Diffusion Geometry \and
Path Ensemble \and Single-Cell Trajectory Inference}

\section{Introduction}
\label{sec:intro}
\paragraph{Motivation.}
Modern high-dimensional datasets---single-cell transcriptomics, molecular
dynamics trajectories, and image manifolds---typically concentrate near a
low-dimensional Riemannian manifold embedded in ambient space. The intrinsic
geometry of such a manifold is characterised by its geodesic distances, and
preserving this geometry in a low-dimensional representation is a central goal
of manifold learning.

Existing methods approach this problem in different ways. Graph shortest-path
methods, exemplified by Isomap~\citep{Tenenbaum2000}, estimate geodesic
distances as shortest paths through a neighbourhood graph. This single-path
estimate is fragile: a single short-circuit edge between geodesically distant
regions can corrupt the resulting distances and distort the global
embedding~\citep{Balasubramanian2002}. Moreover, theoretical recovery guarantees
require restrictive assumptions, such as convex parameter domains, that are
violated by branching or holed manifolds, producing strong distortions around
the undersampled regions~\citep{Donoho2003,Bernstein2000}. Other approaches,
including t-SNE~\citep{vanderMaaten2008} and UMAP~\citep{McInnes2018}, focus
primarily on preserving local neighbourhood relationships through probabilistic
objectives rather than explicitly approximating manifold geodesics. While often
effective for visualisation, these methods do not directly encode global
geodesic structure.

Diffusion and spectral methods instead aggregate information over an
\emph{ensemble} of graph paths through a diffusion operator. By integrating many
paths rather than relying on a single shortest path, they are less sensitive to
individual graph errors and provide a natural connection to the manifold's
intrinsic geometry. Within this family, the key design choice is the
normalisation of the graph operator, since the conversion of affinities into
transition probabilities determines which geometric features are preserved.
Standard random walks (SRW) normalise row-wise by node degree, causing diffusion
to drift toward high-degree, densely sampled regions. In data graphs this has
two concrete consequences. First, sparse branches and bottlenecks are
compressed: a thin bridge between two clusters has low degree, so the SRW leaks
through it readily and underestimates the true separation (\Cref{fig:bottleneck}). Second, in
single-cell trajectory datasets, branch-point and transitional cell states are
often underrepresented relative to mature terminal populations; random-walk
embeddings can be sensitive to such sampling-density imbalances, potentially
biasing representations toward densely populated terminal regions and reducing
the accuracy of pseudotime estimation around lineage bifurcations unless
density-correction mechanisms are applied.

The maximum entropy random walk (MERW) addresses both effects by maximising the
entropy of trajectories \emph{globally} rather than locally. Distributing
probability as uniformly as possible over the ensemble of paths leaves little
probability mass on low-degree bottlenecks, preserving the separation that degree
normalisation collapses.

Because the resulting dissimilarities are computed over path ensembles rather than single shortest paths, this global correction retains the noise-robustness of diffusion geometry (\Cref{app:ablation-noise}) while targeting the geodesic distances
that shortest-path methods aim for but cannot estimate reliably on non-convex,
branching data.

\paragraph{Contributions.}
\begin{itemize}
  \item We introduce \entropath, a manifold-learning method based on the
    maximum entropy random walk (MERW) and a free-energy dissimilarity
    derived from $k$-step path ensembles.

  \item We give a free-energy (path-ensemble) formulation of the
    dissimilarity as a log-sum-exp over $k$-step path costs---a smooth
    soft-min relaxation of shortest-path geometry that aggregates the whole
    $k$-step ensemble rather than a single trajectory. The entropy term in
    this free energy (the source of the name \entropath) is what separates
    it from shortest-path distance, rewarding pairs joined by many
    comparable paths as well as by short ones.

  \item We prove a Varadhan-type short-time geodesic-recovery theorem: via
    the discrete Schr\"odinger-operator structure of the MERW Laplacian,
    the symmetrised free-energy dissimilarity recovers the squared geodesic
    distance on the underlying manifold. We further show the symmetrised
    MERW kernel is a diffusion-map Gram kernel (the Perron-vector ratios
    cancel exactly), connecting \entropath\ to kernel methods.

  \item We develop scalable extensions: a landmark Nystr\"om-type kernel
    projection for out-of-sample embedding and a MERW diffusion-potential
    pseudotime for trajectory inference.

  \item We introduce a two-stage evaluation protocol that separates
    distance-level (Level-1) from embedding-level (Level-2) performance, and
    analyse a bias in shortest-path geodesic evaluation that favours methods
    whose distances are tied to the underlying $k$-NN graph.

  \item Across synthetic manifolds and biological single-cell datasets,
    \entropath\ matches or outperforms PHATE, HeatGeo, DTNE, diffusion
    maps, and Isomap, with its clearest advantages at the distance level
    and on non-uniformly sampled manifolds and well-separated branching
    trajectories.
\end{itemize}

\paragraph{Paper organisation.}
Section~\ref{sec:merw} introduces the notation, defines the maximum entropy
random walk, and develops its Schr\"odinger-operator formulation. Section~\ref{sec:fedd} introduces the free-energy dissimilarity and its geometric regimes, while
Section~\ref{sec:kernel} establishes its Gram-matrix representation and its
relation to DTNE. Section~\ref{sec:geodesic} states the geodesic approximation
theorem, with the complete four-step proof deferred to
Appendix~\ref{app:proof}. Section~\ref{sec:algorithm} presents the algorithm,
and Section~\ref{sec:scalable} extends it to scalable landmark embeddings and
pseudotime inference. Section~\ref{sec:experiments} evaluates \entropath on
synthetic manifolds and single-cell datasets. Section~\ref{sec:related}
discusses related work, and Section~\ref{sec:conclusion} concludes with
limitations and directions for future work. The appendices contain the complete
experimental tables, ablation studies, and the analytic-versus-shortest-path
protocol validation (Appendix~\ref{app:additional-experiments}); details of the
spectral basis for diffusion-time selection
(Appendix~\ref{app:appendix-spectral-basis}); the correspondence with DTNE
(Appendix~\ref{app:dtne}); and the full proof of
\Cref{thm:geodesic} (Appendix~\ref{app:proof}).

\section{Maximum Entropy Random Walk}
\label{sec:merw}

\subsection{Setup and notation}
\label{sec:setup}
Let $G=(V,E)$ be a connected undirected graph with $n$ nodes and symmetric
nonnegative affinity matrix $A\in\R^{n\times n}$.
By the Perron--Frobenius theorem, $A$ admits a unique positive principal
eigenvector $\psi$ associated with its largest eigenvalue $\lmax$. Let $(\lambda_m,\phi^{(m)})_{m=1}^{n}$ be the eigenpairs of $A$,
\[
  A\phi^{(m)}=\lambda_m\phi^{(m)},
  \qquad
  \lambda_1\ge\lambda_2\ge\cdots\ge\lambda_n,
\]
where
\[
  \phi^{(1)}\equiv\psi>0,
  \qquad
  \|\psi\|_2=1,
  \qquad
  \lambda_1=\lmax.
\]
Define
\[
  \Psimat=\diag(\psi).
\]
Let
\[
D=\diag(d_1,\dots,d_n),
\qquad
d_i=\sum_j A_{ij},
\]
and define the combinatorial graph Laplacian
\[
L=D-A.
\]
Equivalently, using the physics convention,
\[
\Delta=A-D=-L.
\]


\subsection{Definition and basic properties}
The transition matrix that maximises the entropy rate among all Markov chains
compatible with $G$ is
\begin{equation}\label{eq:T}
  T_{ij}
  =
  \frac{A_{ij}}{\lmax}\cdot\frac{\psi_j}{\psi_i}
  \;=\;
  \frac{1}{\lmax}
  \bigl[\Psimat^{-1}A\Psimat\bigr]_{ij}.
\end{equation}
The chain is reversible with stationary distribution
\[
  \pi_i=\psi_i^2,
\]
since
\[
  \pi_iT_{ij}
  =
  \pi_jT_{ji}.
\]
This quadratic dependence on the Perron vector resembles the Born rule in
quantum mechanics, where probabilities are squared amplitudes.
Unlike the standard random walk
$T^{\mathrm{SRW}}_{ij}=A_{ij}/d_i$,
MERW maximises entropy over entire paths of a given length, making all
admissible paths of equal length between two nodes equiprobable. The resulting diffusion process is globally reweighted by the
Perron--Frobenius eigenvector $\psi$, while its transient dynamics remain
determined by the full spectrum of $A$.

The $k$-step transition matrix admits the exact spectral representation
\begin{equation}\label{eq:Tk}
  T^k
  =
  \Psimat^{-1}
  \left(\frac{A}{\lmax}\right)^k
  \Psimat,
  \qquad
  T^k_{ij}
  =
  \frac{\psi_j}{\psi_i}
  \sum_{m=1}^{n}
  \mu_m^k\,
  \phi_i^{(m)}
  \phi_j^{(m)},
\end{equation}
where
\[
\mu_m=\frac{\lambda_m}{\lmax}\in[-1,1],
\qquad
\mu_1=1.
\]

\paragraph{Implications for embedding.}
Unlike the standard random walk, which is normalised solely by local degrees,
MERW incorporates global graph structure through the Perron--Frobenius
eigenvector $\psi$. From \eqref{eq:T}, $T=\Psimat^{-1}(A/\lmax)\Psimat$, so $T$
and $A/\lmax$ are similar and share the same spectrum. Defining the \emph{discrete Hamiltonian}
\begin{equation}\label{eq:H}
  H = \lmax I - A,
\end{equation}
the MERW Laplacian $I-T$ is similar to $H/\lmax$:
\[
  I-T \;=\; \tfrac{1}{\lmax}\,\Psimat^{-1}H\,\Psimat,
  \qquad H = L + V, \quad V_i=\lmax-d_i,
\]
a discrete Schr\"odinger operator with degree-induced potential $V$, analysed in
\Cref{sec:schrodinger}. Embeddings derived from $T^k$ therefore reflect
diffusion in a geometry shaped by both local connectivity and global spectral
structure, rather than by local degree normalisation alone.

\begin{remark}[Validity for weighted Gaussian affinities]
\label{rem:gaussian-valid}
Burda et al.~\citep{Burda2009} derive MERW for the $\{0,1\}$ adjacency matrix of
an unweighted graph, but the construction~\eqref{eq:T} extends to any
nonnegative symmetric $A$, requiring only:
\begin{enumerate}
  \item \textbf{Nonnegativity}: $A_{ij}\ge0$, ensuring nonnegative transition
        probabilities;
  \item \textbf{Symmetry}: $A=A^\top$, ensuring detailed balance,
        reversibility, and a real orthogonal eigendecomposition;
  \item \textbf{Irreducibility}: the graph is connected, ensuring the
        Perron--Frobenius eigenvector $\psi$ is strictly positive and unique.
\end{enumerate}
The Gaussian affinity $A_{ij}=\exp(-\|x_i-x_j\|^2/\sigma_i\sigma_j)$ satisfies all three on a connected $k$NN graph after the standard symmetrisation $A\leftarrow\max(A,A^\top)$, so the
results below apply to it without modification. For non-symmetric $A$ a MERW can still be defined via the left and right Perron eigenvectors, but reversibility is lost and the operator is no longer similar to a self-adjoint Schr\"odinger operator; since our analysis relies on reversible diffusion and spectral geometry, we restrict attention to symmetric affinities throughout.
\end{remark}

\subsection{MERW Laplacian as a discrete Schr\"odinger operator}
\label{sec:schrodinger}
Write $L_{\mathrm{MERW}} = I - T$ for the MERW Laplacian, which by
\eqref{eq:H} is similar to $H/\lmax$. We now establish the spectral properties
of the Schr\"odinger operator $H = L + V = -\Delta + V$ (with degree-induced
potential $V_i = \lmax - d_i$) on which the geodesic analysis depends.

The potential $V$ is sign-indefinite. By Perron--Frobenius the spectral radius is bracketed by the extreme degrees,
$d_{\min} \le \lmax \le d_{\max}$, so
$V_i = \lmax - d_i$ is positive where $d_i < \lmax$, vanishes on regular graphs
(where $\lmax = d$), and is negative at the maximum-degree node(s) whenever the
graph is irregular. Rescaling by $\lmax$ does not change this: the normalised
potential $\hat V_i = 1 - d_i/\lmax$ lies in
$[\,1 - d_{\max}/\lmax,\ 1 - d_{\min}/\lmax\,]$, an interval that straddles zero
($\hat V_i \le 1$, but not in general $\ge 0$). Sign-indefiniteness of the potential is
not problematic: what matters is that
$H$ is positive semidefinite \emph{as an operator}, a spectral rather than a
pointwise property.

Three properties of $H$ are important for what follows.
First, $H$ is positive semidefinite, with eigenvalues
$\mu_\ell^H = \lmax - \lambda_\ell \ge 0$ (since $\lambda_\ell \le \lmax$ for
all $\ell$), so $e^{-tH}$ is a well-defined contraction semigroup for all
$t \ge 0$.
Second, $H\psi = 0$: the Perron--Frobenius eigenvector $\psi$ of $A$
is the zero-energy eigenfunction of $H$, playing the role of the ground state.
Third, and most importantly for manifold learning, $L_{\mathrm{MERW}}$ is
similar to the rescaled Schr\"odinger operator $H/\lmax$, via the $\Psimat$
conjugation introduced with \eqref{eq:H}. Consequently the heat semigroup
$e^{-tH}$ corresponds---up to that similarity---to multi-step MERW diffusion.
Concretely, the normalised affinity used in our dissimilarity
(\Cref{sec:dissimilarity}) is $\tilde A = A/\lmax = I - H/\lmax$, so
$\tilde A^{\,k} = (I - H/\lmax)^k \approx e^{-tH}$ with $t = k/\lmax$: the
diffusion-step count $k$ corresponds to heat-kernel time $t = k/\lmax$,
the approximation becoming exact in the short-time limit.
This connection is what allows Varadhan's heat-kernel formula
(\Cref{sec:geodesic}) to link MERW transition probabilities to
geodesic distances on the underlying data manifold.

Unlike the normalised graph Laplacian $L_{\mathrm{norm}} = I - D^{-1/2}AD^{-1/2}$,
which normalises degree away locally, $H = \lmax I - A$ normalises only by the
global spectral radius $\lmax$ and retains the potential $V_i=\lmax-d_i$. The
MERW geometry is therefore governed by the spectrum of $A$ (equivalently $H$)
directly, rather than by a degree-normalised operator.


\section{Free-Energy Dissimilarity} 
\label{sec:fedd}

\paragraph{Geodesic dissimilarity from MERW.}
\label{sec:dissimilarity}
We define the dissimilarity
\[
  D_{ij} \;=\; -\log\bigl(\tilde{A}^k\bigr)_{ij},
  \qquad \tilde{A} = A/\lmax,
\]
on the $k$-step rescaled maximum entropy random walk (MERW) kernel. This is the
form used throughout, and it is exact in the symmetric case (our primary setting;
\Cref{rem:symm-simplification}). For general $A$ the dissimilarity is defined by
the geometric-mean symmetrisation of the MERW transition probabilities
(\Cref{def:D}), in which the Perron-vector ratios cancel and which reduces
exactly to the expression above when $A$ is symmetric.
Our main theoretical result (\Cref{thm:geodesic}) shows that, under the
time scaling $t = k/\lmax$,
\[
  -4t \log\bigl(\tilde{A}^k\bigr)_{ij} \;\longrightarrow\; \dM(i,j)^2
  \qquad \text{as } t \to 0,
\]
so $D_{ij}$ recovers the squared geodesic distance on the underlying
manifold in the short-time limit, with larger $k$ giving a smooth
multi-path relaxation, via Varadhan's short-time heat-kernel
formula~\citep{Varadhan1967} (proof in \Cref{sec:geodesic}).
A statistical-physics reading of $D_{ij}$ as a path-ensemble free energy
is developed in \Cref{sec:thermo}.

\subsection{Path costs and the log-sum-exp dissimilarity}
\label{sec:thermo}
Assign each edge a \emph{cost} equal to its negative log-affinity,
\begin{equation}\label{eq:edge-energy}
  E(u,v) = -\log A_{uv},
\end{equation}
so for the Gaussian kernel with bandwidth $\varepsilon = \sigma_u\sigma_v$ the cost
$E(u,v)=\|x_u-x_v\|^2/\varepsilon$ is the squared normalised distance: short edges
are cheap, long edges costly. Path cost is additive,
$E(\gamma)=\sum_{(u,v)\in\gamma}E(u,v)=-\sum\log A_{uv}$, and each path carries
weight $e^{-E(\gamma)}=\prod_{(u,v)\in\gamma}A_{uv}$. This choice of cost is the
natural one: the logarithm turns the path product into a sum of edge costs, and
the minimum-cost path is exactly the maximum-weight (most probable) path, so cost
minimisation and probability maximisation coincide.
Summing these weights over all $k$-step paths between $i$ and $j$ is, by the
definition of matrix multiplication, the $(i,j)$ entry of the $k$-th power:
\begin{equation}\label{eq:Z-is-Ak}
  \sum_{\gamma:\,i\to j,\,|\gamma|=k} e^{-E(\gamma)} = (A^k)_{ij}.
\end{equation}
Our dissimilarity is the negative logarithm of this sum, after the normalisation
$\tilde A = A/\lmax$:
\begin{equation}\label{eq:free-energy}
  D_{ij}
  = -\log (\tilde A^k)_{ij}
  = -\log\!\!\sum_{\gamma:\,i\to j,\,|\gamma|=k}\;\prod_{(u,v)\in\gamma}\tilde A_{uv}.
\end{equation}
Since $(\tilde A^k)_{ij}$ is a partition function over all $k$-step paths between
$i$ and $j$, $D_{ij}$ is the corresponding free energy: a \emph{log-sum-exp} over
path costs, the smooth minimum that aggregates the whole $k$-step ensemble rather
than any single path. This statistical-mechanical interpretation of adjacency
walk sums follows \citet{Estrada2007}; a related path-integral weighting of MERW
underlies PAN~\citep{Ma2020} (see \Cref{sec:related}).
\paragraph{Why the normalisation $\tilde A=A/\lmax$.}
Without it, $(A^k)_{ij}$ grows like $\lmax^k$ and exceeds $1$, making
$-\log(A^k)_{ij}$ negative. Rescaling each of the $k$ edges by $1/\lmax$ gives
$(\tilde A^k)_{ij}=\lmax^{-k}(A^k)_{ij}\in(0,1]$, hence $D_{ij}\ge 0$;
equivalently, it measures path costs relative to the dominant eigenmode
$\log\lmax$.
\paragraph{What the dissimilarity captures.}
Writing the softmax weight of a path as
$p(\gamma)=e^{-E(\gamma)}/(\tilde A^k)_{ij}$, the log-sum-exp decomposes as
\[
  D_{ij}
  = \underbrace{\textstyle\sum_\gamma p(\gamma)\,E(\gamma)}_{\text{mean path cost}}
  \;-\;
  \underbrace{\Bigl(-\textstyle\sum_\gamma p(\gamma)\log p(\gamma)\Bigr)}_{\text{path entropy }\mathcal S},
\]
the standard free-energy identity $-\log\sum e^{-E}=\langle E\rangle-\mathcal S$.
Thus $D_{ij}$ is small when $i$ and $j$ are joined by short paths (low mean cost)
\emph{and/or} by many comparable paths (high entropy): closeness and connectivity
both shorten the dissimilarity. This entropy term---the source of the name
\entropath---distinguishes the dissimilarity from shortest-path distance, which
rewards only the single best path, and from the diffusion-based distances
discussed in \Cref{sec:related}, which compare whole transition-probability
profiles rather than a pairwise path ensemble.
\paragraph{Soft-min and the geodesic limit.}
With an inverse temperature $\beta$, the log-sum-exp is a smooth relaxation of the
minimum,
\[
  \lim_{\beta\to\infty}-\tfrac1\beta\log\!\sum_\gamma e^{-\beta E(\gamma)}
  = \min_\gamma E(\gamma),
\]
so $D_{ij}$ interpolates between the affinity-weighted graph geodesic (the lowest-cost path) and a full average over paths. The short-time geodesic recovery
of \Cref{thm:geodesic} is the continuum counterpart of this soft-min limit.

\subsection{Symmetrised free-energy dissimilarity}
\label{sec:symm}

\begin{definition}[Symmetrised free-energy dissimilarity]\label{def:D}
For nodes $i,j\in V$ and diffusion depth $k$,
\begin{equation}\label{eq:D}
  D_{ij} = -\log\sqrt{T^k_{ij}\,T^k_{ji}},
\end{equation}
the geometric-mean symmetrisation of the forward and backward MERW transition
probabilities.
\end{definition}

By \eqref{eq:Tk}, $T^k_{ij}=(\psi_j/\psi_i)(\tilde A^k)_{ij}$, so the
Perron-vector ratios cancel in the product,
$T^k_{ij}T^k_{ji}=(\tilde A^k)_{ij}(\tilde A^k)_{ji}$, and
\[
  D_{ij} = -\tfrac12\log\!\bigl[(\tilde A^k)_{ij}(\tilde A^k)_{ji}\bigr],
\]
the average of the forward and backward free energies. Hence $D_{ij}$ is symmetric for any $A$; it reduces to
$D_{ij}=-\log(\tilde A^k)_{ij}$ when $A$ is symmetric, and $D_{ij}\ge 0$ since
$(\tilde A^k)_{ij}\in(0,1]$.

\definecolor{cpurple}{RGB}{120,92,196}
\definecolor{cteal}{RGB}{38,170,170}
\definecolor{cblue}{RGB}{56,118,208}
\definecolor{camber}{RGB}{233,160,38}
\tikzset{
  cell/.style={circle, draw=#1!75, fill=#1!20, text=#1!45!black,
               minimum size=5.6mm, inner sep=0, font=\small\bfseries},
}
\begin{figure}[t]
\centering
\begin{tikzpicture}[scale=1.0,>=Stealth]
  \coordinate (I) at (-4.1,-0.1);
  \coordinate (J) at (-1.8,1.8);
  \draw[camber, line width=2.3pt, opacity=0.95]
    (I) .. controls (-4.0,1.8) and (-2.4,2.2) ..
    node[above left=-1pt, font=\tiny, camber] {$\gamma_1$: low $E(\gamma)$} (J);
  \draw[cblue, line width=1.4pt, opacity=0.7]
    (I) .. controls (-3.0,1.0) and (-2.5,0.8) ..
    node[below=1pt, font=\tiny, cblue] {$\gamma_2$} (J);
  \draw[cblue, line width=1.0pt, opacity=0.35]
    (I) .. controls (-3.0,-0.8) and (-2.2,0.2) ..
    node[below right=-1pt, font=\tiny, cblue, opacity=1]
    {$\gamma_3$: high $E(\gamma)$} (J);
  \node[cell=cpurple] at (I) {$i$};
  \node[cell=cteal]   at (J) {$j$};
\end{tikzpicture}
\caption{The MERW free-energy dissimilarity as a soft-min over paths. Each
  $k$-step path $\gamma$ from $i$ to $j$ carries weight $e^{-E(\gamma)}$ with
  additive edge energy $E(u,v)=-\log\tilde A_{uv}$, and the partition function
  $(\tilde A^k)_{ij}=\sum_{\gamma:i\to j}e^{-E(\gamma)}$ is a log-sum-exp
  (soft-min) over these energies. The lowest-energy, geodesic-like path
  ($\gamma_1$, thick amber) dominates the sum while high-energy detours
  ($\gamma_3$, faded) are exponentially suppressed, so
  $D_{ij}=-\log(\tilde A^k)_{ij}$ tracks the cheapest route (\Cref{def:D}).}
\label{fig:path-ensemble}
\end{figure}

\begin{remark}[Metric axioms]
\label{rem:metric}
$D_{ij}$ is symmetric and non-negative but is not a metric in general: as a
log-kernel dissimilarity \eqref{eq:D} it need not satisfy the triangle inequality, and the diagonal $D_{ii}=-\log T^k_{ii}$ is in general nonzero —
the walk's return log-probability — failing the identity of indiscernibles;
we set it to zero before embedding by metric MDS, initialised at the
classical-MDS solution (\Cref{alg:entropath}). It is therefore a
\emph{dissimilarity}, which is all that MDS --- classical or metric ---
requires.\footnote{Classical MDS embeds the double-centred matrix $B=-\tfrac12 J D^{(2)} J$, where $D^{(2)}=(D_{ij}^2)$ is the matrix of squared dissimilarities and $J$ the centring matrix. For a non-metric dissimilarity, the
negative eigenvalues of $B$ are discarded, giving the least-squares
best-approximation embedding.}
\end{remark}

\begin{remark}[Simplification for symmetric $A$]
\label{rem:symm-simplification}
When $A$ is symmetric---the standard case, since the Gaussian affinity is
symmetric wherever defined and the only asymmetry is the kNN sparsity pattern,
repaired by $A\leftarrow\max(A,A^\top)$ (which fills missing reverse edges and
alters no existing entry; cf.\ \Cref{rem:gaussian-valid})---we have
$(\tilde{A}^k)_{ij}=(\tilde{A}^k)_{ji}$, so
\begin{equation}\label{eq:D-symm}
  D_{ij} = -\log(\tilde{A}^k)_{ij}.
\end{equation}
Computing the dissimilarity then needs only powers of $\tilde{A}=A/\lmax$; the
$\psi$-reweighting that defines the transition matrix $T$ is required only to
enforce symmetry when $A$ is not symmetric. The MERW operator $T$ is still used
to select the diffusion depth (\Cref{sec:algorithm}), but not for the distance
itself. The dissimilarity $D_{ij}$ is small when many short, high-affinity paths
connect $i$ to $j$, and large when connectivity is sparse or paths are long.
\end{remark}

\begin{remark}[Relation to existing log-diffusion distances]
\label{rem:novelty}
The quantity $-\log(\tilde A^k)_{ij}$ resembles prior log-diffusion distances but
differs in its normalisation. PHATE \citep{Moon2019} and HeatGeo
\citep{Huguet2023} operate on the row-normalised Markov operator $P=D^{-1}A$,
whose entries depend on local degree; normalising instead by the global spectral
radius, $\tilde A=A/\lmax$, makes each edge contribute $A_{uv}/\lmax$, so a
path's weight is its affinity product, undistorted by degree heterogeneity. The
entries $(\tilde A^k)_{ij}$ are then the MERW partition function
\eqref{eq:Z-is-Ak}---a discrete path integral over the $k$-step ensemble,
equivalently the Schr\"odinger heat-kernel amplitude $e^{-tH}$ in the continuum
limit ($t=k/\lmax$; \Cref{lem:dtoc}). Because this entry already pools
exponentially many paths, a single noisy edge is one term among many, so
\entropath\ uses the direct log-amplitude rather than the profile-based
(``triplet'') denoising of PHATE and HeatGeo (\Cref{sec:related}).
\end{remark}

\begin{remark}[Role of $\lmax$]
\label{rem:lmax}
The normalisation $t=k/\lmax$ is required for \Cref{thm:geodesic} and for
interpreting $\tilde{A}^{\,k}=(A/\lmax)^k$ as a discrete approximation of the
Schr\"odinger heat semigroup $e^{-tH}$.
\end{remark}

\subsection{Large-\emph{k} limit}

\begin{proposition}[Large diffusion-depth limit]\label{prop:largek}
Assume $G$ is connected and aperiodic\footnote{Aperiodicity holds automatically
for Gaussian affinity matrices with positive diagonal entries.} with spectral gap
$\delta=1-\mu_2>0$. Let $\{r_\ell\}$ be the right eigenvectors of the MERW
transition matrix $T=\Psimat^{-1}\tilde A\Psimat$\footnote{Equivalently $r_\ell(i)=\frac{\phi_i^{(\ell)}}{\psi_i}$ where $\phi_i$ are the eigenvectors of $A$; see \eqref{eq:vi}}.
Since $T$ is reversible with stationary distribution $\pi_i=\psi_i^2$, hence $\{r_\ell\}$ are orthonormal in $L^2(\pi)$. Then
\[
  D_{ij}
  = -\tfrac{1}{2}\log(\pi_i\pi_j)
    - \mu_2^k\,r_2(i)\,r_2(j)
    + o(\mu_2^k)
  \;\xrightarrow{k\to\infty}\;
  -\tfrac{1}{2}\log(\pi_i\pi_j).
\]
\end{proposition}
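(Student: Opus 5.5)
The plan is to work with the symmetric form \eqref{eq:D-symm}, $D_{ij}=-\log(\tilde A^k)_{ij}$. For a general $A$ the same argument applies to each factor of the geometric mean in \Cref{def:D}. The first step is to expand $\tilde A^k$ spectrally. Since $A$ is symmetric with orthonormal eigenvectors $\phi^{(m)}$, we have $(\tilde A^k)_{ij}=\sum_{m=1}^n \mu_m^k\phi^{(m)}_i\phi^{(m)}_j$. I would then factor out the Perron term $\mu_1^k\psi_i\psi_j=\psi_i\psi_j$. Using $r_m(i)=\phi^{(m)}_i/\psi_i$, which is the identification in the footnote, and noting that these are exactly the right eigenvectors of $T=\Psimat^{-1}\tilde A\Psimat$, this gives
\[
(\tilde A^k)_{ij}=\psi_i\psi_j\Bigl(1+\sum_{m\ge 2}\mu_m^k\,r_m(i)\,r_m(j)\Bigr).
\]
Orthonormality in $L^2(\pi)$ is immediate: $\sum_i\pi_i r_\ell(i)r_m(i)=\sum_i\phi^{(\ell)}_i\phi^{(m)}_i=\delta_{\ell m}$. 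This is only needed to justify the normalisation of the $r_\ell$, not the expansion itself.

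Taking $-\log$, the prefactor yields $-\log(\psi_i\psi_j)=-\tfrac12\log(\pi_i\pi_j)$ because $\pi_i=\psi_i^2$. The remaining term is $-\log(1+\varepsilon_k)$, where $\varepsilon_k=\sum_{m\ge2}\mu_m^k r_m(i)r_m(j)$.

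The second step is to control $\varepsilon_k$. Connectivity together with aperiodicity gives $|\mu_m|<1$ for all $m\ge2$. In particular, aperiodicity excludes $\mu_n=-1$, which is exactly where the footnote's positive diagonal is used. Hence $\varepsilon_k\to0$ geometrically, and $-\log(1+\varepsilon_k)=-\varepsilon_k+O(\varepsilon_k^2)$.

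The main obstacle is isolating the leading term $-\mu_2^k r_2(i)r_2(j)$ with an $o(\mu_2^k)$ remainder. Writing $\varepsilon_k=\mu_2^k r_2(i)r_2(j)+\sum_{m\ge3}\mu_m^k r_m(i)r_m(j)$, the remaining sum is $o(\mu_2^k)$ only if $|\mu_m|<\mu_2$ for every $m\ge3$. This is a nondegeneracy condition that the hypothesis $\delta>0$ alone does not supply. It can fail in two ways:
\begin{itemize}
\item a repeated $\mu_2$, in which case the leading term should read $\sum_{m:\mu_m=\mu_2}r_m(i)r_m(j)$;
\item $\mu_n=-\mu_2$ or $|\mu_n|>\mu_2$, in which case the oscillating mode $\mu_n^k$ competes with or dominates $\mu_2^k$.
\end{itemize}
I would therefore make this condition explicit, interpreting $\mu_2$ as the simple subdominant eigenvalue in modulus. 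Under it, the remainder is bounded by $\sum_{m\ge3}|\mu_m|^k|r_m(i)r_m(j)|\le C\,\rho^k$ with $\rho=\max_{m\ge3}|\mu_m|<\mu_2$, so it is $o(\mu_2^k)$. The quadratic term $O(\varepsilon_k^2)=O(\mu_2^{2k})$ is also $o(\mu_2^k)$.

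Combining these estimates gives the stated expansion. Letting $k\to\infty$ and using $\mu_2^k\to0$ then gives the limit $-\tfrac12\log(\pi_i\pi_j)$. The limit statement itself requires only $|\mu_m|<1$ for $m\ge2$, so it holds even without the nondegeneracy refinement.
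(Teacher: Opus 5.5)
Your argument is essentially the paper's: both expand the $k$-step kernel in the eigenbasis of $A$, factor out the Perron mode, and linearise the logarithm. The only cosmetic difference is that you work with $(\tilde A^k)_{ij}$ directly, while the paper multiplies the expansions of $T^k_{ij}$ and $T^k_{ji}$; the two coincide after the Perron-ratio cancellation. Your nondegeneracy caveat is correct. The paper uses it silently in the step $T^k_{ij}T^k_{ji}=\pi_i\pi_j\bigl(1+2\mu_2^k r_2(i)r_2(j)+o(\mu_2^k)\bigr)$: the $5$-cycle is connected and aperiodic, yet its $\mu_2=\cos(2\pi/5)$ is a double eigenvalue and is dominated in modulus by the double eigenvalue $-\cos(\pi/5)$. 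Making nondegeneracy an explicit hypothesis is therefore the right repair of the statement, not a gap in your proof.
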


\begin{proof}
The spectral theorem for reversible Markov chains \citep{Levin2017} gives
\[
  T^k_{ij}=\pi_j\sum_{\ell\ge 1}\mu_\ell^k\,r_\ell(i)r_\ell(j)
          =\pi_j\Bigl(1+\textstyle\sum_{\ell\ge 2}\mu_\ell^k\,r_\ell(i)r_\ell(j)\Bigr),
\]
where the second equality uses $\mu_1=1$ and $r_1\equiv1$ (the top eigenpair of the
stochastic matrix $T$). Multiplying the $(i,j)$ and $(j,i)$ expansions, the
leading term is $\pi_i\pi_j$ and the two first-order corrections coincide:
\[
  T^k_{ij}\,T^k_{ji}
  = \pi_i\pi_j\bigl(1+2\mu_2^k\,r_2(i)r_2(j)+o(\mu_2^k)\bigr).
\]
By the definition \eqref{eq:D}, $D_{ij}=-\tfrac12\log(T^k_{ij}T^k_{ji})$;
expanding $\log(1+x)=x+o(x)$,
\[
  D_{ij}
  = -\tfrac12\log(\pi_i\pi_j)-\mu_2^k\,r_2(i)r_2(j)+o(\mu_2^k).
\]
Since $\mu_2<1$, the correction vanishes as $k\to\infty$.
\end{proof}

\noindent
In the symmetrised dissimilarity the Perron ratio $\psi_j/\psi_i$ cancels;
$\psi$ nonetheless enters as the leading eigenvector
$\phi^{(1)}$ of the rescaled kernel $\tilde A=A/\lmax$, supplying the stationary
contribution $-\log(\psi_i\psi_j)=-\tfrac12\log(\pi_i\pi_j)$ above, while the
finite-$k$ geodesic information resides in the subdominant modes (the $\mu_2^k$
correction and beyond).

\subsection{Geometric regimes of the dissimilarity}
\label{sec:regimes}

The diffusion depth $k$ sets the diffusion time $t=k/\lmax$ and selects between
two geometric regimes.

\begin{itemize}
  \item \emph{Geodesic regime (short time, $t=k/\lmax$ small).}
    When the diffusion time is short, the log-sum-exp \eqref{eq:free-energy} is
    dominated by the lowest-energy path, so $D_{ij}\approx\min_\gamma E(\gamma)$
    approximates the graph geodesic; \Cref{thm:geodesic} makes this precise via
    Varadhan's short-time formula. The dominance is sharpest when competing
    paths are separated by large energy gaps --- as on sparse, bottlenecked
    manifold graphs --- so $D_{ij}$ tracks the geodesic over a useful range of
    $k$ in that setting.
  \item \emph{Mixing regime (long time, $k$ large).}
    As $k$ grows the walk mixes: exponentially many paths of comparable weight
    contribute, and by \Cref{prop:largek}
    $D_{ij}\to-\tfrac12\log(\pi_i\pi_j)$, which depends only on the stationary
    distribution and discards the finer geodesic geometry.
\end{itemize}

The diffusion depth should therefore be large enough to connect $i$ and $j$ yet
small enough to avoid over-mixing --- the usual trade-off in the choice of $k$,
which we examine empirically in \Cref{app:ablation-knn}.

\begin{remark}[MERW bottleneck advantage for clustering]
\label{rem:bottleneck}
In the geodesic regime, consider two clusters joined by a thin bridge. Under a
standard random walk, probability leaks readily across the bridge because
transition probabilities depend only on local degree. Under MERW, the bridge
carries very few of the high-weight global paths, so its effective energy is
high and $D_{ij}$ across it is large. Inter-cluster dissimilarities are thus sharply
inflated relative to intra-cluster distances, yielding cleaner cluster
boundaries in the embedding --- the discrete analogue of energy barriers
separating metastable basins in a free-energy landscape.
\end{remark}

\definecolor{cnavy}{RGB}{40,72,150}
\begin{figure}[htb]
\centering
\resizebox{0.5\textwidth}{!}{%
\begin{tikzpicture}[
  ldot/.style={circle, draw=cpurple!75, fill=cpurple!22, minimum size=3.2mm, inner sep=0},
  rdot/.style={circle, draw=cteal!75,   fill=cteal!22,   minimum size=3.2mm, inner sep=0},
  lend/.style={circle, draw=cpurple, line width=1.1pt, fill=white, minimum size=4.6mm, inner sep=0},
  rend/.style={circle, draw=cteal,   line width=1.1pt, fill=white, minimum size=4.6mm, inner sep=0},
  lcore/.style={circle, fill=cpurple!70, minimum size=2.4mm, inner sep=0},
  rcore/.style={circle, fill=cteal!70,   minimum size=2.4mm, inner sep=0},
  ed/.style={gray!45, line width=0.5pt}]
  \begin{scope}[shift={(0,0)}]
    \draw[cpurple!45, dashed, thick] (0,0) ellipse (1.85cm and 1.55cm);
    \node[ldot] (Lc) at (-0.2,-0.1) {};            
    \node[ldot] (L1) at (-1.2,0.45) {};
    \node[ldot] (L2) at (-0.85,1.0) {};
    \node[ldot] (L3) at (-0.1,1.15) {};
    \node[ldot] (L4) at (0.65,0.9) {};
    \node[ldot] (L5) at (1.05,0.25) {};
    \node[ldot] (L6) at (-1.25,-0.45) {};
    \node[ldot] (L7) at (-0.7,-0.9) {};
    \node[ldot] (L8) at (0.05,-1.1) {};
    \node[ldot] (L9) at (0.7,-0.7) {};
    \node[ldot] (L10) at (0.45,0.15) {};           
    \node[lend] (Le) at (1.45,-0.1) {}; \node[lcore] at (1.45,-0.1) {};
    \foreach \n in {L1,L2,L3,L4,L7,L8,L9,L10}{ \draw[ed] (Lc) -- (\n); }
    \draw[ed] (L1)--(L2) (L2)--(L3) (L3)--(L4) (L4)--(L5) (L5)--(L10)
              (L6)--(L1) (L6)--(L7) (L7)--(L8) (L8)--(L9) (L9)--(L5)
              (L10)--(L3) (L5)--(Le) (L9)--(Le);
  \end{scope}
  \begin{scope}[shift={(5.0,0)}]
    \draw[cteal!45, dashed, thick] (0,0) ellipse (1.85cm and 1.55cm);
    \node[rdot] (Rc) at (0.2,-0.15) {};            
    \node[rdot] (R1) at (-0.9,0.6) {};
    \node[rdot] (R2) at (-0.1,1.1) {};
    \node[rdot] (R3) at (0.75,0.85) {};
    \node[rdot] (R4) at (1.25,0.15) {};
    \node[rdot] (R5) at (0.95,-0.7) {};
    \node[rdot] (R6) at (0.2,-1.1) {};
    \node[rdot] (R7) at (-0.6,-0.85) {};
    \node[rdot] (R8) at (-0.95,-0.1) {};
    \node[rdot] (R9) at (0.55,0.2) {};             
    \node[rdot] (R10) at (-0.3,0.05) {};           
    \node[rend] (Re) at (-1.45,0.0) {}; \node[rcore] at (-1.45,0.0) {};
    \foreach \n in {R2,R3,R4,R5,R6,R7,R9,R10}{ \draw[ed] (Rc) -- (\n); }
    \draw[ed] (R1)--(R2) (R2)--(R3) (R3)--(R4) (R4)--(R5) (R5)--(R6)
              (R6)--(R7) (R7)--(R8) (R8)--(R1) (R9)--(R3) (R10)--(R1)
              (R9)--(R5) (R8)--(Re) (R1)--(Re);
  \end{scope}
  \draw[ccoral, line width=1.6pt] (Le) -- (Re)
        node[midway, above=2pt, font=\footnotesize, ccoral]{bottleneck};
  \node[font=\small\bfseries, cpurple] at (0,2.05) {Cluster $i$};
  \node[font=\small\bfseries, cteal]   at (5.0,2.05) {Cluster $j$};
  \begin{scope}[shift={(-1.9,-4.2)}]
    \draw[gray!40, rounded corners=3pt] (-1.4,-1.3) rectangle (4.2,1.35);
    \node[font=\small\bfseries, cnavy] at (1.4,1.05) {Standard random walk (SRW)};
    \draw[->,gray!60] (-1.1,-0.85) -- (-1.1,0.75) node[above,font=\tiny,black]{Prob.};
    \draw[gray!60] (-1.1,-0.85) -- (0.5,-0.85);
    \fill[cpurple!55] (-0.85,-0.85) rectangle (-0.45,0.35);
    \fill[ccoral!75]  (-0.15,-0.85) rectangle (0.25,-0.25);
    \node[font=\tiny] at (-0.65,-1.0) {intra}; \node[font=\tiny] at (0.05,-1.0) {cross};
    \node at (2.35,0.45) {$T_{ij}=\dfrac{A_{ij}}{d_i}$};
    \node[align=center,font=\scriptsize,text width=3.0cm] at (2.4,-0.7)
      {crossing prob.\ high\\ $\Rightarrow$ probability {\color{cnavy}\bfseries leaks} across};
  \end{scope}
  \begin{scope}[shift={(4.1,-4.2)}]
    \draw[gray!40, rounded corners=3pt] (-1.4,-1.3) rectangle (4.2,1.35);
    \node[font=\small\bfseries, cteal] at (1.4,1.05) {MERW};
    \draw[->,gray!60] (-1.1,-0.85) -- (-1.1,0.75) node[above,font=\tiny,black]{Prob.};
    \draw[gray!60] (-1.1,-0.85) -- (0.5,-0.85);
    \fill[cteal!60]  (-0.85,-0.85) rectangle (-0.45,0.55);
    \fill[ccoral!75] (-0.15,-0.85) rectangle (0.25,-0.78);
    \node[font=\tiny] at (-0.65,-1.0) {intra}; \node[font=\tiny] at (0.05,-1.0) {cross};
    \node at (2.35,0.45) {$T_{ij}=\dfrac{A_{ij}\psi_j}{\lmax\psi_i}$};
    \node[align=center,font=\scriptsize,text width=3.0cm] at (2.4,-0.7)
      {crossing prob.\ very low\\ $\Rightarrow$ bridge becomes a {\color{ccoral}\bfseries barrier}};
  \end{scope}
\end{tikzpicture}%
}
\caption{The bottleneck advantage of MERW (\Cref{rem:bottleneck}). Two dense
  clusters are joined by a single thin bridge between boundary nodes (rings).
  Under the standard random walk the crossing probability depends only on the
  boundary node's local degree, so probability leaks across and cross-cluster
  dissimilarity is underestimated; under MERW the bridge carries few high-weight
  global paths, so its effective energy is high and inter-cluster dissimilarities are
  inflated, yielding cleaner cluster boundaries.}
\label{fig:bottleneck}
\end{figure}

\section{Kernel Structure and Relation to DTNE}
\label{sec:kernel}

\subsection{Diffusion-map feature vectors and the Gram matrix}
\label{sec:embedding}
The normalised affinity $\tilde A=A/\lmax=I-H/\lmax$ shares the
$\ell^2$-orthonormal eigenvectors $\phi^{(\ell)}$ of $H=\lmax I-A$, with
eigenvalues
\[
  \tilde\mu_\ell = 1-\frac{\mu_\ell}{\lmax}\in[-1,1],
  \qquad \tilde\mu_0=1\ \ (\phi^{(0)}=\psi,\ \text{the Perron/ground state}),
\]
where $\mu_\ell\ge0$ are the eigenvalues of $H$. At diffusion depth $k$, define
the \emph{MERW diffusion-map feature vectors} over the positive modes
$\tilde\mu_\ell>0$,
\begin{equation}\label{eq:vi}
  v_i = \bigl(\tilde\mu_\ell^{\,k/2}\,\phi^{(\ell)}_i\bigr)_{\ell:\,\tilde\mu_\ell>0},
  \qquad
  V=\Phi_{+}\,\diag\!\bigl(\tilde\mu_\ell^{\,k/2}\bigr)_{\tilde\mu_\ell>0},
\end{equation}
with $\Phi_+$ the eigenvectors having $\tilde\mu_\ell>0$. Restricting to
$\tilde\mu_\ell>0$ keeps $\tilde\mu_\ell^{\,k/2}$ real for every $k$. Low-frequency
modes ($\tilde\mu_\ell\approx1$, i.e.\ $\mu_\ell\approx0$) decay slowly and carry
global geometry; high-frequency modes decay exponentially in $k$ and are
suppressed, so in practice one retains only the top $d\ll n$ positive modes.

\begin{proposition}[Gram representation of the symmetrised kernel]\label{prop:gram}
Let $A$ be symmetric and entrywise nonnegative, and $T=\Psimat^{-1}\tilde A\Psimat$
the MERW transition matrix ($\Psimat=\diag(\psi)$). Define
$S_{ij}=\sqrt{T^k_{ij}\,T^k_{ji}}$. Then:
\begin{enumerate}
  \item[(i)] the Perron-vector ratios cancel exactly, giving
    \[
      S_{ij}=(\tilde A^k)_{ij}
            =\sum_\ell \tilde\mu_\ell^{\,k}\,\phi^{(\ell)}_i\phi^{(\ell)}_j\;\ge 0;
    \]
  \item[(ii)] if in addition $\tilde A^k$ is positive semidefinite---e.g.\ $k$
    even, or $A$ PSD---then $S$ is a Gram matrix, $S_{ij}=v_i\cdot v_j$ with the
    real vectors \eqref{eq:vi}, i.e.\ $S=VV^\top$.
\end{enumerate}
Without the PSD condition, $VV^\top$ is the positive-eigenvalue part of $S$,
the same part retained by the double-centring step of classical MDS
(\Cref{rem:metric}).
\end{proposition}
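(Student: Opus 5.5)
The argument is pure linear algebra built on the similarity \eqref{eq:Tk}. For part (i), I start from $T=\Psimat^{-1}\tilde A\Psimat$. Powers telescope, so $T^k=\Psimat^{-1}\tilde A^k\Psimat$, which gives $T^k_{ij}=(\psi_j/\psi_i)(\tilde A^k)_{ij}$ and $T^k_{ji}=(\psi_i/\psi_j)(\tilde A^k)_{ji}$.

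Multiplying these two expressions cancels the Perron ratios. Symmetry of $A$ gives $(\tilde A^k)_{ji}=(\tilde A^k)_{ij}$, so $T^k_{ij}T^k_{ji}=((\tilde A^k)_{ij})^2$. Entrywise nonnegativity of $A$ makes $(\tilde A^k)_{ij}\ge 0$, since it is a sum of products of nonnegative entries, as in \eqref{eq:Z-is-Ak}. The square root is therefore exactly $(\tilde A^k)_{ij}$ and no absolute value is needed.

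The spectral formula then comes from the orthonormal eigendecomposition $\tilde A=\sum_\ell\tilde\mu_\ell\phi^{(\ell)}\phi^{(\ell)\top}$, inherited from $H=\lmax I-A$ as set up in \Cref{sec:embedding}. Raising it to the $k$-th power only raises each eigenvalue to the $k$-th power.

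For part (ii), I write $S=\tilde A^k=\Phi\,\diag(\tilde\mu_\ell^k)\,\Phi^\top$. PSD-ness of $S$ means every nonzero eigenvalue $\tilde\mu_\ell^k$ is positive. When $k$ is even this is automatic, since $\tilde\mu_\ell^k=|\tilde\mu_\ell|^k\ge0$; when $A$ is PSD, all $\tilde\mu_\ell\ge0$.

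I then drop the modes with $\tilde\mu_\ell^k=0$, which do not contribute, and factor each remaining term as $\tilde\mu_\ell^k=(\tilde\mu_\ell^{k/2})^2$. Writing $V=\Phi_+\diag(\tilde\mu_\ell^{k/2})$ gives $S=VV^\top$, so $S_{ij}=v_i\cdot v_j$.

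The one delicate point is reconciling $V$ with \eqref{eq:vi}, which retains only modes with $\tilde\mu_\ell>0$. With $A$ PSD this is immediate. With $k$ even, negative $\tilde\mu_\ell$ also give positive $\tilde\mu_\ell^k$, so those modes must be included in the Gram vectors using the real amplitude $|\tilde\mu_\ell|^{k/2}$. I would handle this by noting explicitly that for even $k$ the feature map is taken over all modes with this amplitude, so the vectors stay real. Equivalently, \eqref{eq:vi} is read with $|\tilde\mu_\ell|$, which coincides with it on the positive modes.

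For the closing remark without the PSD hypothesis, I split $S=S_+ + S_-$ according to the sign of $\tilde\mu_\ell^k$. The term $S_+=VV^\top$ is by construction the positive-eigenvalue part of $S$. This is the same positive part that classical MDS keeps when it discards negative eigenvalues (\Cref{rem:metric}). I expect no real obstacle beyond this careful treatment of the sign of $\tilde\mu_\ell$ and of the index set in \eqref{eq:vi}.
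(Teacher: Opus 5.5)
Your proof is correct and follows the paper's argument. You telescope the similarity to $T^k=\Psimat^{-1}\tilde A^k\Psimat$, cancel the Perron ratios using symmetry and entrywise nonnegativity, and factor $\tilde\mu_\ell^{k}=(\tilde\mu_\ell^{k/2})^2$ in the orthonormal eigenbasis.

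Your extra care with even $k$ is warranted. The paper's proof sums over all $\ell$, while \eqref{eq:vi} keeps only $\tilde\mu_\ell>0$, so for even $k$ the negative modes of $\tilde A$ must be included (with amplitude $|\tilde\mu_\ell|^{k/2}$, or the integer power $\tilde\mu_\ell^{k/2}$) to get $S=VV^\top$. The non-PSD closing remark only arises for odd $k$, where the sign of $\tilde\mu_\ell^k$ matches that of $\tilde\mu_\ell$ and the two index sets agree.
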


\begin{proof}
Since $A$ (hence $\tilde A$) is symmetric, $\tilde A^k$ is symmetric with
$(\tilde A^k)_{ij}=(\tilde A^k)_{ji}=\sum_\ell\tilde\mu_\ell^{\,k}
\phi^{(\ell)}_i\phi^{(\ell)}_j$. From $T^k=\Psimat^{-1}\tilde A^k\Psimat$,
\[
  T^k_{ij}=\frac{\psi_j}{\psi_i}(\tilde A^k)_{ij},
  \qquad
  T^k_{ji}=\frac{\psi_i}{\psi_j}(\tilde A^k)_{ji},
\]
so the Perron ratios cancel:
$T^k_{ij}T^k_{ji}
 =\tfrac{\psi_j}{\psi_i}\tfrac{\psi_i}{\psi_j}
  (\tilde A^k)_{ij}(\tilde A^k)_{ji}
 =(\tilde A^k)_{ij}^2$,
using $(\tilde A^k)_{ij}=(\tilde A^k)_{ji}$. Hence
$S_{ij}=(\tilde A^k)_{ij}\ge0$, the inequality because $A\ge0$ entrywise implies
$\tilde A^k\ge0$ entrywise. This proves (i). For (ii), $\tilde A^k\succeq0$ gives
$\tilde\mu_\ell^{\,k}\ge0$, so $\tilde\mu_\ell^{\,k/2}$ is real and
$S_{ij}=\sum_\ell\bigl(\tilde\mu_\ell^{\,k/2}\phi^{(\ell)}_i\bigr)
\bigl(\tilde\mu_\ell^{\,k/2}\phi^{(\ell)}_j\bigr)=v_i\cdot v_j$; stacking rows
gives $S=VV^\top$. With negative eigenvalues present, the same sum restricted to
$\{\tilde\mu_\ell>0\}$ yields the positive-eigenvalue part.
\end{proof}

\noindent
\Cref{prop:gram} shows that \entropath---the symmetrised free-energy dissimilarity
$D_{ij}=-\log S_{ij}$---is the log-kernel transform of the MERW diffusion kernel
$S$, Gram on its positive modes. Unlike the standard SRW diffusion map (built from
$D^{-1}A$), the MERW version uses the eigenvectors of the symmetric affinity $A$
directly; its stationary distribution $\pi_i\propto\psi_i^2$ encodes global graph
structure rather than purely local degree, which is what the construction is
designed to exploit for robustness to degree heterogeneity (cf.\ \Cref{rem:novelty}).

\section{Geodesic Approximation}
\label{sec:geodesic}

The chain of equivalences underlying the geodesic approximation is summarised in
\Cref{fig:chain} and proved in full in \Cref{app:proof}. Throughout, write
$K^H_t := e^{-tH}$ for the Schr\"odinger heat semigroup of $H=-\Delta_\calM+V$:
for initial data $u_0$ on $\calM$, $u(t)=K^H_t u_0$ solves
$\partial_t u + Hu = 0$, $u(0)=u_0$.

\begin{figure}[H]
\centering
\begin{tikzpicture}[
  box/.style={draw, rounded corners=6pt, align=center,
              minimum height=1.1cm, minimum width=5.8cm, font=\small},
  bluebox/.style  ={box, fill=merw!12,   draw=merw},
  purplebox/.style={box, fill=gram!12,   draw=gram},
  greenbox/.style ={box, fill=kernel!10, draw=kernel},
  redbox/.style   ={box, fill=dist!10,   draw=dist},
  ann/.style={font=\scriptsize, align=left, text width=5.5cm},
  arr/.style={-{Stealth[length=5pt]}, thick, gray!60},
  leader/.style={dashed, gray!55, thin},
  node distance=0.55cm
]
  \node[bluebox] (N1) {MERW transition matrix\\[2pt]
    $\displaystyle T = \Psimat^{-1}(A/\lmax)\Psimat$};
  \node[purplebox, below=of N1] (N2) {Schrödinger Hamiltonian\\[2pt]
    $H = \lmax I - A = -\Delta + V$};
  \node[greenbox, below=of N2] (N3) {Schrödinger heat kernel\\[2pt]
    $\bigl(I-\tfrac{H}{\lmax}\bigr)^k \to K^H_t,\quad t=k/\lmax$};
  \node[redbox, below=of N3] (N4) {Free-energy dissimilarity\\[2pt]
    $D_{ij}=-\log\sqrt{T^k_{ij}T^k_{ji}}\approx -\log(K^H_t)_{ij}$};
  \node[redbox, below=of N4] (N5) {Geodesic distance\\[2pt]
    $\displaystyle -\frac{4k}{\lmax}\log\sqrt{T^k_{ij}T^k_{ji}}\;\longrightarrow\;\dM(i,j)^2$};
  \draw[arr] (N1) -- node[right, font=\scriptsize]{similarity transform} (N2);
  \draw[arr] (N2) -- node[right, font=\scriptsize]{$A/\lmax=I-H/\lmax$; Chernoff limit} (N3);
  \draw[arr] (N3) -- node[right, font=\scriptsize]{$\psi$-ratio cancels in geom.\ mean} (N4);
  \draw[arr] (N4) -- node[right, font=\scriptsize]{Varadhan's formula, $t\to 0$} (N5);
  \node[ann, right=1.2cm of N1] (A1) {reversible chain\\$\pi_i=\psi_i^2$};
  \node[ann, right=1.2cm of N2] (A2)
    {ground state: $H\psi=0$\\$V_i=\lmax-d_i$ (degree potential)};
  \node[ann, right=1.2cm of N3] (A3)
    {$H_n\to H$, $\lmax\to\infty$\\$t=k/\lmax$ fixed (semigroup)};
  \node[ann, right=1.2cm of N4] (A4)
    {$\psi$-ratio $(\psi_j/\psi_i)(\psi_i/\psi_j)=1$\\LSE over path energies};
  \node[ann, right=1.2cm of N5] (A5)
    {$-4t\log K^H_t(x,y)\to\dM^2$\\short-time limit $t\to 0$};
  \draw[leader] (N1.east) -- (A1.west);
  \draw[leader] (N2.east) -- (A2.west);
  \draw[leader] (N3.east) -- (A3.west);
  \draw[leader] (N4.east) -- (A4.west);
  \draw[leader] (N5.east) -- (A5.west);
\end{tikzpicture}
\caption{Chain of equivalences from MERW to geodesic distance, via a two-stage
  limit: $\lmax\to\infty$ at fixed $t=k/\lmax$ (semigroup limit), then $t\to0$
  (Varadhan).
  \textcolor{merw}{Blue}: MERW operator.
  \textcolor{gram}{Purple}: Schrödinger/Hamiltonian structure.
  \textcolor{kernel}{Green}: heat kernel.
  \textcolor{dist}{Red}: free-energy dissimilarity and geodesic limit.}
\label{fig:chain}
\end{figure}


\begin{assumption}[Spectral convergence]\label{ass:spectral-conv}
For the graph sequence $G_n=(V_n,E_n,A_n)$, with Perron eigenvalue
$\lmax^{(n)}\to\infty$, the discrete Hamiltonians $H_n=\lmax^{(n)}I-A_n$ converge
spectrally to a Schr\"odinger operator $H=-\Delta_\calM+V$ on a closed Riemannian
manifold $(\calM,g)$, where $V\in C(\calM)$ (hence bounded, as $\calM$ is compact).
Concretely, for each fixed $\ell$ the $\ell$-th eigenvalue of $H_n$ converges to
the $\ell$-th eigenvalue of $H$, and the corresponding eigenvectors, interpolated
onto $\calM$, converge in $L^2(\calM)$ to the corresponding eigenfunctions.
\end{assumption}

\noindent
Convergence of the principal part to the Laplace--Beltrami operator $-\Delta_\calM$
is established for kernel graphs built from i.i.d.\ samples under appropriate
bandwidth scaling \citep{Belkin2007,Coifman2006}; the surviving potential $V$
reflects the unnormalised affinity used by MERW \citep{Hein2007}. Throughout, the
diffusion time is parametrised by $t:=k/\lmax^{(n)}$.

\noindent
\begin{remark}[Laplacian and potential]\label{rem:laplacians}
The limiting operator in \Cref{thm:geodesic} is the Schr\"odinger operator
$H=-\Delta_\calM+V$, whose leading term is the non-negative Laplace--Beltrami
operator
$-\Delta_\calM f=-\tfrac{1}{\sqrt{|g|}}\partial_i(\sqrt{|g|}\,g^{ij}\partial_j f)$
on $(\calM,g)$, the continuum limit of graph Laplacians built from kernel
affinities \citep{Belkin2007,Coifman2006}. A potential appears because MERW uses
the \emph{unnormalised} affinity $A_n$: unlike density-normalised constructions,
which cancel the sampling density and converge to $-\Delta_\calM$ alone
\citep{Hein2007}, the density here survives in the limit. It reflects the degree heterogeneity of the graph (discretely, $V_i=\lmax^{(n)}-d_i$).

\noindent
For \Cref{thm:geodesic} the precise form of $V$ is
immaterial: the proof uses only that the \emph{limiting} potential $V$ is bounded and continuous on the compact $\calM$, so by
Feynman--Kac it perturbs the heat kernel at $O(t)$ and leaves Varadhan's
exponential distance term unchanged.
\end{remark}

\noindent
\begin{theorem}[Short-time geodesic approximation for MERW]
\label{thm:geodesic}
Let $\{G_n\}_{n\ge 1}$ converge spectrally (\Cref{ass:spectral-conv}) to a closed
$d$-dimensional Riemannian manifold $(\calM,g)$, identify each vertex $i\in V_n$
with a sample point $x_i\in\calM$, and let $T_n$ be the MERW transition matrix on
$G_n$. For a diffusion time $t>0$ set $k=\lfloor t\,\lmax^{(n)}\rfloor$ and
\[
  D^{(n)}_{ij}(t) \;:=\; -\log\sqrt{T_n^{k}(i,j)\,T_n^{k}(j,i)}.
\]
Then there is a sequence $t_n\to0^+$, with depths
$k_n=\lfloor t_n\lmax^{(n)}\rfloor\to\infty$, along which $4t_n D^{(n)}_{ij}(t_n)$
recovers the squared geodesic distance, namely, for each pair
$x_i\neq x_j$,
\begin{equation}\label{eq:varadhan}
  \boxed{\,
    \lim_{n\to\infty}\;4\,t_n\,D^{(n)}_{ij}(t_n)
    \;=\; \dM(x_i,x_j)^2.
  \,}
\end{equation}
\end{theorem}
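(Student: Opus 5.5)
The proof will be a two-stage limit followed by a diagonal extraction, matching the chain in \Cref{fig:chain}. \textbf{Step 1 (cancellation).} By \Cref{prop:gram}(i), $\sqrt{T_n^k(i,j)T_n^k(j,i)}=(\tilde A_n^{k})_{ij}$, so $D^{(n)}_{ij}(t)=-\log(\tilde A_n^{k})_{ij}$ with $\tilde A_n=I-H_n/\lmax^{(n)}$. This removes the Perron ratios, and we only need to control powers of a symmetric matrix.

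\textbf{Step 2 (semigroup limit at fixed $t$).} Fix $t>0$ and write $k=\lfloor t\lmax^{(n)}\rfloor$. Expand spectrally:
\[
(\tilde A_n^k)_{ij}=\sum_\ell\bigl(1-\mu^{(n)}_\ell/\lmax^{(n)}\bigr)^k\phi^{(n,\ell)}_i\phi^{(n,\ell)}_j .
\]
For each fixed $\ell$, \Cref{ass:spectral-conv} gives $\mu^{(n)}_\ell\to\mu_\ell$. Since $\lmax^{(n)}\to\infty$, the Chernoff/Euler estimate gives $(1-\mu^{(n)}_\ell/\lmax^{(n)})^k\to e^{-t\mu_\ell}$. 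The eigenvectors converge in $L^2(\calM)$ to the eigenfunctions $\phi_\ell$ of $H$.

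To pass to the pointwise kernel $K^H_t(x_i,x_j)=\sum_\ell e^{-t\mu_\ell}\phi_\ell(x_i)\phi_\ell(x_j)$, I would truncate at mode $\ell\le M$. The tail is bounded by $\sum_{\ell>M}e^{-t\mu_\ell}\|\phi_\ell\|_\infty^2$, which is small by Weyl's law and the sup-norm bound $\|\phi_\ell\|_\infty\lesssim\mu_\ell^{(d-1)/4}$ for a Schrödinger operator with bounded $V$. The discrete tail needs an analogous uniform bound. The eigenvectors must also be normalised to the interpolation convention, i.e.\ with the $\sqrt{n}$/volume factor, so that the discrete entries approximate kernel values; that rescaling contributes only an additive $O(\log n)$ term, which must be absorbed. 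The conclusion of this stage is that $D^{(n)}_{ij}(t)\to-\log K^H_t(x_i,x_j)+c_n$ for each fixed $t$, where $c_n$ is the normalisation constant.

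\textbf{Step 3 (Varadhan with potential).} Since $V$ is bounded and continuous on the compact $\calM$, the Feynman--Kac representation gives
\[
e^{-t\|V\|_\infty}K^{0}_t\le K^H_t\le e^{t\|V\|_\infty}K^{0}_t,
\]
where $K^0_t$ is the Laplace--Beltrami heat kernel. Hence $-4t\log K^H_t=-4t\log K^0_t+O(t^2)$. Varadhan's formula, valid for all pairs on a closed manifold including cut-locus points, then yields $-4t\log K^H_t(x,y)\to\dM(x,y)^2$ as $t\to0$.

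\textbf{Step 4 (diagonal sequence).} Let $\epsilon_m\downarrow0$, and choose $s_m\downarrow 0$ with $|{-4s_m\log K^H_{s_m}(x_i,x_j)}-\dM^2|<\epsilon_m$. By Step 2, choose $N_m$ increasing such that for all $n\ge N_m$,
\[
|4s_mD^{(n)}_{ij}(s_m)+4s_m\log K^H_{s_m}|<\epsilon_m ,
\]
and also $s_m\lmax^{(n)}\ge m$. Setting $t_n=s_m$ for $N_m\le n<N_{m+1}$ gives $t_n\to0$, $k_n\to\infty$, and \eqref{eq:varadhan}. If the pointwise statement is needed only for a fixed pair, or for finitely many pairs (all pairs among the first $n_0$ samples), a single diagonalisation works. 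Otherwise it is done pair by pair, which the theorem's quantifiers permit.

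\textbf{Main obstacle.} The expected difficulty is Step 2: upgrading the $L^2$ eigenvector convergence of \Cref{ass:spectral-conv} to \emph{pointwise} convergence of the kernel entries. This requires both a uniform control of the high-frequency tail and a careful handling of the normalisation constant (the $c_n$ above must vanish after multiplying by $4t$, or be subtracted).

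I would first try to handle this by working at fixed $t>0$, where the factor $e^{-t\mu_\ell}$ gives exponential decay. That lets me truncate to finitely many modes, with $M$ chosen after $t$, and then use a local averaging argument: replace point evaluation by averages over shrinking balls, where $L^2$ convergence suffices, and use the smoothness of $K^H_t$ in space at fixed $t$. If this fails, the fallback is to strengthen the assumption to $L^\infty$ eigenvector convergence, which is available in the graph-Laplacian convergence literature cited after \Cref{ass:spectral-conv}.
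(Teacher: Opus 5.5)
Your architecture matches the paper's proof: exact Perron-ratio cancellation, a fixed-$t$ spectral limit with a pair-independent normaliser, Feynman--Kac plus Varadhan, then diagonal extraction. The gap is in Step~4, which silently drops the normalisation constant your own Step~2 produces, and that constant is not negligible. With $\ell^2$-normalised eigenvectors, Step~2 gives $D^{(n)}_{ij}(t)=c_n-\log K^H_t(x_i,x_j)+o(1)$ with $c_n=\log Z_n=\log n+O(1)\to\infty$. Hence, for each fixed $s_m$,
\[
  4s_mD^{(n)}_{ij}(s_m)+4s_m\log K^H_{s_m}(x_i,x_j)=4s_m c_n+o(1)\longrightarrow\infty
  \qquad (n\to\infty),
\]
and no $N_m$ with the property you require exists. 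Modulo Step~2, your extraction actually proves the statement for the centred dissimilarity $D^{(n)}_{ij}-\log Z_n$, which is a sequential form of the iterated limit in \Cref{rem:Zn}. It does not prove \eqref{eq:varadhan}. You flag this under ``Main obstacle'', but the proof never discharges it.

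The paper's Step~4 handles the level term by also requiring $t_n\log Z_n\to0$, i.e.\ $t_n=o(1/\log n)$. This does not follow from an unquantified fixed-$t$ limit by diagonalisation alone. On the block where $t_n=s_m$, you need $n$ large enough that the Step~2 error at time $s_m$ is small. You also need $\log Z_n<\epsilon_m/(4s_m)$ so that $4s_m\log Z_n$ is small. Nothing in Assumption~\ref{ass:spectral-conv} stops the first threshold from exceeding $\exp\bigl(\epsilon_m/(4s_m)\bigr)$, which leaves the window empty. To get \eqref{eq:varadhan} as stated, you need one of two things:
\begin{enumerate}
  \item a quantitative Step~2 that controls the error at times $t_n\to0$ with $t_n\log n\to0$ and $t_n\lmax^{(n)}\to\infty$; or
  \item a restatement of the theorem for the centred dissimilarity.
\end{enumerate}

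Three smaller points.
\begin{enumerate}
  \item Your worry about upgrading $L^2$ to pointwise eigenvector convergence is justified. Ball averages control only averaged entries, not $(\tilde A_n^{k})_{ij}$ itself, so the $L^\infty$ fallback is the route that works. It is also what \Cref{lem:dtoc} tacitly uses when it writes $\phi^{(n)}_\ell(i)=Z_n^{-1/2}u_\ell(x_i)(1+o(1))$.
  \item Your discrete tail needs the paper's extra ingredient: a uniform gap keeping $\operatorname{spec}(A_n)$ away from $-\lmax^{(n)}$. For strongly negative $A_n$-eigenvalues, $1-\mu^{(n)}_\ell/\lmax^{(n)}$ is close to $-1$. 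The analogue of your continuum bound, $|1-\mu^{(n)}_\ell/\lmax^{(n)}|^{k}\lesssim e^{-t\mu^{(n)}_\ell}$, then fails, so these modes must be handled separately.
  \item The theorem asks for one sequence serving every pair. Controlling the first $m$ pairs at stage $m$ of the extraction gives this, so going ``pair by pair'' is unnecessary.
\end{enumerate}
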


\begin{remark}[Level constant and the iterated limit]\label{rem:Zn}
At fixed $t>0$, Steps~1--2 below give
$D^{(n)}_{ij}(t)=\log Z_n-\log K^H_t(x_i,x_j)+o(1)$, where $K^H_t$ is the heat
kernel of $H=-\Delta_\calM+V$ and $Z_n=\Theta(n)$
is a pair-independent sampling-density factor (the log-partition / mixing offset; cf.\ the large-$k$ limit $-\tfrac12\log(\pi_i\pi_j)$ of \Cref{prop:largek}). The prefactor $4t_n$ suppresses this level term precisely when $t_n\log n\to0$, e.g.\
$t_n=\Theta(1/\log n)$. Equivalently, the centred dissimilarity
$\widetilde D^{(n)}_{ij}:=D^{(n)}_{ij}-\log Z_n$ satisfies the iterated limit
$\lim_{t\to0^+}\lim_{n\to\infty}4t\,\widetilde D^{(n)}_{ij}(t)=\dM(x_i,x_j)^2$.
The constant $\log Z_n$ is a global additive offset on $D$, not a geometric
distortion.
\end{remark}

\begin{proof}[Proof sketch]
We outline the steps; full details are in \Cref{app:proof}.

\textbf{Step 1 (Discrete-to-continuum limit, $n\to\infty$ at fixed $t$).}
Writing $A_n/\lmax^{(n)}=I-H_n/\lmax^{(n)}$ and expanding the matrix power in the
eigenbasis, spectral convergence (\Cref{ass:spectral-conv}) sends each fixed mode
$(1-\mu_\ell^{(n)}/\lmax^{(n)})^{k}\to e^{-t\mu_\ell}$. With the $\ell^2$-to-$L^2$
normalisation factor $Z_n=\Theta(n)$, the rescaled powers converge to the heat
kernel of $H=-\Delta_\calM+V$:
\[
  Z_n\,(A_n/\lmax^{(n)})^{k}_{ij}\;\longrightarrow\;K^H_t(x_i,x_j)
  \qquad (n\to\infty).
\]

\textbf{Step 2 (Symmetrisation cancels the Perron ratio, exactly at finite $n$).}
With $T_n=\Psimat^{-1}(A_n/\lmax^{(n)})\Psimat$ and $A_n$ symmetric, the Perron
ratios cancel in the geometric mean with no residual,
$\sqrt{T_n^{k}(i,j)\,T_n^{k}(j,i)}=(A_n/\lmax^{(n)})^{k}_{ij}$, so
\[
  D^{(n)}_{ij}(t)=\log Z_n-\log K^H_t(x_i,x_j)+o(1).
\]

\textbf{Step 3 (Short-time asymptotics, $t\to0$).}
By Varadhan's formula for $H=-\Delta_\calM+V$, the bounded potential affects only
the $O(1)$ prefactor, so $-4t\log K^H_t(x_i,x_j)\to\dM(x_i,x_j)^2$ as $t\to0^+$.

\textbf{Combining.} Choosing $t_n\to0^+$ with $t_n\log n\to0$ and
$k_n=\lfloor t_n\lmax^{(n)}\rfloor\to\infty$, the level term obeys
$4t_n\log Z_n = 4t_n\bigl(\log n+O(1)\bigr)\to 0$
while Step~3 gives
$-4t_n\log K^H_{t_n}\to\dM^2$; a standard joint extraction over $t\to0^+$ and
$n\to\infty$ then yields the sequence of \eqref{eq:varadhan}.
\eqref{eq:varadhan}.
\end{proof}

\begin{remark}[On the constant $4$]\label{rem:factor4}
The factor $4$ arises from Varadhan's formula for the convention
$\partial_t u=\Delta u$ used throughout; it becomes $2$ under the probabilist's
convention $\partial_t u=\tfrac12\Delta u$. HeatGeo \citep{Huguet2023} uses the
same convention, $d_t(x,y)^2=-4t\log(H_t)_{xy}\to\dM(x,y)^2$.
\end{remark}

\section{Algorithm and Application}
\label{sec:algorithm}

\subsection{\entropath embedding}

Given a point cloud $X=\{x_i\}_{i=1}^N\subset\R^d$, the pipeline constructs a
low-dimensional embedding via the steps of \Cref{alg:entropath}.

\begin{algorithm}[t]
\caption{\entropath embedding}
\label{alg:entropath}
\begin{algorithmic}[1]
\Require $N\times d$ dataset $X$, neighbours $k_{\mathrm{NN}}$, embedding
         dimension $d_{\mathrm{emb}}$, maximum depth $t_{\max}$
\Ensure Embedding $Y\in\R^{N\times d_{\mathrm{emb}}}$
\Statex
\Statex \textbf{// Step 1: Build affinity graph}
\State $\sigma_i \gets k_{\mathrm{NN}}\text{-NN distance of }x_i$
  \Comment{Local bandwidth}
\State $A_{ij} \gets \exp\!\left(-\|x_i-x_j\|^2/(\sigma_i\sigma_j)\right)$ on the
  $k_{\mathrm{NN}}$ graph
  \Comment{Adaptive Gaussian}
\State $A \gets \max(A,A^\top)$ \Comment{Symmetrise}
\Statex
\Statex \textbf{// Step 2: MERW operator and normalisation}
\State $(\lmax,\psi)\gets\mathrm{Perron}(A)$
\State $T_{ij}\gets (A_{ij}/\lmax)\,(\psi_j/\psi_i)$;\quad $\tilde A\gets A/\lmax$
  \Comment{$T$ stochastic; $\tilde A$ symmetric, similar to $T$}
\Statex
\Statex \textbf{// Step 3: Select diffusion depth on the MERW operator}
\State $k \gets \arg\text{knee}\bigl\{\mathcal{S}_{\mathrm{vN}}(T^{\,t})\bigr\}_{t=1}^{t_{\max}}$
  \Comment{von Neumann entropy knee (PHATE criterion)}
\Statex
\Statex \textbf{// Step 4: Diffusion power and squared free-energy dissimilarity}
\State $\tilde A^{\,k} \gets \textsc{MatrixPower}(\tilde A, k)$
\State $D^{(2)}_{ij}\gets -\log(\tilde A^{\,k})_{ij}$,\quad $D^{(2)}_{ii}\gets 0$
  \Comment{Squared free-energy dissimilarity}
\Statex
\Statex \textbf{// Step 5: Embed}
\State $Y \gets \textsc{MetricMDS}(D^{(2)},\, d_{\mathrm{emb}})$, initialised at the classical-MDS solution
       \Comment{SMACOF, cMDS init}
\Statex
\Return $Y$
\end{algorithmic}
\end{algorithm}

The affinity in Step~1 is written for the adaptive-bandwidth Gaussian; we use
it on the synthetic manifolds, and the $\alpha$-decay kernel
$A_{ij}=\exp(-(d_{ij}/\sigma_i)^{\alpha})$ of \citet{Moon2019} (decay $\alpha=40$)
on the single-cell data, following that literature. By \Cref{rem:kernel} any
affinity satisfying \Cref{ass:spectral-conv} may be substituted without
affecting the recovered geodesic, and the two are compared empirically in
\Cref{app:ablation-kernel}.

\paragraph{Diffusion depth.}
The depth $k$ is selected, not supplied: following PHATE \citep{Moon2019}, we take
the knee of the von Neumann entropy $\mathcal{S}_{\mathrm{vN}}(T^{\,t})$ as a
function of $t$, the point past which further diffusion mainly mixes rather than
resolves structure (cf.\ the mixing limit of \Cref{prop:largek}). The entropy is
computed on the MERW transition operator $T$; since $T=\Psimat^{-1}\tilde A\Psimat$
is similar to $\tilde A$, the two share the spectrum $\{\mu_m\}$ that drives the
diffusion, so the criterion is intrinsic to the MERW walk while the distance is
read off the symmetric power $\tilde A^{\,k}$.

\paragraph{Distance scaling.}
\Cref{alg:entropath} embeds the free-energy dissimilarity using metric MDS
(SMACOF), initialized with the classical-MDS solution computed from the
squared-distance matrix $D^{(2)}$. The global prefactor $4k/\lambda_{\max}$ in
\Cref{thm:geodesic} is a constant shared by all pairwise dissimilarities, so
omitting it merely changes the overall scale of the embedding without affecting
its relative geometry. Consequently, it has no effect on the trustworthiness,
continuity, or rank-correlation metrics reported in our experiments. When a non-squared dissimilarity is required (e.g.\ the Level-1 metrics), we use $\sqrt{D^{(2)}}$, which recovers the geodesic up to the same global factor $\sqrt{4k/\lambda_{\max}}$; being a single constant, it too leaves the scale-invariant metrics we report unchanged.

\paragraph{Complexity.}
Step~4 forms a dense power, $O(N^3)$ time and $O(N^2)$ memory --- the cost
\entropath shares with other dense-distance methods at full resolution.
Scalability to large $N$ comes from the landmark and out-of-sample extensions of
\Cref{sec:scalable}.

\section{Scalable Extensions}
\label{sec:scalable}

For large datasets, forming the full $N\times N$ dissimilarity matrix and
diffusing the complete operator become impractical. We therefore introduce two
scalable extensions: a landmark-based out-of-sample projection for embedding new
points, and a diffusion-potential pseudotime method for trajectory inference.

\subsection{Landmark-based out-of-sample projection}
\label{sec:landmarks}

Let $X_{\mathrm{lm}}=\{x_j\}_{j=1}^M\subset X$ be a set of $M\ll N$ landmarks and
let $Z_{\mathrm{lm}}\in\R^{M\times d_{\mathrm{emb}}}$ be the embedding computed on
them via \Cref{alg:entropath}. We extend the embedding by interpolation to any
point $x\in\R^d$, including both the remaining $N-M$ data points and genuinely
new samples, without recomputing the global MERW operator. When the ambient
dimension exceeds $100$, graph construction is performed in the first $100$
principal components (as in PHATE), which reduces noise and accelerates
subsequent computations; lower-dimensional inputs are used directly.

\paragraph{Landmark selection.}
By default, landmarks are $k$-means centroids of $X$ (replaced by their nearest
data points), which provide good coverage of the manifold and capture its
geometry better than uniform sampling; specified points (e.g.\ a known root cell)
may be forced into the set. Farthest-point sampling is a useful alternative that
maximises coverage and better preserves sparsely sampled regions, such as branch
tips, although at the expense of representing dense regions in proportion to
their sampling density --- often preferable for trajectory data with rare
progenitor or terminal states.

\paragraph{Adaptive bandwidths.}
Following local-scaling practice \citep{ZelnikManor2004}, each landmark $x_j$ is
assigned the bandwidth $\sigma_j=\|x_j-x_{j,(k_{\mathrm{NN}})}\|$, its
$k_{\mathrm{NN}}$-th nearest-neighbour distance \emph{among landmarks} (matching
the affinity graph of \Cref{alg:entropath}), and a query $x$ is assigned
$\sigma(x)=\|x-x_{(k_{\mathrm{proj}})}\|$, the distance to its
$k_{\mathrm{proj}}$-th nearest landmark, where $k_{\mathrm{proj}}$ ($\approx 50$)
is the projection neighbourhood size. These pointwise scales adapt to local
density.

\paragraph{Kernel weights.}
Let $\mathcal{N}(x)$ index the $k_{\mathrm{proj}}$ nearest landmarks to $x$. For
each $j\in\mathcal{N}(x)$ define the affinity
\begin{equation}\label{eq:lm-kernel}
  w_j(x) = \exp\!\left(-\frac{\|x-x_j\|^2}{\sigma(x)\,\sigma_j}\right),
\end{equation}
the same adaptive product-bandwidth Gaussian form used to construct $A$ in
\Cref{alg:entropath} (with a coarser query bandwidth $\sigma(x)$), so the
projection is consistent with the operator that produced $Z_{\mathrm{lm}}$.
Normalising over neighbours,
\[
  \tilde{w}_j(x) = \frac{w_j(x)}{\sum_{l\in\mathcal{N}(x)}w_l(x)},
\]
defines a projection operator $P\in\R^{N\times M}$ that maps each point to a
distribution over landmarks.

\paragraph{Embedding interpolation.}
The out-of-sample embedding is the convex combination of landmark coordinates,
\begin{equation}\label{eq:lm-embed}
  Z(x) = \sum_{j\in\mathcal{N}(x)}\tilde{w}_j(x)\,Z_{\mathrm{lm}}(j).
\end{equation}
This is a localised Nystr\"om extension \citep{Coifman2006}: \eqref{eq:lm-embed}
interpolates $x$ into the diffusion geometry already encoded by the MERW landmark
embedding $Z_{\mathrm{lm}}$ via a standard (row-normalised) kernel projection. The
MERW geometry --- the global $\lmax$ normalisation and the free-energy
dissimilarity --- is computed once, in $Z_{\mathrm{lm}}$; the projection simply
places new points within this existing geometry, so no global operator is
recomputed.

\paragraph{Complexity.}
Embedding the landmarks costs $O(M^3)$ (the matrix power and MDS on the
$M\times M$ landmark dissimilarities). Projecting the remaining points is
dominated by the nearest-landmark search, $O(NM)$ by brute force (less with a
spatial index in low dimension), plus $O(N\,k_{\mathrm{proj}}\,d_{\mathrm{emb}})$
for the interpolation \eqref{eq:lm-embed} --- both linear in $N$, versus $O(N^3)$
for the full embedding. With $M\ll N$ this is the source of \entropath's
scalability to large datasets (\Cref{sec:experiments}).

\section{Experiments}
\label{sec:experiments}

We evaluate \entropath on synthetic manifolds with known geometry and on
biological single-cell datasets. Within each setting, all methods use the
same underlying graph, so that performance differences reflect the choice
of random walk and dissimilarity rather than graph construction. For the
synthetic benchmarks (\Cref{sec:exp-synthetic}) we use an adaptive
Gaussian kernel
$A_{ij}=\exp\!\big(-\|x_i-x_j\|^2/(\sigma_i\sigma_j)\big)$
\citep{ZelnikManor2004}, with a per-point bandwidth $\sigma_i$ equal to
the distance to the $k_{\mathrm{NN}}$-th nearest neighbour; this is the
kernel we use to instantiate \Cref{thm:geodesic}. For the single-cell
datasets (\Cref{sec:singlecell}) we use the $\alpha$-decay kernel standard
in that literature \citep{Moon2019}, which adapts more gracefully to the
strong density variation of scRNA-seq data. The kernel ablation
(\Cref{app:ablation-kernel}) compares the two profiles directly and
confirms that this choice is not what drives \entropath's advantage. Both
kernels are symmetrised by $A\leftarrow\max(A,A^\top)$, with
$k_{\mathrm{NN}}=15$ as the default neighbourhood size (varied in
\Cref{app:ablation-knn}).

\paragraph{Distance-level and embedding-level evaluation.}
Distance-level evaluation asks whether a method recovers the correct
geometry; embedding-level evaluation asks whether that geometry survives
the embedding stage. Each method produces a dissimilarity matrix
$D \in \mathbb{R}^{N \times N}$ and, after an embedding step, a
two-dimensional layout $Y \in \mathbb{R}^{N \times 2}$; we evaluate both.
\textbf{Distance-level evaluation} compares $D$ directly against the
ground-truth geodesic matrix $G$ (row-wise Spearman and Pearson
correlation), isolating the quality of the kernel and dissimilarity
construction from any downstream embedding. \textbf{Embedding-level
evaluation} assesses the layout $Y$ using neighbourhood-preservation
metrics (trustworthiness and continuity) and geodesic correlation
(Spearman and Pearson), computed from embedded Euclidean distances
$\|y_i - y_j\|$; on the single-cell datasets we additionally report DEMaP.

We require a ground-truth geodesic against which to score each method.
Where a closed-form geodesic exists (Swiss roll, sphere) we use it as the
primary reference, since it is the exact Riemannian distance rather than a
graph-based proxy; elsewhere (torus, Swiss hole, trees, and all
single-cell data) we approximate it by shortest-path distances on a 15-NN
graph over the clean reference data, the standard DEMaP convention
\citep{Moon2019}. Although widely used, this shortest-path proxy introduces a systematic bias:
it favours methods whose own dissimilarities are built on the
same $k$-NN graph, inflating their measured geodesic correlation.
Appendix~\ref{app:analytic-validation} quantifies this effect, comparing
the two protocols directly --- they coincide on the sphere, whereas on the
Swiss roll the proxy systematically advantages kNN-aligned methods. We
therefore report both protocols wherever an analytic geodesic is
available, and treat the analytic one as primary.

\paragraph{Baselines.}
We evaluate two groups of methods. \emph{Geodesic-preserving methods} ---
which are explicitly designed to recover manifold geometry --- form our
primary comparison: PHATE \citep{Moon2019}, HeatGeo \citep{Huguet2023},
DTNE \citep{Wei2025}, Diffusion Maps \citep{Coifman2006}, and Isomap
\citep{Tenenbaum2000} (shortest-path baseline). \emph{Local
neighbourhood methods} --- UMAP \citep{McInnes2018}, $t$-SNE
\citep{vanderMaaten2008}, and PCA --- are reported for completeness in
Appendix~\ref{app:additional-experiments} but optimise a fundamentally
different objective (neighbourhood preservation rather than geodesic
recovery) and are not expected to perform well on geodesic correlation.
Diffusion-scale parameters ($t$ or $k$) follow each method's native
selection procedure rather than being fixed externally: PHATE and
\entropath both use the von Neumann entropy criterion, applied to their
respective diffusion operators, while the remaining baselines use their
own author-recommended procedures. Each method is therefore evaluated at
its intended scale.

\paragraph{Metrics.}
We use the following complementary metrics. \emph{Geodesic correlation}:
Spearman ($\rho$) and Pearson correlation between pairwise dissimilarities
and ground-truth geodesic distances $d_{\mathcal{M}}(i,j)$; higher is
better. Spearman measures rank agreement, whereas Pearson measures the
approximately linear relationship predicted by the short-time form of
\Cref{thm:geodesic}; together they assess the geometric prediction of the
theorem. \emph{Trustworthiness} ($\mathcal{T}$) \citep{Venna2006}: whether
points close in the embedding were also close in the original space;
higher is better. \emph{Continuity} ($\mathcal{C}$): the converse ---
whether points close in the original space remain close in the embedding;
higher is better. On the single-cell datasets, where no ground-truth
geodesic exists, we additionally report \emph{DEMaP} \citep{Moon2019}, the
field-standard denoised manifold-preservation score: the global Spearman
correlation between shortest-path geodesic distances on a kNN graph over
the high-dimensional reference data and Euclidean distances in the
embedding, averaged over random subsamples. Together with \emph{pseudotime
correlation} and \emph{silhouette score}
(Appendix~\ref{app:pseudotime_comparison}), DEMaP provides the primary
basis for comparison with PHATE, HeatGeo, and DTNE; full protocol
parameters are given in Appendix~\ref{app:pseudotime_comparison}.

\subsection{Synthetic Benchmarks}
\label{sec:exp-synthetic}

We evaluate on manifolds with known geometry: a Swiss roll under uniform
and non-uniform (Beta$(1,4)$) sampling, where the latter imposes a smooth
density gradient that tests the path-uniformization claim
(Remark~\ref{rem:bottleneck}), and a sparse artificial tree
\citep{Moon2019} ($6$ branches, $\mathbb{R}^{10}$) as a branching
trajectory. Each uses $N=2{,}000$ points and $30$ seeds. Four further manifolds --- sphere, torus, Swiss hole, and a denser tree ---
are uniformly sampled or locally flat and do not probe the density-adaptive
mechanism of \entropath; we report them, with the full method panel,
in Appendix~\ref{app:additional-experiments}.

\paragraph{Distance-level results.}
We first ask how faithfully each method's dissimilarity matrix recovers the
ground-truth geodesic --- the quantity \Cref{thm:geodesic} characterises ---
before any embedding. On the sparse tree (Table~\ref{tab:level1-tree-main})
\entropath attains the highest Spearman and Pearson correlation among all
methods, exceeding even the shortest-path Isomap baseline from which the
tree's ground truth is built. On the Swiss roll
(Table~\ref{tab:level1-swiss-main}) \entropath is the strongest
diffusion-based method under the analytic geodesic in both sampling regimes,
ahead of PHATE, HeatGeo, and DTNE; ambient Euclidean distance and the
shortest-path Isomap reference score higher on the non-uniform roll, a point
we return to below. The shortest-path columns expose the effect of the
evaluation protocol: methods whose dissimilarities are computed on the same
15-NN graph as the proxy approach perfect agreement with it (Isomap and
Shortest Path reach $1.000$ on the non-uniform roll, where their construction
coincides with the ground truth). This same protocol-alignment explains the
apparent strength of ambient and shortest-path distances at the distance
level: on a near-developable manifold with a strong density gradient, raw
Euclidean distance correlates highly with arc-length over the densely sampled
region, which dominates the row-wise average. The embedding level
(Table~\ref{tab:emb-main}) removes this advantage --- neither Euclidean nor
Isomap unrolls the manifold --- and there \entropath leads all methods.
Pearson closely tracks Spearman throughout, indicating that the recovered
dissimilarities are not merely rank-consistent but approximately linear in
the geodesic, as predicted by the short-time form of \Cref{thm:geodesic}.
All comparisons use a single pipeline with graph construction held fixed
across methods; consequently, absolute correlations should not be compared
across datasets, and only within-table comparisons are meaningful.

\paragraph{Embedding-level results.}
The embedding level is the more decisive test: it asks whether the recovered
geometry survives projection to two dimensions, and it is where a method's
practical value lies. Here \entropath leads every diffusion-based method by a
clear margin on both manifolds (Table~\ref{tab:emb-main}). On the non-uniform
Swiss roll it attains the best trustworthiness ($0.981$, vs.\ $\le 0.957$ for
the other diffusion methods) and the best embedded geodesic correlation
(Spearman $0.875$ vs.\ $0.805$ for DTNE and $\le 0.741$ for PHATE, HeatGeo,
and Diffusion Maps); on the sparse tree it leads the embedded correlations
(Spearman $0.869$ vs.\ $0.859$ for DTNE) and ties for the best continuity.
Notably, \entropath recovers this geometry \emph{after} embedding even where
the linear baselines lead at the distance level, confirming that its
advantage is in producing a faithful low-dimensional layout rather than a raw
distance matrix. The embeddings themselves (Figures~\ref{fig:swiss-roll}
and~\ref{fig:tree}) illustrate the mechanism. On the non-uniform Swiss roll,
\entropath keeps sparsely sampled points separated from the main arc instead
of smoothing them into it, whereas PHATE and HeatGeo blur local geometry in
sparse regions. On the sparse tree, the path-integral methods recover the
branch topology while PHATE fragments it. A denser tree configuration
($20$ short branches in $\mathbb{R}^{100}$), where \entropath's
entropy-based depth selection becomes noisier and DTNE leads, is analysed in
Appendix~\ref{app:tree-full} and discussed as a limitation in
Section~\ref{sec:conclusion}. Full embedding-level panels with the complete
method baseline set, and DEMaP, are in
Appendix~\ref{app:additional-experiments}.

\begin{table}[H]
\centering
\small
\setlength{\tabcolsep}{6pt}
\caption{Distance-level row-wise correlation with the ground-truth
  geodesic on the \textbf{sparse tree} (shortest-path geodesic; DLA
  trees admit no closed form). This isolates dissimilarity quality from
  any embedding step. \entropath attains the highest correlation in
  both columns, exceeding even the shortest-path Isomap baseline that
  the ground truth is built from. \textbf{Bold} = best,
  \underline{underlined} = second-best.}
\label{tab:level1-tree-main}
\begin{tabular}{lcc}
\toprule
Distance               & Spearman          & Pearson \\
\midrule
Euclidean              & 0.806 $\pm$ 0.047 & 0.830 $\pm$ 0.041 \\
Isomap / Shortest Path & 0.917 $\pm$ 0.042 & 0.921 $\pm$ 0.038 \\
Diffusion Maps         & 0.491 $\pm$ 0.088 & 0.543 $\pm$ 0.044 \\
PHATE                  & 0.318 $\pm$ 0.070 & 0.615 $\pm$ 0.043 \\
HeatGeo                & 0.918 $\pm$ 0.040 & 0.886 $\pm$ 0.026 \\
DTNE                   & \underline{0.927 $\pm$ 0.039} & \underline{0.921 $\pm$ 0.031} \\
\entropath              & \textbf{0.934 $\pm$ 0.037} & \textbf{0.935 $\pm$ 0.033} \\
\bottomrule
\end{tabular}
\end{table}

\begin{table}[H]
\centering
\small
\setlength{\tabcolsep}{6pt}
\caption{Distance-level row-wise correlation on the \textbf{Swiss
  roll}, under the \emph{analytic} arc-length geodesic ($A$) and the
  \emph{shortest-path} proxy ($SP$), for uniform and non-uniform
  (Beta$(1,4)$) sampling. Among the diffusion-based methods \entropath is
  strongest under the analytic protocol in both regimes; ambient Euclidean
  distance and the shortest-path Isomap reference score higher on the
  non-uniform roll, for the protocol-alignment reasons discussed in the
  text. The shortest-path proxy inflates methods built on the same 15-NN
  backbone (Isomap and Shortest Path reach $1.000$ on the non-uniform roll,
  where their construction coincides with the ground truth). \entropath's
  correlation is also far less protocol-sensitive than the kNN-aligned
  baselines (e.g.\ HeatGeo $0.791\!\to\!0.962$ vs.\ \entropath
  $0.877\!\to\!0.935$ on uniform Spearman). \textbf{Bold} = best,
  \underline{underlined} = second-best.}
\label{tab:level1-swiss-main}
\begin{tabular}{l cc cc}
\toprule
 & \multicolumn{2}{c}{Analytic ($A$)} & \multicolumn{2}{c}{Shortest-path ($SP$)} \\
\cmidrule(lr){2-3}\cmidrule(lr){4-5}
Distance               & Spearman & Pearson & Spearman & Pearson \\
\midrule
\multicolumn{5}{l}{\emph{Uniform sampling}} \\
Euclidean              & 0.528 $\pm$ 0.008 & 0.537 $\pm$ 0.008 & 0.734 $\pm$ 0.045 & 0.756 $\pm$ 0.046 \\
Isomap / Shortest Path & 0.829 $\pm$ 0.032 & \underline{0.827 $\pm$ 0.033} & \textbf{0.993 $\pm$ 0.017} & \textbf{0.994 $\pm$ 0.014} \\
Diffusion Maps         & 0.499 $\pm$ 0.016 & 0.537 $\pm$ 0.009 & 0.448 $\pm$ 0.015 & 0.541 $\pm$ 0.014 \\
PHATE                  & 0.806 $\pm$ 0.029 & 0.804 $\pm$ 0.029 & 0.936 $\pm$ 0.008 & 0.934 $\pm$ 0.006 \\
HeatGeo                & 0.791 $\pm$ 0.027 & 0.786 $\pm$ 0.029 & \underline{0.962 $\pm$ 0.013} & \underline{0.964 $\pm$ 0.011} \\
DTNE                   & \underline{0.833 $\pm$ 0.040} & 0.823 $\pm$ 0.038 & 0.958 $\pm$ 0.029 & 0.954 $\pm$ 0.019 \\
\entropath              & \textbf{0.877 $\pm$ 0.023} & \textbf{0.864 $\pm$ 0.025} & 0.935 $\pm$ 0.030 & 0.931 $\pm$ 0.029 \\
\midrule
\multicolumn{5}{l}{\emph{Non-uniform sampling}} \\
Euclidean              & \underline{0.899 $\pm$ 0.007} & \underline{0.885 $\pm$ 0.009} & \underline{0.977 $\pm$ 0.002} & \underline{0.979 $\pm$ 0.002} \\
Isomap / Shortest Path & \textbf{0.934 $\pm$ 0.006} & \textbf{0.919 $\pm$ 0.008} & \textbf{1.000 $\pm$ 0.000} & \textbf{1.000 $\pm$ 0.000} \\
Diffusion Maps         & 0.443 $\pm$ 0.025 & 0.502 $\pm$ 0.014 & 0.406 $\pm$ 0.024 & 0.444 $\pm$ 0.020 \\
PHATE                  & 0.816 $\pm$ 0.006 & 0.803 $\pm$ 0.009 & 0.892 $\pm$ 0.004 & 0.893 $\pm$ 0.004 \\
HeatGeo                & 0.796 $\pm$ 0.008 & 0.779 $\pm$ 0.009 & 0.889 $\pm$ 0.007 & 0.887 $\pm$ 0.007 \\
DTNE                   & 0.808 $\pm$ 0.007 & 0.782 $\pm$ 0.009 & 0.900 $\pm$ 0.006 & 0.891 $\pm$ 0.007 \\
\entropath              & 0.882 $\pm$ 0.014 & 0.854 $\pm$ 0.013 & 0.916 $\pm$ 0.008 & 0.906 $\pm$ 0.007 \\
\bottomrule
\end{tabular}
\end{table}

\begin{table}[t]
\centering
\small
\setlength{\tabcolsep}{6pt}
\caption{Embedding-level metrics on the two main-text manifolds.
  Trustworthiness ($\mathcal{T}$) and continuity ($\mathcal{C}$) are
  protocol-independent; the embedded geodesic correlations (row-wise
  Spearman, Pearson of the 2D layout) use the \emph{analytic} geodesic
  for the Swiss roll and the shortest-path geodesic for the sparse tree
  (no closed form). \entropath leads trustworthiness and embedded geodesic
  correlation on the non-uniform Swiss roll, and the embedded correlations
  on the sparse tree. \textbf{Bold} = best, \underline{underlined} =
  second-best; ties for best are shown by two bold entries.}
\label{tab:emb-main}
\begin{tabular}{l cccc cccc}
\toprule
 & \multicolumn{4}{c}{Swiss roll (non-unif., analytic)}
 & \multicolumn{4}{c}{Sparse tree (shortest-path)} \\
\cmidrule(lr){2-5}\cmidrule(lr){6-9}
Method         & $\mathcal{T}$ & $\mathcal{C}$ & Sp.\ & Pe.\
               & $\mathcal{T}$ & $\mathcal{C}$ & Sp.\ & Pe.\ \\
\midrule
Diffusion Maps & \underline{0.957} & 0.976 & 0.725 & 0.726
               & 0.980 & 0.991 & 0.767 & 0.758 \\
PHATE          & 0.945 & 0.984 & 0.741 & 0.737
               & 0.980 & 0.989 & 0.404 & 0.500 \\
HeatGeo        & 0.943 & 0.983 & 0.714 & 0.707
               & \textbf{0.992} & \textbf{0.993} & 0.770 & 0.808 \\
DTNE           & 0.949 & \textbf{0.987} & \underline{0.805} & \underline{0.789}
               & 0.989 & 0.992 & \underline{0.859} & \underline{0.874} \\
\entropath      & \textbf{0.981} & \underline{0.985} & \textbf{0.875} & \textbf{0.855}
               & \underline{0.991} & \textbf{0.993} & \textbf{0.869} & \textbf{0.884} \\
\bottomrule
\end{tabular}
\end{table}

\begin{figure}[t]
\centering
\includegraphics[width=\textwidth]{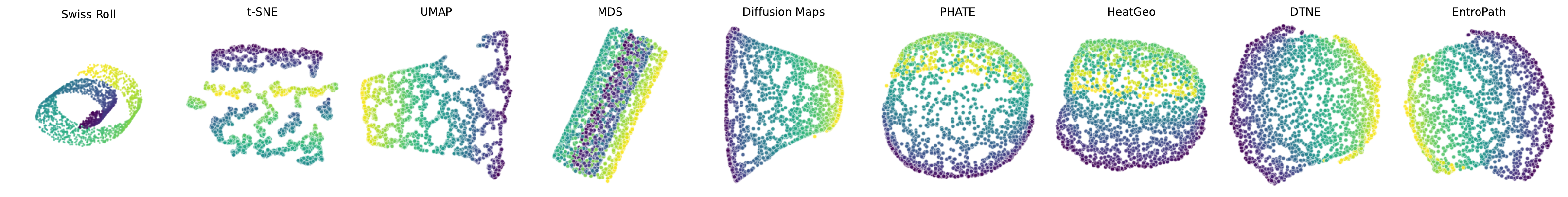}\\[0.3em]
\includegraphics[width=\textwidth]{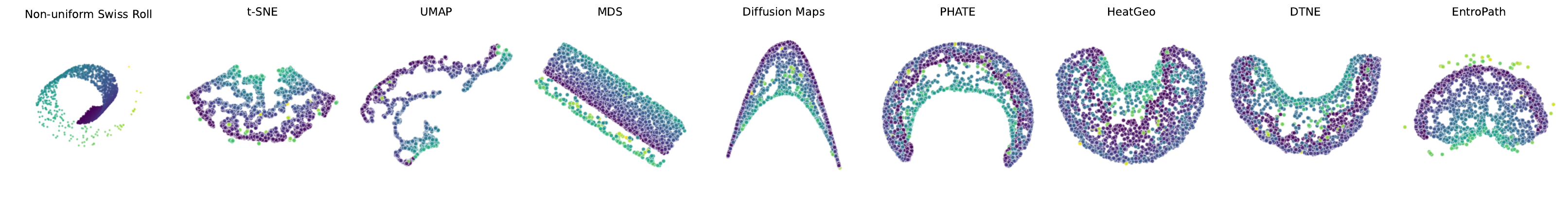}
\caption{Representative 2D embeddings of (top) uniform and (bottom)
  non-uniform Swiss roll (Beta$(1,4)$ density gradient), coloured by the
  angular parameter $t$. Local methods (UMAP, $t$-SNE) fragment the
  manifold; linear MDS preserves ambient Euclidean distance but fails to
  unroll. Under non-uniform sampling, \entropath maintains coherent global
  structure while holding sparsely sampled points apart from the main arc
  rather than smoothing them into it, whereas PHATE and HeatGeo show
  diluted local geometry near sparse regions.}
\label{fig:swiss-roll}
\end{figure}

\begin{figure}[t]
\centering
\includegraphics[width=\textwidth]{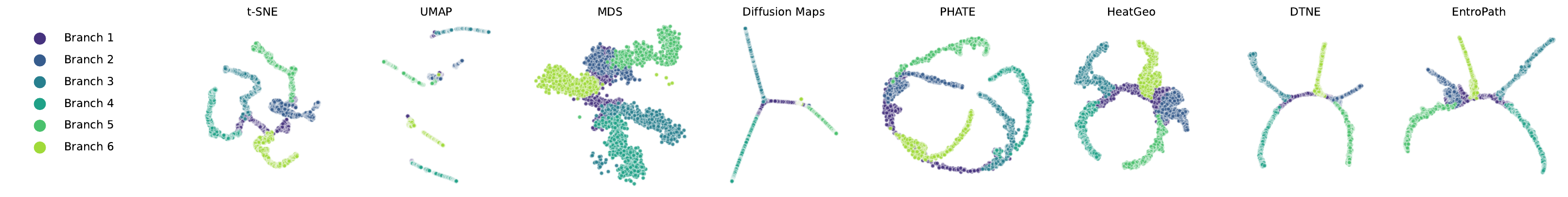}
\caption{Embeddings of the sparse artificial tree, coloured by branch identity.
The diffusion-based methods (PHATE, HeatGeo, DTNE, \entropath) recover the
branching topology, while $t$-SNE and UMAP fragment it and Diffusion Maps
collapses it into fewer arms. Among them the along-branch point spacing
varies: HeatGeo spreads points most heterogeneously, DTNE most uniformly, and
\entropath stays close to DTNE's clean branch separation while still resolving
large point aggregations rather than smoothing them into an even spread.}
\label{fig:tree}
\end{figure}



\subsection{Single-Cell Developmental Data}
\label{sec:singlecell}

Single-cell RNA-seq datasets capture continuous developmental processes, in
which cells progress through intermediate states along one or more lineages,
and are a standard benchmark for manifold learning. We evaluate \entropath against
PCA, t-SNE, UMAP, PHATE~\citep{Moon2019}, HeatGeo~\citep{Huguet2023}, and
DTNE~\citep{Wei2025} on six established datasets with known developmental
structure: Paul15 (mouse myeloid progenitors)~\citep{Paul2015}, Nestorowa (mouse
haematopoiesis)~\citep{Nestorowa2016}, Pancreas (endocrinogenesis)~\citep{Pancreas2019},
Lymphoid~\citep{Lymphoid2019}, Embryoid Body~\citep{Moon2019}, and the Arabidopsis
root atlas ($1.1\times10^{5}$ cells)~\citep{Shahan2022}. Baselines use each
method's per-dataset settings as published by DTNE; \entropath uses a single
fixed configuration ($k=15$, $\alpha$-decay kernel with decay $40$, k-means
landmarks) on every dataset, with no per-dataset tuning. HeatGeo is omitted on
the root atlas, where it is computationally infeasible at this scale.
Full hyperparameters and the remaining metrics (DEMaP standard deviations, ARI, NMI, continuity, runtime) are given in Appendix~\ref{app:singlecell}.

\paragraph{Neighbourhood size and the role of the adaptive kernel.}
A practical consequence of the adaptive affinity is that \entropath operates at a
small, fixed neighbourhood size where diffusion methods built on hard
$k$-nearest-neighbour graphs require substantially larger neighbourhoods. In a
hard-kNN construction each point connects only to its $k$ nearest neighbours, so
the graph must be dense enough for the random walk to mix across the manifold; on large datasets this pushes $k$ substantially higher---for the
$1.1\times10^{5}$-cell root atlas, DTNE adopts $k=800$---which densifies the
transition operator and makes its powers progressively more expensive to form. \entropath instead derives
a \emph{per-point} bandwidth $\sigma_i$ from the distance to the $k$-th neighbour
and applies the $\alpha$-decay kernel $\exp\!\big(-(d_{ij}/\sigma_i)^{\alpha}\big)$,
whose tails extend smoothly beyond the $k$ nearest points. Effective connectivity
is therefore governed by the kernel rather than the edge count, and a small $k$
already yields a well-connected, well-mixing diffusion. Two mechanisms act in
concert: the adaptive kernel decouples connectivity from $k$, keeping the
affinity matrix sparse, while the landmark construction decouples cost from $n$
by restricting the spectral computation to $m \ll n$ anchors. 

On the root atlas this computational advantage is substantial: \entropath's
landmark construction and adaptive kernel keep the spectral computation at
$m\ll n$ anchors and a small fixed $k$, where hard-kNN diffusion methods
densify (DTNE uses $k=800$; \Cref{app:singlecell}). \entropath embeds the
$1.1\times10^{5}$ cells in approximately $15$\,s while attaining comparable
geometric fidelity (\Cref{tab:sc-geometry}).



\paragraph{Low-dimensional visualization.}
Figure~\ref{fig:sc-embeddings} shows the two-dimensional embeddings produced by
each method. We quantify embedding quality using two complementary metrics
(Table~\ref{tab:sc-geometry}): \emph{DEMaP}~\citep{Moon2019}, the Spearman
correlation between geodesic distances in the original space and Euclidean
distances in the embedding ($k=30$, following the PHATE convention), which
measures preservation of \emph{global} geometry; and \emph{trustworthiness},
which measures preservation of \emph{local} neighbourhoods.\footnote{DEMaP
settings follow DTNE's per-dataset protocol (\Cref{app:singlecell}): subsample
size $500$ (Paul15, Nestorowa, Pancreas) or $2{,}000$ (Lymphoid, Embryoid Body,
root atlas), $50$ repetitions, and DTNE's per-dataset geodesic $k$-NN. For
Embryoid Body and Lymphoid the DEMaP reference is the raw representation
(sqrt-transformed expression / full LSI) rather than the PCA-reduced input,
matching DTNE's published notebooks.}
These metrics reveal a sharp trade-off on the most challenging datasets. On
Lymphoid and Embryoid Body, where nonlinear methods lose substantial geodesic
structure, the highest DEMaP is achieved by the \emph{linear} PCA projection.
However, PCA fails to resolve cell states (its clustering score is essentially
zero on Lymphoid; Appendix~\ref{app:singlecell}), preserving coarse global
geometry at the expense of lineage separation. In contrast, the local-neighbourhood
methods (t-SNE, UMAP) achieve high trustworthiness but collapse global distances
(UMAP DEMaP $0.28$ and $0.18$ on the two datasets, respectively).
\entropath occupies the middle ground: it attains the best DEMaP among nonlinear
methods on Paul15, Lymphoid, and Embryoid Body, while maintaining trustworthiness
comparable to the diffusion baselines. This balance is particularly valuable for
trajectory analysis: on the hardest datasets \entropath achieves the strongest
nonlinear DEMaP, and on Paul15 the strongest DEMaP overall,
while producing embeddings in which
the branching structure remains visible and usable for downstream ordering. On the
cleaner Nestorowa, Pancreas, and root-atlas manifolds, where the trade-off is
milder and most methods score well, \entropath performs comparably to the strong
diffusion baselines.

\begin{figure}[t]
\centering
\includegraphics[width=\textwidth]{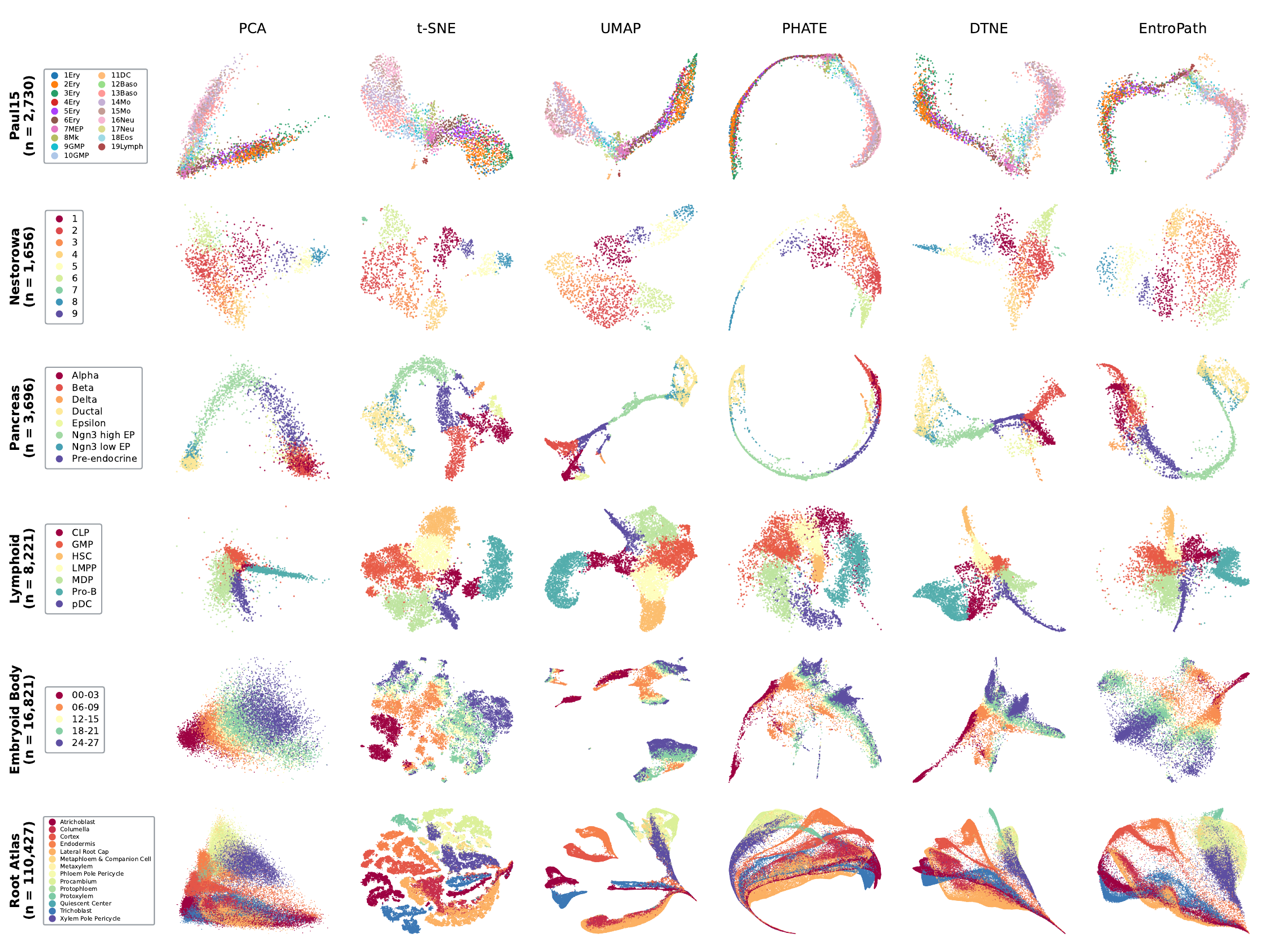}
\caption{Two-dimensional embeddings of six single-cell datasets (rows) produced
by the compared methods (columns, \entropath last; HeatGeo omitted on the root
atlas). Datasets, top to bottom: Paul15 ($n=2{,}730$), Nestorowa ($n=1{,}656$),
Pancreas ($n=3{,}696$), Lymphoid ($n=8{,}221$), Embryoid Body ($n=16{,}821$), and
root atlas ($n=110{,}427$); points are coloured by the annotated cell type or
cluster in the per-row legend. PCA leaves lineage structure largely buried in
noise; t-SNE and UMAP recover compact clusters but break the connections between
them. The diffusion-based methods (PHATE, DTNE, \entropath) render the lineages as
continuous, branching structures, with \entropath retaining visible separation
between cell states. Aspect ratio is not preserved; colours are assigned by a
Spectral map over sorted cell types. Quantitative scores are in
Table~\ref{tab:sc-geometry} and Appendix~\ref{app:singlecell}.}
\label{fig:sc-embeddings}
\end{figure}

\begin{table}[t]
\centering
\caption{Embedding geometry on six single-cell datasets: DEMaP ($k=30$, mean)
and trustworthiness; higher is better for both. \textbf{Bold}: best overall per
dataset. \underline{Underline}: best nonlinear method (excludes PCA). HeatGeo is
infeasible on the root atlas (---). Standard deviations and the remaining metrics
(ARI, NMI, continuity, runtime) are in Appendix~\ref{app:singlecell}.}
\label{tab:sc-geometry}
\setlength{\tabcolsep}{4pt}
\begin{tabular}{l rrrrrr}
\toprule
Method & Paul15 & Nestorowa & Pancreas & Lymphoid & Embryoid & Root \\
\midrule
\multicolumn{7}{l}{\emph{DEMaP} ($k=30$, mean)} \\
\midrule
PCA       & 0.875 & \textbf{0.925} & 0.878 & \textbf{0.615} & \textbf{0.486} & ---                       \\
t-SNE     & 0.912 & 0.906 & 0.786 & 0.368 & 0.334 & 0.716                                                 \\
UMAP      & 0.901 & \underline{0.919} & \textbf{\underline{0.936}} & 0.284 & 0.176 & 0.690                \\
PHATE     & 0.885 & 0.890 & 0.901 & 0.338 & 0.327 & \textbf{\underline{0.886}}                            \\
HeatGeo   & 0.760 & 0.862 & 0.869 & 0.173 & 0.315 & ---                                                   \\
DTNE      & 0.886 & 0.906 & 0.932 & 0.254 & 0.364 & 0.850                                                 \\
\entropath & \textbf{\underline{0.946}} & 0.908 & 0.895 & \underline{0.493} & \underline{0.463} & 0.843    \\
\midrule
\multicolumn{7}{l}{\emph{Trustworthiness}} \\
\midrule
PCA       & 0.919 & 0.913 & 0.907 & 0.756 & 0.809 & ---                                                   \\
t-SNE     & \textbf{\underline{0.940}} & \textbf{\underline{0.985}} & \textbf{\underline{0.988}} & \textbf{\underline{0.922}} & \textbf{\underline{0.984}} & 0.992 \\
UMAP      & 0.929 & 0.963 & 0.980 & 0.860 & 0.952 & \textbf{\underline{0.994}}                            \\
PHATE     & 0.933 & 0.932 & 0.954 & 0.817 & 0.910 & 0.968                                                 \\
HeatGeo   & 0.926 & 0.932 & 0.959 & 0.549 & 0.860 & ---                                                   \\
DTNE      & 0.926 & 0.930 & 0.961 & 0.836 & 0.890 & 0.978                                                 \\
\entropath & 0.920 & 0.946 & 0.958 & 0.812 & 0.877 & 0.980                                                 \\
\bottomrule
\end{tabular}
\end{table}

\paragraph{Reference robustness on Embryoid Body.}
On Embryoid Body, EntroPath's DEMaP advantage over the diffusion baselines is
robust to the choice of high-dimensional reference: it ranks first among the
diffusion methods under both the strict raw-expression reference used in the
main text (EntroPath $0.463$ vs.\ DTNE $0.364$, PHATE $0.327$, HeatGeo $0.315$)
and the PCA-input reference (EntroPath $0.823$ vs.\ DTNE $0.743$, PHATE $0.728$,
HeatGeo $0.714$). The reduced-input reference yields uniformly higher absolute
scores; we report the stricter raw-expression protocol for comparability with
DTNE's published results.

\paragraph{Pseudotime inference.}
\entropath also induces a pseudotime ordering as the free-energy dissimilarity
from a designated root cell. Across the developmental datasets this ordering is
strongly concordant with DTNE's---the two share the same free-energy dissimilarity
and differ only in the underlying walk (maximum entropy versus standard)---and
competitive with reference orderings, though it does not lead this metric: the
ordering is sensitive to the diffusion power, and the diffusion depth selected for
visualization need not coincide with the optimal depth for pseudotime inference.
We report the full pseudotime
analysis, including this limitation and a per-dataset comparison with DTNE, in
Appendix~\ref{app:pseudotime}, and illustrate it on Paul15 in
Figure~\ref{fig:paul15-pseudotime}.

\subsection{Ablation Study}
\label{sec:ablation}

We ablate the key design choices of \entropath to isolate the contribution of each
component. Unless stated otherwise, experiments use the Swiss roll
($N=2{,}000$, 30 seeds) with the same graph construction as
Section~\ref{sec:exp-synthetic}.

\subsubsection{Effect of diffusion depth k.}
\label{par:ablation-k}
The depth~$k$ is the power applied to the spectrally normalized affinity $\tilde A$
in $D_{ij}=-\log(\tilde A^{\,k})_{ij}$, and is distinct from the connectivity
$k_{\mathrm{NN}}{=}15$: $k_{\mathrm{NN}}$ sets the locality of the kernel, whereas
$k$ controls how far the maximum entropy walk integrates over the manifold. Sweeping
$k\in[1,500]$ and scoring the \entropath dissimilarity against the analytic
arc-length geodesic with the row-wise Spearman metric of
\Cref{tab:swiss-roll-l1}, the fidelity $\rho(k)$ traces the three regimes of
\Cref{sec:regimes} (\Cref{fig:ablation-k-main}): at small~$k$ the walk has not
bridged the manifold and the distance is local ($\rho{=}0.15$ at $k{=}1$); $\rho$
rises through the short-time (Varadhan) regime to a maximum of $\rho{=}0.88$ at
$k{=}65$; and it decays thereafter as geometric structure is lost. Fidelity exceeds
the Isomap geodesic baseline ($\rho{=}0.83$) across a broad plateau ($k\in[20,100]$),
so the gain is not tied to one fortunate depth. Crucially, the \emph{unsupervised}
selector \entropath uses by default---the knee of the von Neumann entropy of the MERW
operator---recovers this optimum without the geodesic: the median selected depth
$k{=}57$ lies inside the peak band ($k\in[32,85]$, within one s.d.\ of the maximum)
and attains $\rho$ within $0.01$ of the grid optimum. The entropy criterion
identifies the geodesic-faithful regime from the spectrum alone---the property the
single-cell experiments rely on, where no ground-truth geodesic exists to tune
against.

As $k\to\infty$, \Cref{prop:largek} predicts the dissimilarity collapses onto the
stationary form $-\tfrac12\log(\pi_i\pi_j)$, with $\pi_i\propto\psi_i^2$ the MERW
stationary distribution. \Cref{fig:ablation-k-largek} confirms this empirically: the
correlation between $D(k)$ and $-\log(\psi_i\psi_j)$ is near zero throughout the
geodesic regime, crosses zero near the fidelity peak ($k\approx65$), and rises
monotonically thereafter, reaching $\rho{=}0.72$ at $k{=}500$ and still increasing.
The rate is governed by the spectral gap of $\tilde A$; the direction of convergence
matches the proposition and identifies the large-$k$ decay in
\Cref{fig:ablation-k-main} as loss of geometric information to the stationary
measure rather than numerical degradation.

\begin{figure}[t]
  \centering
  \begin{subfigure}{0.45\linewidth}
    \includegraphics[width=\linewidth]{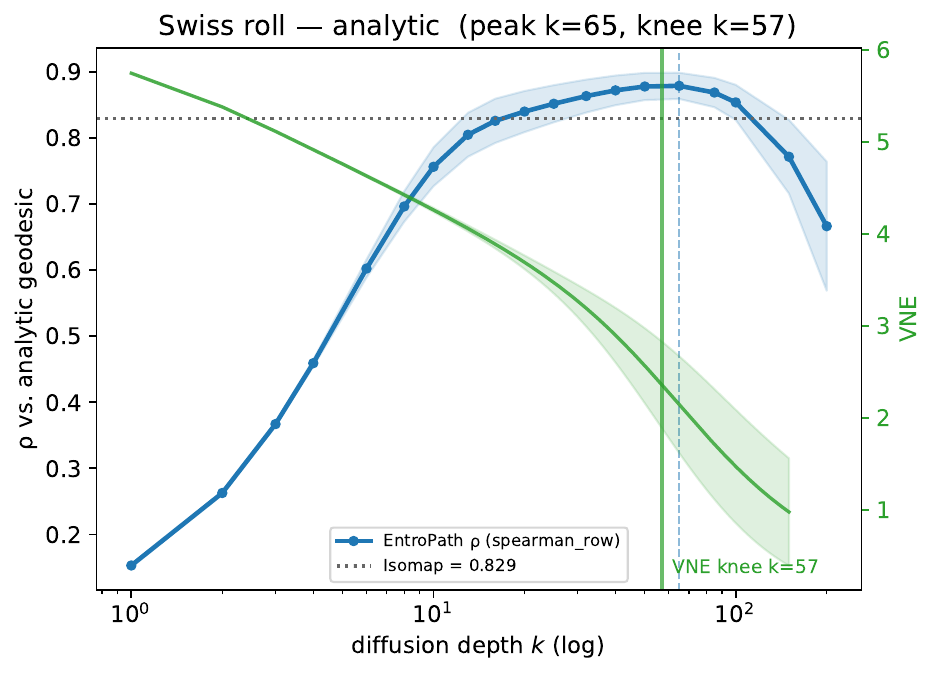}
    \caption{}
    \label{fig:ablation-k-main}
  \end{subfigure}\hfill
  \begin{subfigure}{0.5\linewidth}
    \includegraphics[width=\linewidth]{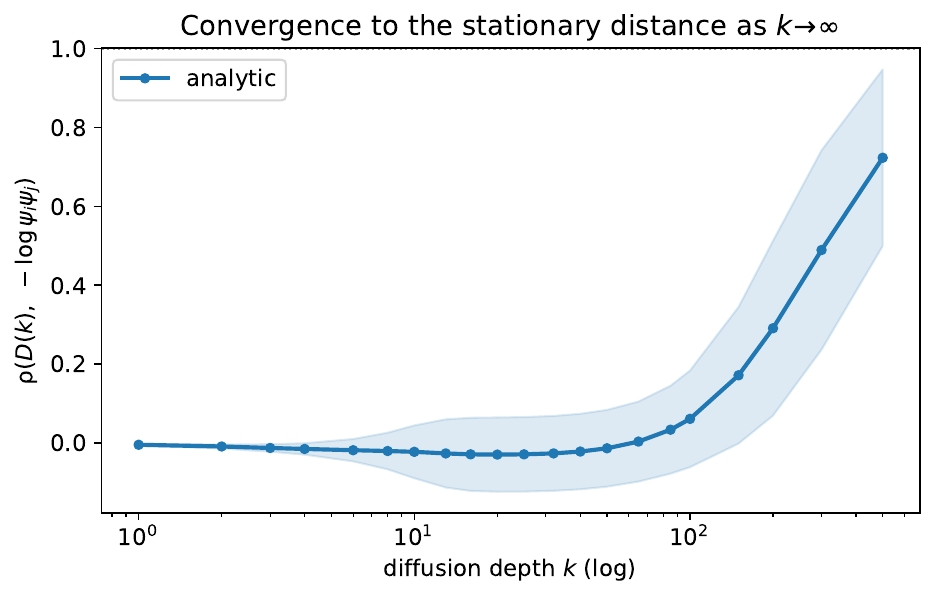}
    \caption{}
    \label{fig:ablation-k-largek}
  \end{subfigure}
  \caption{\textbf{Diffusion-depth ablation on the Swiss roll}
    ($N=2{,}000$, 30 seeds; shaded bands $\pm1$ s.d.).
    \textbf{(a)} Geodesic fidelity $\rho(k)$ (blue, left axis) between the \entropath
    dissimilarity and the analytic geodesic, with the von Neumann entropy
    (green, right axis) and its selected knee (vertical line, $k=57$); the Isomap
    geodesic baseline is dotted. Fidelity peaks at $k=65$ ($\rho=0.88$) and the
    entropy knee falls inside the peak.
    \textbf{(b)} Correlation of $D(k)$ with the stationary distance
    $-\log(\psi_i\psi_j)$, near zero in the geodesic regime and rising
    monotonically toward the large-$k$ limit of \Cref{prop:largek}.}
  \label{fig:ablation-k}
\end{figure}


\section{Related Work}
\label{sec:related}
\paragraph{Spectral and graph-based embeddings.}
Spectral methods embed data using eigenfunctions of a graph operator built from
pairwise affinities. Laplacian Eigenmaps \citep{Belkin2003} use graph-Laplacian
eigenvectors, which converge to the eigenfunctions of the Laplace--Beltrami
operator on the underlying manifold; Locally Linear Embedding \citep{Roweis2000}
recovers a related geometry from local reconstruction weights. Both operate on
local $k$NN affinities without global normalisation and do not yield a pairwise
dissimilarity suitable for metric MDS.

\paragraph{Diffusion-based methods.}
Diffusion Maps \citep{Coifman2006} normalise affinities into a Markov operator
and embed via its eigenspace at diffusion time $t$, with coordinates given by
diffusion distances that average over all paths. PHATE \citep{Moon2019} compares
log-diffusion profiles through a potential distance capturing local and global
structure, and HeatGeo \citep{Huguet2023} defines a heat-geodesic dissimilarity
$-\log h_t(x,y)$ from the symmetric heat kernel that converges to geodesic
distance as $t\to0$ via Varadhan's formula. Both PHATE and HeatGeo compare
\emph{entire} diffusion profiles (a triplet distance), pooling per-entry noise
across points. \entropath belongs to this family but replaces the
degree-normalised walk with MERW and the symmetric heat kernel with the
Schr\"odinger heat kernel $e^{-tH}$, and reads off the single pairwise
log-amplitude rather than a profile comparison; the resulting pairwise
free-energy dissimilarity admits an explicit diffusion-map (Gram) representation
(\Cref{prop:gram}).

\paragraph{Geodesic and shortest-path methods.}
Isomap \citep{Tenenbaum2000} estimates geodesics by all-pairs shortest paths on
the $k$NN graph and embeds via classical MDS; it is faithful on convex,
densely sampled manifolds but costs $O(n^3)$ and is sensitive to noise and
shortcut edges. Landmark Isomap \citep{Silva2002} reduces cost via landmark
geodesics. \entropath instead aggregates over path ensembles, reducing sensitivity to
shortcut edges (\Cref{app:ablation-noise}) while targeting the same geodesic
geometry (\Cref{thm:geodesic}).

\paragraph{Neighbour-based methods.}
$t$-SNE \citep{vanderMaaten2008} and UMAP \citep{McInnes2018} preserve local
neighbourhood and cluster structure through probabilistic or topological
objectives, often at the expense of global geometry, so inter-cluster distances
may not be meaningful. \entropath instead preserves connectivity across scales
through path-ensemble aggregation.

\paragraph{Maximum entropy random walk.}
MERW \citep{Burda2009} is the random walk maximising entropy over entire paths,
with stationary distribution $\pi_i=\psi_i^2$ concentrating on globally central
rather than high-degree nodes. Its mixing and hitting times \citep{Ochab2013} and
its links to Markov stability and community detection \citep{Delvenne2010} have
been studied on networks, and PAN \citep{Ma2020} uses a path-integral MERW
weighting as a learnable convolution operator for graph neural networks. Walk
sums over powers of the adjacency have a long history in spectral graph theory;
an influential statistical-mechanical reading is the communicability and
free-energy interpretation of \citet{Estrada2007}. To our knowledge, \entropath is
the first \emph{manifold-learning} method built on MERW, and the first to use the
MERW walk sum as a free-energy dissimilarity rather than a centrality or
propagation operator.

\paragraph{Relation to DTNE.}
Closest to our work, DTNE \citep{Wei2025} uses the same
$-2\log(\hat v_i\cdot\hat v_j)$ dissimilarity, derived via the kernel trick on
Bhattacharyya overlaps of personalised PageRank vectors. The two are
structurally identical; what differs is the walk (standard random walk
vs.\ MERW) and the diffusion horizon (an infinite geometric PageRank mixture
vs.\ our finite $k$-step ensemble). \Cref{sec:dtne-compare} gives the full
correspondence.

\paragraph{Trajectory and pseudotime methods.}
In single-cell trajectory inference, pseudotime is typically derived from a
reduced representation \citep{Trapnell2014}, from diffusion distance to a root
cell \citep{Haghverdi2016}, or from RNA velocity \citep{LaManno2018}. Our
construction (\Cref{app:pseudotime}) follows the diffusion-pseudotime paradigm
but uses the MERW diffusion potential, inheriting the geodesic approximation
of \Cref{thm:geodesic}.

\section{Conclusion}
\label{sec:conclusion}

We introduced \entropath, a manifold embedding method based on the
maximum entropy random walk and a free-energy dissimilarity derived from path
ensembles. Unlike diffusion-profile approaches that compare entire transition
distributions, \entropath assigns dissimilarities directly from the
aggregate weight of finite-horizon paths between pairs of vertices,
yielding a dissimilarity with a statistical-mechanical interpretation and a
geometric guarantee: under spectral convergence of the graph to a manifold,
with limiting Schr\"odinger operator $H=-\Delta+V$, the symmetrised
free-energy dissimilarity converges to squared geodesic distance in the
short-time limit (\Cref{thm:geodesic}). The same Schr\"odinger structure
also yields an exact Gram factorisation of the symmetrised kernel
(\Cref{prop:gram}), absent in comparable diffusion methods, linking the
geometric and kernel perspectives of the method.

The resulting geometry captures both local and global structure: at small
diffusion depth $k$ the dissimilarity approximates manifold geodesics, and at
larger $k$ it reflects the global organisation of the graph, with the
zero-temperature limit recovering the classical graph geodesic. Because
MERW weights paths by global importance rather than local degree,
bottlenecks and sparsely sampled transition regions act as effective energy
barriers (\Cref{rem:bottleneck}), improving the separation of weakly
connected regions and conferring robustness to heterogeneous sampling---
properties especially relevant in developmental single-cell data where
branching states are underrepresented. Computationally, the method admits
scalable landmark approximation and a diffusion-potential pseudotime
construction (\Cref{sec:scalable}), both inheriting the short-time geodesic
approximation. Together these place \entropath within the diffusion-distance
family of \Cref{sec:related}, sharing the heat-kernel foundation of HeatGeo
and the dissimilarity form of DTNE while differing in its global-entropy walk,
its finite diffusion horizon, and the Schr\"odinger and Gram structure from
which its geodesic guarantee follows.

\paragraph{Limitations and future work.}
\entropath's entropy-based depth selection becomes noisier on short branches. A
natural extension is a continuous-time formulation: rather than the discrete
powers $\tilde A^{\,k}$, one could approximate the Schr\"odinger heat semigroup
$e^{-tH}$ for $H=-\Delta_\mathcal M+V$ at continuous diffusion time $t$, e.g.\ by a
Chebyshev expansion, analogous to HeatGeo's treatment of the graph heat kernel
$e^{-tL}$ \citep{Huguet2023}. Unlike that construction, the operator here retains
the MERW potential $V$, which our geodesic theorem identifies as the mechanism
underlying geodesic recovery (\Cref{thm:geodesic}); continuous diffusion time
would then replace the discrete depth $k$ and smooth its selection. A second
extension concerns directed affinities: the geometric-mean dissimilarity
\eqref{eq:D} is defined for any nonnegative affinity, and for asymmetric $A$ the
MERW operator remains well defined through the left and right Perron
eigenvectors, so $-\log\sqrt{T^k_{ij}T^k_{ji}}$ combines distinct forward and
backward path ensembles rather than collapsing to one---making \entropath a
candidate for \emph{directed} data such as RNA-velocity matrices. Finally, the
direct dissimilarity could be combined with HeatGeo's triplet (row-comparison)
regulariser.

\paragraph{Data and code availability.}
Code to reproduce all experiments is available at
\href{https://github.com/rpprzemek/entropath}{https://github.com/rpprzemek/entropath}.

The synthetic manifolds (Swiss roll, sphere, torus, Swiss hole) are generated
using \texttt{scikit-learn} and standard sampling
procedures; generation code is provided in our release. The branching-tree
benchmark uses PHATE's diffusion-limited aggregation generator
\texttt{phate.tree.gen\_dla} \citep{Moon2019}. The clustering benchmarks comprise Tree, PBMC
(\texttt{scanpy} \citep{Wolf2018}), and MNIST, with preprocessing implemented in our codebase.
The single-cell trajectory datasets
(Paul15, Nestorowa, Pancreas, Lymphoid, Embryoid Body, and the root-cell atlas),
together with their reference cell orderings and per-dataset root cells, are
obtained from public sources, including the DTNE release \citep{Wei2025}. Preprocessing and
evaluation follow the DTNE protocol where applicable.

\bibliographystyle{plainnat}
\bibliography{references}  

@article{Varadhan1967,
author = {Varadhan, S. R. S.},
title = {On the behavior of the fundamental solution of the heat equation with variable coefficients},
journal = {Communications on Pure and Applied Mathematics},
volume = {20},
number = {2},
pages = {431-455},
doi = {https://doi.org/10.1002/cpa.3160200210},
url = {https://onlinelibrary.wiley.com/doi/abs/10.1002/cpa.3160200210},
eprint = {https://onlinelibrary.wiley.com/doi/pdf/10.1002/cpa.3160200210},
year = {1967}
}

@article{Coifman2006,
title = {Diffusion maps},
journal = {Applied and Computational Harmonic Analysis},
volume = {21},
number = {1},
pages = {5-30},
year = {2006},
note = {Special Issue: Diffusion Maps and Wavelets},
issn = {1063-5203},
doi = {https://doi.org/10.1016/j.acha.2006.04.006},
url = {https://www.sciencedirect.com/science/article/pii/S1063520306000546},
author = {Ronald R. Coifman and Stéphane Lafon}
}

@article {Moon2019,
	Title = {Visualizing structure and transitions in high-dimensional biological data},
	Author = {Moon, Kevin R and van Dijk, David and Wang, Zheng and Gigante, Scott and Burkhardt, Daniel B and Chen, William S and Yim, Kristina and Elzen, Antonia van den and Hirn, Matthew J and Coifman, Ronald R and Ivanova, Natalia B and Wolf, Guy and Krishnaswamy, Smita},
	DOI = {10.1038/s41587-019-0336-3},
	Number = {12},
	Volume = {37},
	Month = {December},
	Year = {2019},
	Journal = {Nature Biotechnology},
	ISSN = {1087-0156},
	Pages = {1482--1492},
	URL = {https://europepmc.org/articles/PMC7073148},
}

@article{Tenenbaum2000,
author = {Tenenbaum, Joshua B. and de Silva, Vin   and Langford, John C.},
title = {A Global Geometric Framework for Nonlinear Dimensionality Reduction},
journal = {Science},
volume = {290},
number = {5500},
pages = {2319-2323},
year = {2000},
doi = {10.1126/science.290.5500.2319},
URL = {https://www.science.org/doi/abs/10.1126/science.290.5500.2319},
eprint = {https://www.science.org/doi/pdf/10.1126/science.290.5500.2319}
}

@article{Balasubramanian2002,
author = {Balasubramanian, Mukund and Schwartz, Eric L.},
title = {The Isomap Algorithm and Topological Stability},
journal = {Science},
volume = {295},
number = {5552},
pages = {7-7},
year = {2002},
doi = {10.1126/science.295.5552.7a},
URL = {https://www.science.org/doi/abs/10.1126/science.295.5552.7a},
eprint = {https://www.science.org/doi/pdf/10.1126/science.295.5552.7a}
}

@inproceedings{Huguet2023,
 author = {Huguet, Guillaume and Tong, Alexander and De Brouwer, Edward and Zhang, Yanlei and Wolf, Guy and Adelstein, Ian and Krishnaswamy, Smita},
 booktitle = {Advances in Neural Information Processing Systems},
 editor = {A. Oh and T. Naumann and A. Globerson and K. Saenko and M. Hardt and S. Levine},
 pages = {6986--7016},
 publisher = {Curran Associates, Inc.},
 title = {A Heat Diffusion Perspective on Geodesic Preserving Dimensionality Reduction},
 url = {https://proceedings.neurips.cc/paper_files/paper/2023/file/16063a1c0f0cddd4894585cf44cebb2c-Paper-Conference.pdf},
 volume = {36},
 year = {2023}
}

@article{McInnes2018,
author = {McInnes, Leland and Healy, John and Melville, James},
journal={arXiv},
pages = {},
title = {UMAP: Uniform Manifold Approximation and Projection for Dimension Reduction},
year = {2018},
month = {02},
doi = {10.48550/arXiv.1802.03426}
}

@article{Wei2025,
author = {Wei, Jiangyong and Zhang, Bin and Wang, Qiu and Zhou, Tianshou and Tian, Tianhai and Chen, Luonan},
title = {Diffusive topology preserving manifold distances for single-cell data analysis},
journal = {Proceedings of the National Academy of Sciences},
volume = {122},
number = {4},
pages = {e2404860121},
year = {2025},
doi = {10.1073/pnas.2404860121},
URL = {https://www.pnas.org/doi/abs/10.1073/pnas.2404860121},
eprint = {https://www.pnas.org/doi/pdf/10.1073/pnas.2404860121}
}

@article{Burda2009,
  title = {Localization of the Maximal Entropy Random Walk},
  author = {Burda, Z. and Duda, J. and Luck, J. M. and Waclaw, B.},
  journal = {Phys. Rev. Lett.},
  volume = {102},
  issue = {16},
  pages = {160602},
  numpages = {4},
  year = {2009},
  month = {Apr},
  publisher = {American Physical Society},
  doi = {10.1103/PhysRevLett.102.160602},
  url = {https://link.aps.org/doi/10.1103/PhysRevLett.102.160602}
}

@inproceedings{Silva2002,
 author = {Silva, Vin and Tenenbaum, Joshua},
 booktitle = {Advances in Neural Information Processing Systems},
 editor = {S. Becker and S. Thrun and K. Obermayer},
 pages = {},
 publisher = {MIT Press},
 title = {Global Versus Local Methods in Nonlinear Dimensionality Reduction},
 url = {https://proceedings.neurips.cc/paper_files/paper/2002/file/5d6646aad9bcc0be55b2c82f69750387-Paper.pdf},
 volume = {15},
 year = {2002}
}

@article{vanderMaaten2008,
  author  = {van der Maaten, Laurens and Hinton, Geoffrey},
  title   = {Visualizing Data using t-SNE},
  journal = {Journal of Machine Learning Research},
  year    = {2008},
  volume  = {9},
  number  = {86},
  pages   = {2579--2605},
  url     = {http://jmlr.org/papers/v9/vandermaaten08a.html}
}

@Article{Ochab2013,
author={Ochab, J. K. and Burda, Z.},
title={Maximal entropy random walk in community detection},
journal={The European Physical Journal Special Topics},
year={2013},
month={Jan},
day={01},
volume={216},
number={1},
pages={73-81},
issn={1951-6401},
doi={10.1140/epjst/e2013-01730-6},
url={https://doi.org/10.1140/epjst/e2013-01730-6}
}

@article{Delvenne2010,
author = {Delvenne, J.-C. and Yaliraki, S. N. and Barahona, M.},
title = {Stability of graph communities across time scales},
journal = {Proceedings of the National Academy of Sciences},
volume = {107},
number = {29},
pages = {12755-12760},
year = {2010},
doi = {10.1073/pnas.0903215107},
URL = {https://www.pnas.org/doi/abs/10.1073/pnas.0903215107},
eprint = {https://www.pnas.org/doi/pdf/10.1073/pnas.0903215107}
}

@Article{Trapnell2014,
author={Trapnell, Cole
and Cacchiarelli, Davide
and Grimsby, Jonna
and Pokharel, Prapti
and Li, Shuqiang
and Morse, Michael
and Lennon, Niall J.
and Livak, Kenneth J.
and Mikkelsen, Tarjei S.
and Rinn, John L.},
title={The dynamics and regulators of cell fate decisions are revealed by pseudotemporal ordering of single cells},
journal={Nature Biotechnology},
year={2014},
month={Apr},
day={01},
volume={32},
number={4},
pages={381-386},
issn={1546-1696},
doi={10.1038/nbt.2859},
url={https://doi.org/10.1038/nbt.2859}
}

@Article{Haghverdi2016,
author={Haghverdi, Laleh
and B{\"u}ttner, Maren
and Wolf, F. Alexander
and Buettner, Florian
and Theis, Fabian J.},
title={Diffusion pseudotime robustly reconstructs lineage branching},
journal={Nature Methods},
year={2016},
month={Oct},
day={01},
volume={13},
number={10},
pages={845-848},
issn={1548-7105},
doi={10.1038/nmeth.3971},
url={https://doi.org/10.1038/nmeth.3971}
}

@Article{LaManno2018,
author={La Manno, Gioele
and Soldatov, Ruslan
and Zeisel, Amit
and Braun, Emelie
and Hochgerner, Hannah
and Petukhov, Viktor
and Lidschreiber, Katja
and Kastriti, Maria E.
and L{\"o}nnerberg, Peter
and Furlan, Alessandro
and Fan, Jean
and Borm, Lars E.
and Liu, Zehua
and van Bruggen, David
and Guo, Jimin
and He, Xiaoling
and Barker, Roger
and Sundstr{\"o}m, Erik
and Castelo-Branco, Gon{\c{c}}alo
and Cramer, Patrick
and Adameyko, Igor
and Linnarsson, Sten
and Kharchenko, Peter V.},
title={RNA velocity of single cells},
journal={Nature},
year={2018},
month={Aug},
day={01},
volume={560},
number={7719},
pages={494-498},
issn={1476-4687},
doi={10.1038/s41586-018-0414-6},
url={https://doi.org/10.1038/s41586-018-0414-6}
}

@inproceedings{ZelnikManor2004,
 author = {Zelnik-Manor, Lihi and Perona, Pietro},
 booktitle = {Advances in Neural Information Processing Systems},
 editor = {L. Saul and Y. Weiss and L. Bottou},
 pages = {},
 publisher = {MIT Press},
 title = {Self-Tuning Spectral Clustering},
 url = {https://proceedings.neurips.cc/paper_files/paper/2004/file/40173ea48d9567f1f393b20c855bb40b-Paper.pdf},
 volume = {17},
 year = {2004}
}

@article{Roweis2000,
author = {Sam T. Roweis  and Lawrence K. Saul },
title = {Nonlinear Dimensionality Reduction by Locally Linear Embedding},
journal = {Science},
volume = {290},
number = {5500},
pages = {2323-2326},
year = {2000},
doi = {10.1126/science.290.5500.2323},
URL = {https://www.science.org/doi/abs/10.1126/science.290.5500.2323},
eprint = {https://www.science.org/doi/pdf/10.1126/science.290.5500.2323}
}

@article{Belkin2003,
    author = {Belkin, Mikhail and Niyogi, Partha},
    title = {Laplacian Eigenmaps for Dimensionality Reduction and Data Representation},
    journal = {Neural Computation},
    volume = {15},
    number = {6},
    pages = {1373-1396},
    year = {2003},
    month = {06},
    issn = {0899-7667},
    doi = {10.1162/089976603321780317},
    url = {https://doi.org/10.1162/089976603321780317},
    eprint = {https://direct.mit.edu/neco/article-pdf/15/6/1373/815527/089976603321780317.pdf},
}

@incollection{Belkin2007,
    author = {Belkin, Mikhail and Niyogi, Partha},
    editor = {Sch\"olkopf, Bernhard and Platt, John and Hofmann, Thomas},
    isbn = {9780262256919},
    title = {Convergence of Laplacian Eigenmaps},
    booktitle = {Advances in Neural Information Processing Systems 19: Proceedings of the 2006 Conference},
    publisher = {The MIT Press},
    year = {2007},
    month = {09},
    doi = {10.7551/mitpress/7503.003.0021},
    url = {https://doi.org/10.7551/mitpress/7503.003.0021},
    eprint = {https://direct.mit.edu/book/chapter-pdf/2288838/9780262256919_cap.pdf},
}

@article{Venna2006,
title = {Local multidimensional scaling},
journal = {Neural Networks},
volume = {19},
number = {6},
pages = {889-899},
year = {2006},
note = {Advances in Self Organising Maps - WSOM’05},
issn = {0893-6080},
doi = {https://doi.org/10.1016/j.neunet.2006.05.014},
url = {https://www.sciencedirect.com/science/article/pii/S0893608006000724},
author = {Jarkko Venna and Samuel Kaski}
}

@Article{Wolf2018,
author={Wolf, F. Alexander
and Angerer, Philipp
and Theis, Fabian J.},
title={SCANPY: large-scale single-cell gene expression data analysis},
journal={Genome Biology},
year={2018},
month={Feb},
day={06},
volume={19},
number={1},
pages={15},
issn={1474-760X},
doi={10.1186/s13059-017-1382-0},
url={https://doi.org/10.1186/s13059-017-1382-0}
}

@article{Paul2015,
title = {Transcriptional Heterogeneity and Lineage Commitment in Myeloid Progenitors},
journal = {Cell},
volume = {163},
number = {7},
pages = {1663--1677},
year = {2015},
issn = {0092-8674},
doi = {https://doi.org/10.1016/j.cell.2015.11.013},
url = {https://www.sciencedirect.com/science/article/pii/S0092867415014932},
author = {Paul, Franziska and Arkin, Ya'ara and Giladi, Amir and Jaitin, Diego Adhemar and Kenigsberg, Ephraim and Keren-Shaul, Hadas and Winter, Deborah and Lara-Astiaso, David and Gury, Meital and Weiner, Assaf and David, Eyal and Cohen, Nadav and Lauridsen, Felicia Kathrine Bratt and Haas, Simon and Schlitzer, Andreas and Mildner, Alexander and Ginhoux, Florent and Jung, Steffen and Trumpp, Andreas and Porse, Bo Torben and Tanay, Amos and Amit, Ido}
}

@article{Nestorowa2016,
    author = {Nestorowa, Sonia and Hamey, Fiona K. and Pijuan Sala, Blanca and Diamanti, Evangelia and Shepherd, Mairi and Laurenti, Elisa and Wilson, Nicola K. and Kent, David G. and G\"ottgens, Berthold},
    title = {A single-cell resolution map of mouse hematopoietic stem and progenitor cell differentiation},
    journal = {Blood},
    volume = {128},
    number = {8},
    pages = {e20--e31},
    year = {2016},
    month = {08},
    issn = {0006-4971},
    doi = {10.1182/blood-2016-05-716480},
    url = {https://doi.org/10.1182/blood-2016-05-716480},
    eprint = {https://ashpublications.org/blood/article-pdf/128/8/e20/1465712/e20.pdf},
}

@article{Pancreas2019,
    author = {Bastidas-Ponce, Aim{\'e}e and Tritschler, Sophie and Dony, Leander and Scheibner, Katharina and Tarquis-Medina, Marta and Salinno, Ciro and Schirge, Silvia and Burtscher, Ingo and B{\"o}ttcher, Anika and Theis, Fabian J. and Lickert, Heiko and Bakhti, Mostafa and Klein, Allon and Treutlein, Barbara},
    title = {Comprehensive single cell mRNA profiling reveals a detailed roadmap for pancreatic endocrinogenesis},
    journal = {Development},
    volume = {146},
    number = {12},
    pages = {dev173849},
    year = {2019},
    month = {06},
    issn = {0950-1991},
    doi = {10.1242/dev.173849},
    url = {https://doi.org/10.1242/dev.173849},
    eprint = {https://journals.biologists.com/dev/article-pdf/146/12/dev173849/3480158/dev173849.pdf},
}

@article{Shahan2022,
  author={Shahan, Rachel and Hsu, Che-Wei and Nolan, Trevor M. and Cole, Benjamin J. and Taylor, Isaiah W. and Greenstreet, Laura and Zhang, Stephen and Afanassiev, Anton and Vlot, Anna Hendrika Cornelia and Schiebinger, Geoffrey and Benfey, Philip N. and Ohler, Uwe},
  title={A single-cell Arabidopsis root atlas reveals developmental trajectories in wild-type and cell identity mutants},
  journal={Developmental Cell},
  volume={57},
  number={4},
  pages={543--560.e9},
  year={2022},
  doi={10.1016/j.devcel.2022.01.008},
  url={https://doi.org/10.1016/j.devcel.2022.01.008}
}

@article{Lymphoid2019,
  author={Satpathy, Ansuman T. and Granja, Jeffrey M. and Yost, Kathryn E. and Qi, Yanyan and Meschi, Francesca and McDermott, Geoffrey P. and Olsen, Brett N. and Mumbach, Maxwell R. and Pierce, Sarah E. and Corces, M. Ryan and Shah, Preyas and Bell, Jason C. and Jhutty, Darisha and Nemec, Corey M. and Wang, Jean and Wang, Li and Yin, Yifeng and Giresi, Paul G. and Chang, Anne Lynn S. and Zheng, Grace X. Y. and Greenleaf, William J. and Chang, Howard Y.},
  title={Massively parallel single-cell chromatin landscapes of human immune cell development and intratumoral T cell exhaustion},
  journal={Nature Biotechnology},
  volume={37},
  number={8},
  pages={925--936},
  year={2019},
  doi={10.1038/s41587-019-0206-z},
  url={https://doi.org/10.1038/s41587-019-0206-z}
}

@article{Donoho2003,
author = {Donoho, David L. and Grimes, Carrie},
title = {Hessian eigenmaps: Locally linear embedding techniques for high-dimensional data},
journal = {Proceedings of the National Academy of Sciences},
volume = {100},
number = {10},
pages = {5591-5596},
year = {2003},
doi = {10.1073/pnas.1031596100},
URL = {https://www.pnas.org/doi/abs/10.1073/pnas.1031596100},
eprint = {https://www.pnas.org/doi/pdf/10.1073/pnas.1031596100},
}

@techreport{Bernstein2000,
  title={Graph approximations to geodesics on embedded manifolds},
  author={Bernstein, Mira and de Silva, Vin and Langford, John C and Tenenbaum, Joshua B},
  year={2000},
  institution={Michigan State University}
}

@book{Levin2017,
author = {Levin, David A. and Peres, Yuval},
year = {2017},
month = {10},
pages = {},
title = {Markov Chains and Mixing Times},
publisher = {American Mathematical Society},
isbn = {9781470429621},
doi = {10.1090/mbk/107}
}

@article{Estrada2007,
title = {Statistical-mechanical approach to subgraph centrality in complex networks},
journal = {Chemical Physics Letters},
volume = {439},
number = {1},
pages = {247-251},
year = {2007},
issn = {0009-2614},
doi = {https://doi.org/10.1016/j.cplett.2007.03.098},
url = {https://www.sciencedirect.com/science/article/pii/S0009261407004058},
author = {Estrada, Ernesto and Hatano, Naomichi}
}

@inproceedings{Ma2020,
 author = {Ma, Zheng and Xuan, Junyu and Wang, Yu Guang and Li, Ming and Li\`{o}, Pietro},
 booktitle = {Advances in Neural Information Processing Systems},
 editor = {H. Larochelle and M. Ranzato and R. Hadsell and M.F. Balcan and H. Lin},
 pages = {16421--16433},
 publisher = {Curran Associates, Inc.},
 title = {Path Integral Based Convolution and Pooling for Graph Neural Networks},
 url = {https://proceedings.neurips.cc/paper_files/paper/2020/file/be53d253d6bc3258a8160556dda3e9b2-Paper.pdf},
 volume = {33},
 year = {2020}
}

@article{Hein2007,
author = {Hein, Matthias and Audibert, Jean-Yves and Luxburg, Ulrike von},
title = {Graph Laplacians and their Convergence on Random Neighborhood Graphs},
year = {2007},
issue_date = {12/1/2007},
publisher = {JMLR.org},
volume = {8},
issn = {1532-4435},
journal = {J. Mach. Learn. Res.},
month = dec,
pages = {1325–1370},
numpages = {46}
}

@article{Muller1959,
author = {Muller, Mervin E.},
title = {A note on a method for generating points uniformly on n-dimensional spheres},
year = {1959},
issue_date = {April 1959},
publisher = {Association for Computing Machinery},
address = {New York, NY, USA},
volume = {2},
number = {4},
issn = {0001-0782},
url = {https://doi.org/10.1145/377939.377946},
doi = {10.1145/377939.377946},
journal = {Commun. ACM},
month = apr,
pages = {19–20},
numpages = {2}
}

\newpage

\appendix

{\Huge \bfseries Appendix}

\section{Additional Experimental Results}
\label{app:additional-experiments}

This appendix extends the main-text experiments by reporting (i) the full
twelve-method baseline set (linear, local, geodesic, spectral, and diffusion
methods), (ii) the complete metric panel (trustworthiness, continuity, row-wise
Spearman/Pearson correlation with the geodesic ground truth, and, where
applicable, branch silhouette), and (iii) the geodesic ground truth under both
the analytic protocol, where a closed form exists, and the shortest-path
protocol of \citet{Moon2019} elsewhere. \Cref{app:analytic-validation}
establishes that the two protocols agree on the sphere and preserve the relative
ordering of diffusion methods on the Swiss roll, but that the shortest-path
protocol is biased toward methods whose dissimilarity construction shares its
kNN-graph backbone. We report row-wise correlations throughout rather than
global ones: across all datasets the two aggregations track each other closely
and never reorder the diffusion methods, so the global columns are omitted.
Subsequent subsections present per-dataset full-metric tables and discussion.

\subsection{Ground-Truth Geodesic Protocols and Validation}
\label{app:analytic-validation}

\paragraph{Protocols.}
We compute geodesic ground truth two ways. The \emph{analytic}
protocol uses a closed-form intrinsic distance: arc length in the
$(t, h)$ parameter domain for the Swiss roll variants, and the great-circle distance
\[
d_{\text{geo}}(p_i, p_j) = R\,\arccos\!\big(\langle p_i, p_j\rangle / R^2\big)
\]
for the sphere (with $p_i,p_j$ lying on the sphere of radius $R$, i.e.\
$\|p_i\|=\|p_j\|=R$, so $\langle p_i,p_j\rangle/R^2 \in [-1,1]$).
The \emph{shortest-path} protocol follows the
convention of \citet{Moon2019}: ground-truth distances are
computed as Dijkstra shortest paths on a 15-nearest-neighbor graph
built over the clean (noise-free) reference coordinates. An analytic
geodesic is available only for the Swiss roll (uniform and
non-uniform) and the sphere; on Swiss Hole the geodesic admits no
closed form (the hole must be routed around); the torus embedded in $\mathbb{R}^3$ has non-zero Gaussian curvature of both signs and admits no closed-form geodesic; on the DLA-generated artificial trees there is no natural ambient
parameterization. We therefore use the analytic protocol where it
exists and the shortest-path protocol elsewhere.\footnote{At the
distance level, Isomap and Shortest Path use the same
dissimilarity matrix by construction (classical Isomap is MDS on a
shortest-path-on-kNN matrix); they differ only in the embedding
step and so are reported separately at the embedding level.}

\paragraph{Agreement on the sphere.}
On the sphere the two protocols are essentially interchangeable.
\Cref{tab:protocol-agreement} reports row-wise Spearman of method
dissimilarity matrices against each ground truth: \entropath, DTNE,
HeatGeo, PHATE, and Diffusion Maps agree to within $\le 0.002$
across protocols, and rankings are identical. The reason is geometric:
the \emph{chord} (ambient Euclidean) distance $d_{\text{chord}} = \|p_i - p_j\|$
between two surface points is a strictly monotone function of the great-circle
distance, $d_{\text{chord}} = 2R\sin(d_{\text{geo}}/2R)$ for central angle
$\theta = d_{\text{geo}}/R \in [0,\pi]$. Spearman correlation is invariant under
monotone transformations, so a 15-NN graph built from clean samples recovers the
geodesic order with negligible error. The sphere therefore represents a best-case
scenario for the shortest-path protocol: when ambient and intrinsic geometry are
tightly coupled, the shortest-path proxy is effectively exact at the ranking level.

\paragraph{Protocol bias on the Swiss roll.}
The Swiss roll exposes a genuine bias: the shortest-path protocol
inflates the scores of methods whose dissimilarity construction shares
the kNN-graph backbone with the ground truth, while the analytic
protocol does not.

\begin{table}[t]
\centering
\small
\setlength{\tabcolsep}{5pt}
\caption{Distance-level row-wise Spearman correlation between method
  dissimilarity matrices and ground-truth geodesic distances under the
  analytic ($A$) and shortest-path ($SP$) protocols. The sphere is a
  \emph{negative control}: chord distance is a monotone function of
  great-circle distance and Spearman is invariant to monotone
  reparameterisation, so the two protocols yield identical rankings (exact to
  three decimals here; the $\le 0.002$ residual appears only in the Pearson
  columns of \Cref{tab:sphere-l1}). On the Swiss roll the protocols diverge:
  the shortest-path protocol favours methods whose dissimilarity is induced by
  the kNN graph. Shortest Path reaches $1.000$ on the non-uniform Swiss roll
  because its dissimilarity matrix coincides with the shortest-path ground
  truth used for evaluation --- the strongest form of kNN-backbone bias.
  Per-method standard deviations across 30 seeds are reported in the
  per-dataset tables.}
\label{tab:protocol-agreement}
\begin{tabular}{l cc c cc c cc}
\toprule
 & \multicolumn{2}{c}{Swiss Roll (unif.)} & & \multicolumn{2}{c}{Swiss Roll (non-unif.)} & & \multicolumn{2}{c}{Sphere} \\
\cmidrule(lr){2-3} \cmidrule(lr){5-6} \cmidrule(lr){8-9}
Method        & $A$   & $SP$  & & $A$   & $SP$  & & $A$   & $SP$  \\
\midrule
Euclidean     & 0.528 & 0.734 & & 0.899 & 0.977 & & 0.990 & 0.990 \\
Isomap / Shortest Path       & 0.829 & \textbf{0.993} & & \textbf{0.934} & \textbf{1.000} & & \textbf{0.994} & \textbf{0.994} \\
PHATE         & 0.806 & 0.936 & & 0.816 & 0.892 & & 0.993 & 0.993 \\
HeatGeo       & 0.791 & 0.962 & & 0.796 & 0.889 & & 0.992 & 0.992 \\
DTNE          & 0.833 & 0.958 & & 0.808 & 0.900 & & 0.991 & 0.991 \\
\entropath     & \textbf{0.877} & 0.935 & & 0.882 & 0.916 & & 0.981 & 0.981 \\
\bottomrule
\end{tabular}
\end{table}

On the Swiss roll under the analytic protocol \entropath achieves the
highest distance-level row-wise Spearman ($0.877$), ahead of DTNE
($0.833$) and Isomap/SP ($0.829$), and well ahead of PHATE ($0.806$)
and HeatGeo ($0.791$). Under the shortest-path protocol, by contrast,
Isomap and Shortest Path saturate at $\approx 0.993$ because they share
the construction of the ground truth itself, while HeatGeo and DTNE ---
which combine heat-kernel or random-walk powers with the same 15-NN
graph --- climb to $\approx 0.96$; \entropath is largely unchanged
($0.935$). On the sphere this distinction collapses because the ambient
and intrinsic geometries are tightly coupled. We therefore consider the
analytic protocol the primary ground truth where available.

\paragraph{Consequence for the shortest-path datasets.}
The four datasets without a closed-form geodesic---Torus, Swiss Hole, and
the two tree datasets---use the shortest-path geodesic as the sole ground
truth and therefore inherit the evaluation bias quantified on the Swiss roll,
where an analytic reference allows it to be measured directly. There, methods
whose dissimilarity most closely approximates shortest-path distances on the
same $k$NN graph benefit most from the shortest-path protocol: Isomap most, by
construction; HeatGeo and DTNE next; and \entropath least
(\Cref{tab:protocol-agreement}). For the shortest-path-only datasets no
analytic reference exists, so the magnitude of this bias cannot be measured
directly. The same mechanism nonetheless applies, and methods aligned with the
underlying $k$NN graph can be expected to receive the largest boost. Rather
than repeating this caveat for every table, we state it once here for all
shortest-path row-wise metrics reported in
\Cref{app:torus-full,app:tree-full}. This bias affects absolute correlations
but not the paper's principal conclusion: on the non-uniform Swiss roll
\entropath remains the highest-ranked diffusion method under both protocols
(\Cref{tab:swiss-roll-l1}), though PHATE and DTNE exchange places
under the shortest-path one.


\subsection{Synthetic benchmarks (full tables)}
\label{app:synthetic_benchmarks}

\subsubsection{Swiss Roll}
\label{app:swiss-roll-full}

\paragraph{Setup.}
The same Swiss-roll manifold is evaluated under two sampling regimes: uniform
sampling of the angular parameter $t$, and non-uniform sampling with
$t\sim\mathrm{Beta}(1,4)$, which oversamples the low-$t$ region. Both use
$N=2{,}000$ points, Gaussian noise $\sigma=0.05$, height scale $3$, and $30$
random seeds. The standard Swiss roll $(t\cos t,\, h,\, t\sin t)$ is developable
(isometric to a planar strip), so the intrinsic geodesic is Euclidean in
arc-length coordinates $(s, h)$ with
$s(t) = \tfrac12\big[t\sqrt{1+t^2} + \operatorname{arcsinh} t\big]$
\citep{Tenenbaum2000}:
\[
  d_{\mathrm{geo}}(p_i, p_j) = \sqrt{\,(s_i - s_j)^2 + (h_i - h_j)^2\,}.
\]

\begin{table}[t]
\centering
\scriptsize
\setlength{\tabcolsep}{4pt}
\caption{Swiss roll, \textbf{uniform} sampling: embedding-level metrics.
  Trustworthiness and continuity are protocol-independent; Spearman and Pearson
  (row) use the \emph{analytic} geodesic ground truth. Higher is better.
  \textbf{Bold} = best, \underline{underlined} = second-best in each column.
  Under uniform sampling the diffusion methods perform comparably on geodesic
  correlation, with Diffusion Maps and \entropath the strongest; the
  \entropath advantage emerges under non-uniform sampling
  (\Cref{tab:swiss-roll-nonuniform-l2}), the density-heterogeneous regime it
  targets.}
\label{tab:swiss-roll-uniform-l2}
\begin{tabular}{lcccc}
\toprule
Method              & Trustworth.\                  & Continuity                    & Spearman (row)                & Pearson (row) \\
\midrule
PCA                 & 0.941 $\pm$ 0.002             & \textbf{0.989 $\pm$ 0.000}             & 0.517 $\pm$ 0.008             & 0.524 $\pm$ 0.008 \\
MDS                 & 0.948 $\pm$ 0.002             & \underline{0.987 $\pm$ 0.001}             & 0.520 $\pm$ 0.009             & 0.529 $\pm$ 0.010 \\
UMAP                & \textbf{0.994 $\pm$ 0.001}             & 0.929 $\pm$ 0.005             & \textbf{0.947 $\pm$ 0.022}             & \textbf{0.937 $\pm$ 0.035} \\
$t$-SNE             & \underline{0.992 $\pm$ 0.001}             & 0.953 $\pm$ 0.004             & 0.510 $\pm$ 0.034             & 0.527 $\pm$ 0.038 \\
Isomap              & 0.935 $\pm$ 0.013             & 0.958 $\pm$ 0.005             & 0.777 $\pm$ 0.056             & 0.760 $\pm$ 0.059 \\
Shortest Path       & 0.944 $\pm$ 0.015             & 0.956 $\pm$ 0.005             & 0.810 $\pm$ 0.059             & 0.797 $\pm$ 0.063 \\
Laplacian Eigenmaps & 0.974 $\pm$ 0.011             & 0.938 $\pm$ 0.003             & 0.893 $\pm$ 0.018             & 0.883 $\pm$ 0.020 \\
Diffusion Maps      & 0.979 $\pm$ 0.009    & 0.937 $\pm$ 0.003             & \underline{0.903 $\pm$ 0.013}    & \underline{0.894 $\pm$ 0.015} \\
PHATE               & 0.938 $\pm$ 0.011             & 0.949 $\pm$ 0.004             & 0.802 $\pm$ 0.042             & 0.787 $\pm$ 0.046 \\
HeatGeo             & 0.938 $\pm$ 0.012             & 0.953 $\pm$ 0.004    & 0.769 $\pm$ 0.059             & 0.751 $\pm$ 0.064 \\
DTNE                & 0.951 $\pm$ 0.020             & 0.950 $\pm$ 0.004 & 0.829 $\pm$ 0.063             & 0.815 $\pm$ 0.068 \\
\entropath           & 0.961 $\pm$ 0.013 & 0.947 $\pm$ 0.003             & 0.879 $\pm$ 0.037 & 0.869 $\pm$ 0.041 \\
\bottomrule
\end{tabular}
\end{table}

\begin{table}[t]
\centering
\small
\setlength{\tabcolsep}{5pt}
\caption{Swiss roll distance-level row-wise correlation with the
  \emph{analytic} geodesic ground truth under uniform and non-uniform
  (Beta$(1,4)$) sampling. Under this unbiased evaluation protocol, \entropath
  achieves the highest correlation among the diffusion methods in both sampling
  regimes and, on the uniform Swiss roll, also exceeds the shortest-path-based
  Isomap/Shortest Path baseline. On the non-uniform Swiss roll, the ambient
  Euclidean and Isomap references achieve higher correlations.
  \Cref{tab:protocol-agreement} compares the analytic and shortest-path
  evaluation protocols, showing that the latter reorders the diffusion methods.
  \textbf{Bold} denotes the best result, \underline{underline} the second-best.}
\label{tab:swiss-roll-l1}
\begin{tabular}{l cc cc}
\toprule
 & \multicolumn{2}{c}{Uniform} & \multicolumn{2}{c}{Non-uniform} \\
\cmidrule(lr){2-3}\cmidrule(lr){4-5}
Method                 & Spearman & Pearson & Spearman & Pearson \\
\midrule
Euclidean              & 0.528 $\pm$ 0.008 & 0.537 $\pm$ 0.008 & \underline{0.899 $\pm$ 0.007} & \underline{0.885 $\pm$ 0.009} \\
Isomap / Shortest Path & 0.829 $\pm$ 0.032 & 0.827 $\pm$ 0.033 & \textbf{0.934 $\pm$ 0.006} & \textbf{0.919 $\pm$ 0.008} \\
Diffusion Maps         & 0.499 $\pm$ 0.016 & 0.537 $\pm$ 0.009 & 0.443 $\pm$ 0.025 & 0.502 $\pm$ 0.014 \\
PHATE                  & 0.806 $\pm$ 0.029 & 0.804 $\pm$ 0.029 & 0.816 $\pm$ 0.006 & 0.803 $\pm$ 0.009 \\
HeatGeo                & 0.791 $\pm$ 0.027 & 0.786 $\pm$ 0.029 & 0.796 $\pm$ 0.008 & 0.779 $\pm$ 0.009 \\
DTNE                   & \underline{0.833 $\pm$ 0.040} & \underline{0.823 $\pm$ 0.038} & 0.808 $\pm$ 0.007 & 0.782 $\pm$ 0.009 \\
\entropath             & \textbf{0.877 $\pm$ 0.023} & \textbf{0.864 $\pm$ 0.025} & 0.882 $\pm$ 0.014 & 0.854 $\pm$ 0.013 \\
\bottomrule
\end{tabular}
\end{table}

\begin{table}[t]
\centering
\scriptsize
\setlength{\tabcolsep}{4pt}
\caption{Swiss roll, \textbf{non-uniform} (Beta$(1,4)$) sampling:
  embedding-level metrics. Conventions as in \Cref{tab:swiss-roll-uniform-l2}.
  Among the diffusion methods, \entropath attains the best trustworthiness and
  the best geodesic correlation; its geodesic correlation drops only marginally
  from the uniform regime (Spearman $0.879\!\to\!0.875$).}
\label{tab:swiss-roll-nonuniform-l2}
\begin{tabular}{lcccc}
\toprule
Method              & Trustworth.\                  & Continuity                    & Spearman (row)                & Pearson (row) \\
\midrule
PCA                 & 0.968 $\pm$ 0.002             & \underline{0.993 $\pm$ 0.000}             & 0.876 $\pm$ 0.009             & 0.862 $\pm$ 0.011 \\
MDS                 & 0.986 $\pm$ 0.002             & 0.974 $\pm$ 0.003             & 0.880 $\pm$ 0.008             & 0.865 $\pm$ 0.010 \\
UMAP                & \underline{0.992 $\pm$ 0.002}             & 0.967 $\pm$ 0.005             & 0.751 $\pm$ 0.059             & 0.733 $\pm$ 0.055 \\
$t$-SNE             & \textbf{0.995 $\pm$ 0.001}             & 0.988 $\pm$ 0.002             & 0.887 $\pm$ 0.015             & 0.862 $\pm$ 0.015 \\
Isomap              & 0.976 $\pm$ 0.002             & \textbf{0.994 $\pm$ 0.000}             & \textbf{0.918 $\pm$ 0.007}             & \textbf{0.899 $\pm$ 0.010} \\
Shortest Path       & 0.991 $\pm$ 0.001             & 0.972 $\pm$ 0.004             & \underline{0.899 $\pm$ 0.008}             & \underline{0.882 $\pm$ 0.010} \\
Laplacian Eigenmaps & 0.949 $\pm$ 0.006             & 0.976 $\pm$ 0.002             & 0.716 $\pm$ 0.008             & 0.721 $\pm$ 0.009 \\
Diffusion Maps      & 0.957 $\pm$ 0.005 & 0.976 $\pm$ 0.002             & 0.725 $\pm$ 0.007             & 0.726 $\pm$ 0.009 \\
PHATE               & 0.945 $\pm$ 0.003             & 0.984 $\pm$ 0.001             & 0.741 $\pm$ 0.009             & 0.737 $\pm$ 0.010 \\
HeatGeo             & 0.943 $\pm$ 0.004             & 0.983 $\pm$ 0.002             & 0.714 $\pm$ 0.015             & 0.707 $\pm$ 0.015 \\
DTNE                & 0.949 $\pm$ 0.004             & 0.987 $\pm$ 0.001    & 0.805 $\pm$ 0.013 & 0.789 $\pm$ 0.013 \\
\entropath           & 0.981 $\pm$ 0.007    & 0.985 $\pm$ 0.001 & 0.875 $\pm$ 0.016    & 0.855 $\pm$ 0.016 \\
\bottomrule
\end{tabular}
\end{table}



\subsubsection{Sphere}
\label{app:sphere-full}

\paragraph{Setup.}
$N = 2{,}000$ points sampled uniformly on $S^2$ of radius $R = 1$ by
normalising i.i.d.\ Gaussian vectors, $x \sim \mathcal{N}(0, I_3)$,
$p = R\,x/\|x\|$ \citep{Muller1959}; Gaussian noise $\sigma = 0.05$ added
off-surface; $30$ random seeds. The sphere has constant positive Gaussian
curvature $K = 1/R^2$ and cannot be flattened isometrically to a plane,
making it a stringent test of how methods handle intrinsic curvature. Row
correlations use the closed-form great-circle (analytic) distance; the
shortest-path protocol agrees to $\le 0.002$ on every method
(\Cref{tab:protocol-agreement}), so we report analytic only.

\begin{figure}[t]
\centering
\includegraphics[width=\textwidth]{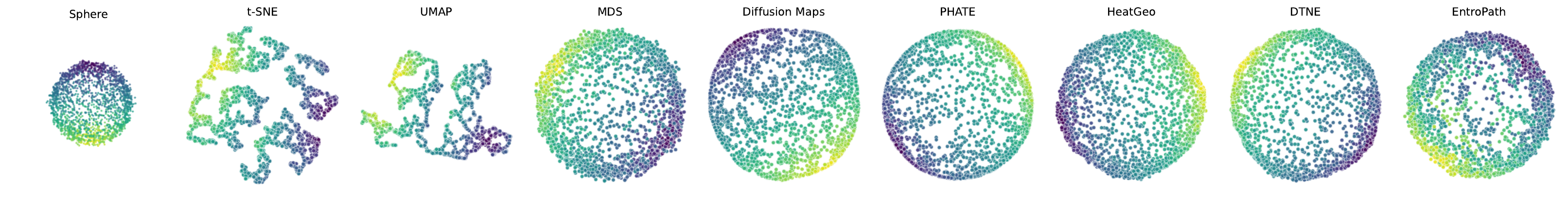}
\caption{Sphere embeddings ($N = 2{,}000$, $\sigma = 0.05$) coloured by
  polar angle $\theta$. $t$-SNE and UMAP tear the spherical topology into
  flat, fragmented layouts; MDS, Diffusion Maps, PHATE, HeatGeo, DTNE, and
  \entropath produce projection-like embeddings in which one hemisphere is
  occluded, with $\theta$ varying smoothly across the disc.}
\label{fig:sphere}
\end{figure}

\begin{table}[t]
\centering
\scriptsize
\setlength{\tabcolsep}{4pt}
\caption{Sphere: embedding-level metrics; row-wise correlations use the
  \emph{analytic} great-circle geodesic. Higher is better. \textbf{Bold} =
  best, \underline{underlined} = second-best in each column. Among the diffusion
  methods \entropath attains the highest trustworthiness, by a clear margin
  ($0.904$ vs.\ $\le 0.861$). UMAP and $t$-SNE reach nearly perfect
  trustworthiness by tearing the spherical topology into locally faithful
  planar patches, sacrificing global geometry. The diffusion methods cluster
  between $0.75$ and $0.78$ on the geodesic-correlation columns because chord
  distance is a monotone function of great-circle distance, so rank correlations
  nearly saturate for any method that respects the ambient geometry;
  differences there are typically $\le 0.01$.}
\label{tab:sphere-l2}
\begin{tabular}{lcccc}
\toprule
Method              & Trustworth.\                  & Continuity                    & Spearman (row)                & Pearson (row) \\
\midrule
PCA                 & 0.848 $\pm$ 0.002             & \underline{0.988 $\pm$ 0.000}             & 0.755 $\pm$ 0.006             & 0.751 $\pm$ 0.006 \\
MDS                 & 0.875 $\pm$ 0.013             & 0.979 $\pm$ 0.000             & \underline{0.771 $\pm$ 0.007}             & \underline{0.766 $\pm$ 0.008} \\
UMAP                & \textbf{0.996 $\pm$ 0.001}             & 0.963 $\pm$ 0.004             & 0.679 $\pm$ 0.027             & 0.686 $\pm$ 0.026 \\
$t$-SNE             & \underline{0.995 $\pm$ 0.002}             & 0.966 $\pm$ 0.001             & 0.682 $\pm$ 0.011             & 0.692 $\pm$ 0.010 \\
Isomap              & 0.848 $\pm$ 0.002             & \textbf{0.989 $\pm$ 0.000}             & 0.757 $\pm$ 0.006             & 0.752 $\pm$ 0.006 \\
Shortest Path       & 0.876 $\pm$ 0.014             & 0.980 $\pm$ 0.000             & \textbf{0.776 $\pm$ 0.007}             & \textbf{0.768 $\pm$ 0.008} \\
Laplacian Eigenmaps & 0.856 $\pm$ 0.004             & 0.987 $\pm$ 0.001             & 0.752 $\pm$ 0.007             & 0.747 $\pm$ 0.007 \\
Diffusion Maps      & 0.852 $\pm$ 0.003             & \underline{0.988 $\pm$ 0.001}    & 0.753 $\pm$ 0.007             & 0.748 $\pm$ 0.006 \\
PHATE               & 0.861 $\pm$ 0.005 & 0.981 $\pm$ 0.001             & 0.766 $\pm$ 0.008 & 0.757 $\pm$ 0.008 \\
HeatGeo             & 0.860 $\pm$ 0.006             & 0.982 $\pm$ 0.001 & 0.766 $\pm$ 0.006 & 0.759 $\pm$ 0.006 \\
DTNE                & 0.860 $\pm$ 0.006             & 0.979 $\pm$ 0.001             & 0.768 $\pm$ 0.007    & 0.759 $\pm$ 0.007 \\
\entropath           & 0.904 $\pm$ 0.021    & 0.977 $\pm$ 0.002             & 0.762 $\pm$ 0.008             & 0.761 $\pm$ 0.009 \\
\bottomrule
\end{tabular}
\end{table}

\begin{table}[t]
\centering
\small
\setlength{\tabcolsep}{5pt}
\caption{Sphere: distance-level row-wise correlation against the \emph{analytic}
  great-circle geodesic. \textbf{Bold} = best, \underline{underlined} =
  second-best in each column. All ambient-geometry-respecting methods cluster
  near $0.98$--$0.99$; \entropath is marginally below the leaders, consistent
  with the entropy term softening long-range path costs on a closed manifold.}
\label{tab:sphere-l1}
\begin{tabular}{lcc}
\toprule
Method        & Spearman (row)          & Pearson (row) \\
\midrule
Euclidean     & 0.990 $\pm$ 0.000       & 0.975 $\pm$ 0.000 \\
Isomap / Shortest Path        & \textbf{0.994 $\pm$ 0.000}       & \textbf{0.994 $\pm$ 0.000} \\
Diffusion Maps & 0.430 $\pm$ 0.013      & 0.518 $\pm$ 0.007 \\
PHATE         & \underline{0.993 $\pm$ 0.000} & 0.974 $\pm$ 0.001 \\
HeatGeo       & 0.992 $\pm$ 0.001 & \underline{0.991 $\pm$ 0.001} \\
DTNE          & 0.991 $\pm$ 0.001       & 0.974 $\pm$ 0.002 \\
\entropath     & 0.981 $\pm$ 0.003       & 0.975 $\pm$ 0.003 \\
\bottomrule
\end{tabular}
\end{table}


\subsubsection{Torus and Swiss Hole}
\label{app:torus-full}

\paragraph{Setup.}
The \emph{torus} has major radius $R = 3$ and minor radius $r = 1$, embedded in
$\mathbb{R}^3$ with periodic angular coordinates in $[0, 2\pi)$; it has genus $1$
and regions of both positive and negative Gaussian curvature, so it is not
developable. The \emph{Swiss hole} is the standard Swiss roll with a rectangular
patch removed, creating a boundary and a non-trivial fundamental group, so
geodesics must route around the missing region. Both use $N = 2{,}000$ points,
Gaussian noise $\sigma = 0.05$, and $30$ seeds. Neither admits a closed-form
geodesic, so the ground truth is shortest-path on a 15-NN graph over clean
coordinates; the standing caveat of \Cref{app:analytic-validation} applies.

\begin{figure}[t]
\centering
\includegraphics[width=\textwidth]{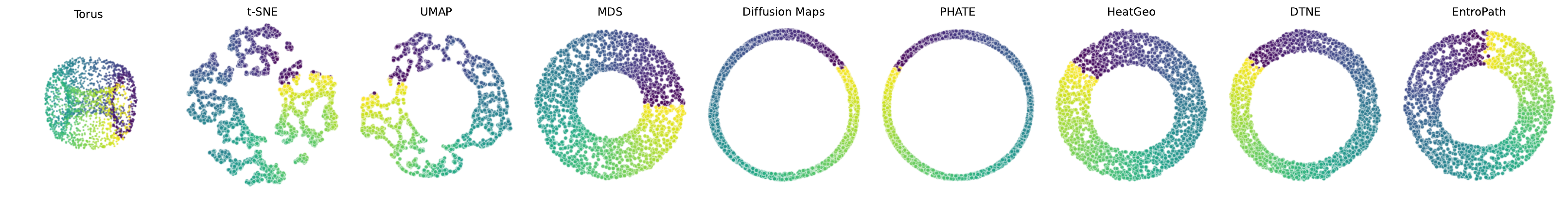}\\[0.3em]
\includegraphics[width=\textwidth]{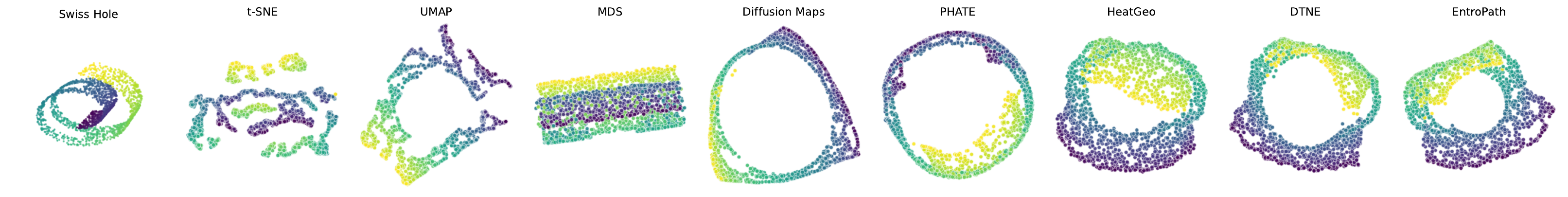}
\caption{Embeddings of (top) torus coloured by major angle $\theta$ (cyclic
  colormap) and (bottom) Swiss hole coloured by angular parameter $t$. Methods
  that preserve global topology recover the toroidal layout; the hole appears as
  a gap in embeddings that respect the manifold geometry.}
\label{fig:torus-hole}
\end{figure}

\begin{table}[t]
\centering
\scriptsize
\setlength{\tabcolsep}{4pt}
\caption{Torus: embedding-level metrics, shortest-path ground truth. \textbf{Bold}
  = best, \underline{underlined} = second-best in each column. All methods score
  highly on this uniformly sampled, locally flat manifold; see the consolidated
  discussion for interpretation.}
\label{tab:torus-l2}
\begin{tabular}{lcccc}
\toprule
Method              & Trustworth.\                  & Continuity                 & Spearman (row)                & Pearson (row) \\
\midrule
PCA                 & 0.976 $\pm$ 0.001             & \textbf{0.995 $\pm$ 0.000}          & 0.977 $\pm$ 0.000             & \textbf{0.972 $\pm$ 0.001} \\
MDS                 & 0.976 $\pm$ 0.001             & \underline{0.994 $\pm$ 0.000}          & 0.975 $\pm$ 0.001             & 0.971 $\pm$ 0.001 \\
UMAP                & \underline{0.993 $\pm$ 0.001}             & 0.979 $\pm$ 0.001          & 0.934 $\pm$ 0.012             & 0.928 $\pm$ 0.011 \\
$t$-SNE             & \textbf{0.995 $\pm$ 0.001}             & 0.980 $\pm$ 0.001          & 0.933 $\pm$ 0.007             & 0.930 $\pm$ 0.007 \\
Isomap              & 0.975 $\pm$ 0.001             & 0.992 $\pm$ 0.000          & \textbf{0.980 $\pm$ 0.000}             & 0.970 $\pm$ 0.001 \\
Shortest Path       & 0.976 $\pm$ 0.001             & \underline{0.994 $\pm$ 0.000}          & \underline{0.979 $\pm$ 0.000}             & \textbf{0.972 $\pm$ 0.000} \\
Laplacian Eigenmaps & 0.959 $\pm$ 0.002             & 0.977 $\pm$ 0.000          & 0.962 $\pm$ 0.001             & 0.937 $\pm$ 0.002 \\
Diffusion Maps      & 0.959 $\pm$ 0.002             & 0.977 $\pm$ 0.000          & 0.963 $\pm$ 0.001             & 0.937 $\pm$ 0.001 \\
PHATE               & 0.955 $\pm$ 0.001             & 0.976 $\pm$ 0.000          & 0.961 $\pm$ 0.001             & 0.935 $\pm$ 0.001 \\
HeatGeo             & 0.975 $\pm$ 0.001 & 0.991 $\pm$ 0.000 & 0.976 $\pm$ 0.001    & 0.963 $\pm$ 0.002 \\
DTNE                & 0.971 $\pm$ 0.001             & 0.984 $\pm$ 0.001          & 0.972 $\pm$ 0.001             & 0.951 $\pm$ 0.002 \\
\entropath           & 0.976 $\pm$ 0.001    & 0.991 $\pm$ 0.001 & 0.974 $\pm$ 0.003 & 0.964 $\pm$ 0.003 \\
\bottomrule
\end{tabular}
\end{table}

\begin{table}[t]
\centering
\small
\setlength{\tabcolsep}{5pt}
\caption{Torus: distance-level row-wise correlation against the shortest-path
  ground truth. \textbf{Bold} = best, \underline{underlined} = second-best in
  each column.}
\label{tab:torus-l1}
\begin{tabular}{lcc}
\toprule
Method        & Spearman (row)          & Pearson (row) \\
\midrule
Euclidean     & 0.983 $\pm$ 0.000       & 0.982 $\pm$ 0.000 \\
Isomap / Shortest Path        & \textbf{0.999 $\pm$ 0.000}       & \textbf{0.999 $\pm$ 0.000} \\
Diffusion Maps & 0.464 $\pm$ 0.014      & 0.522 $\pm$ 0.006 \\
PHATE         & 0.960 $\pm$ 0.001       & 0.939 $\pm$ 0.002 \\
HeatGeo       & \underline{0.987 $\pm$ 0.002} & \underline{0.984 $\pm$ 0.002} \\
DTNE          & 0.984 $\pm$ 0.001 & 0.971 $\pm$ 0.002 \\
\entropath     & 0.976 $\pm$ 0.006       & 0.968 $\pm$ 0.006 \\
\bottomrule
\end{tabular}
\end{table}

\begin{table}[t]
\centering
\scriptsize
\setlength{\tabcolsep}{4pt}
\caption{Swiss hole: embedding-level metrics, shortest-path ground truth.
  \textbf{Bold} = best, \underline{underlined} = second-best in each column.
  Among the diffusion methods, HeatGeo achieves the highest geodesic-correlation
  scores and DTNE and \entropath tie for the highest trustworthiness ($0.970$).}
\label{tab:swiss-hole-l2}
\begin{tabular}{lcccc}
\toprule
Method              & Trustworth.\               & Continuity                    & Spearman (row)                & Pearson (row) \\
\midrule
PCA                 & 0.951 $\pm$ 0.001          & \textbf{0.990 $\pm$ 0.000}             & 0.683 $\pm$ 0.056             & 0.699 $\pm$ 0.056 \\
MDS                 & 0.959 $\pm$ 0.002          & \underline{0.988 $\pm$ 0.001}             & 0.683 $\pm$ 0.058             & 0.700 $\pm$ 0.059 \\
UMAP                & \textbf{0.994 $\pm$ 0.001}          & 0.937 $\pm$ 0.006             & 0.802 $\pm$ 0.065             & 0.815 $\pm$ 0.069 \\
$t$-SNE             & \textbf{0.994 $\pm$ 0.001}          & 0.967 $\pm$ 0.002             & 0.632 $\pm$ 0.048             & 0.656 $\pm$ 0.050 \\
Isomap              & 0.955 $\pm$ 0.015          & 0.959 $\pm$ 0.006             & \underline{0.943 $\pm$ 0.024}             & \underline{0.942 $\pm$ 0.020} \\
Shortest Path       & 0.964 $\pm$ 0.011          & 0.958 $\pm$ 0.005             & \textbf{0.948 $\pm$ 0.022}             & \textbf{0.948 $\pm$ 0.019} \\
Laplacian Eigenmaps & 0.966 $\pm$ 0.010          & 0.943 $\pm$ 0.003             & 0.866 $\pm$ 0.025             & 0.873 $\pm$ 0.019 \\
Diffusion Maps      & 0.969 $\pm$ 0.009          & 0.943 $\pm$ 0.003             & 0.864 $\pm$ 0.026             & 0.871 $\pm$ 0.020 \\
PHATE               & 0.953 $\pm$ 0.012          & 0.948 $\pm$ 0.004             & 0.907 $\pm$ 0.018             & 0.908 $\pm$ 0.010 \\
HeatGeo             & 0.966 $\pm$ 0.008          & 0.956 $\pm$ 0.004    & 0.931 $\pm$ 0.014    & 0.934 $\pm$ 0.012 \\
DTNE                & 0.970 $\pm$ 0.012 & 0.951 $\pm$ 0.004 & 0.920 $\pm$ 0.026 & 0.924 $\pm$ 0.019 \\
\entropath           & 0.970 $\pm$ 0.011 & 0.951 $\pm$ 0.004 & 0.910 $\pm$ 0.030             & 0.917 $\pm$ 0.024 \\
\bottomrule
\end{tabular}
\end{table}

\begin{table}[t]
\centering
\small
\setlength{\tabcolsep}{5pt}
\caption{Swiss hole: distance-level row-wise correlation against the shortest-path
  ground truth. \textbf{Bold} = best, \underline{underlined} = second-best in
  each column.}
\label{tab:swiss-hole-l1}
\begin{tabular}{lcc}
\toprule
Method        & Spearman (row)          & Pearson (row) \\
\midrule
Euclidean     & 0.700 $\pm$ 0.059       & 0.720 $\pm$ 0.060 \\
Isomap / Shortest Path       & \textbf{0.991 $\pm$ 0.023}       & \textbf{0.992 $\pm$ 0.021} \\
Diffusion Maps & 0.350 $\pm$ 0.021      & 0.477 $\pm$ 0.019 \\
PHATE         & 0.925 $\pm$ 0.017       & 0.926 $\pm$ 0.008 \\
HeatGeo       & \underline{0.958 $\pm$ 0.021} & \underline{0.961 $\pm$ 0.016} \\
DTNE          & 0.950 $\pm$ 0.027 & 0.949 $\pm$ 0.015 \\
\entropath     & 0.935 $\pm$ 0.030       & 0.933 $\pm$ 0.029 \\
\bottomrule
\end{tabular}
\end{table}


\subsubsection{Artificial Trees}
\label{app:tree-full}

\paragraph{Setup.}
DLA-generated branching structures of \citet{Moon2019}. The
\emph{sparse tree} has $B = 6$ branches of length $\ell = 500$ in
$\mathbb{R}^{10}$ ambient space with noise $\sigma = 2.0$; the
\emph{dense tree} matches the original PHATE configuration with
$B = 20$ branches of length $\ell = 100$ in $\mathbb{R}^{100}$
with noise $\sigma = 4.0$. $N = B \times \ell$ samples per draw,
$30$ seeds. Ground truth: shortest-path on 15-NN over clean
coordinates (no natural ambient parameterization exists), so the
standing caveat of \Cref{app:analytic-validation} applies.

We additionally report a \emph{branch silhouette} score
(silhouette coefficient of the 2D embedding under the
ground-truth branch-identity labels): a higher value means the
embedding more cleanly separates branches in 2D. This is the
direct measure of branch-structure recovery used in
trajectory-inference settings.

\begin{table}[t]
\centering
\tiny
\setlength{\tabcolsep}{3pt}
\caption{Sparse tree ($B=6$, $\ell=500$, $\mathbb{R}^{10}$, $\sigma=2.0$,
  $30$ seeds): embedding-level metrics and branch silhouette, shortest-path
  ground truth. \textbf{Bold} = best, \underline{underlined} = second-best in
  each column. \entropath attains the highest geodesic-correlation scores;
  HeatGeo the highest branch silhouette.}
\label{tab:sparse-tree-l2}
\begin{tabular}{lccccc}
\toprule
Method              & Trustworth.\                  & Continuity                 & Spearman (row)                & Pearson (row)                 & Branch silh.\ \\
\midrule
PCA                 & 0.937 $\pm$ 0.024             & 0.975 $\pm$ 0.008          & 0.717 $\pm$ 0.068             & 0.740 $\pm$ 0.063             & 0.264 $\pm$ 0.077 \\
MDS                 & 0.969 $\pm$ 0.009             & 0.967 $\pm$ 0.020          & 0.748 $\pm$ 0.052             & 0.772 $\pm$ 0.050             & 0.306 $\pm$ 0.067 \\
UMAP                & \underline{0.996 $\pm$ 0.001}             & 0.987 $\pm$ 0.002          & 0.582 $\pm$ 0.081             & 0.600 $\pm$ 0.085             & \underline{0.418 $\pm$ 0.069} \\
$t$-SNE             & \textbf{0.997 $\pm$ 0.001}             & \textbf{0.994 $\pm$ 0.001}          & 0.734 $\pm$ 0.047             & 0.769 $\pm$ 0.043             & 0.389 $\pm$ 0.030 \\
Isomap              & 0.970 $\pm$ 0.017             & 0.986 $\pm$ 0.005          & 0.796 $\pm$ 0.055             & 0.812 $\pm$ 0.048             & 0.277 $\pm$ 0.079 \\
Shortest Path       & 0.984 $\pm$ 0.005             & 0.964 $\pm$ 0.022          & 0.835 $\pm$ 0.050             & 0.847 $\pm$ 0.047             & 0.328 $\pm$ 0.064 \\
Laplacian Eigenmaps & 0.979 $\pm$ 0.019             & 0.990 $\pm$ 0.003          & 0.770 $\pm$ 0.056             & 0.762 $\pm$ 0.061             & 0.326 $\pm$ 0.114 \\
Diffusion Maps      & 0.980 $\pm$ 0.018             & 0.991 $\pm$ 0.003          & 0.767 $\pm$ 0.055             & 0.758 $\pm$ 0.060             & 0.331 $\pm$ 0.110 \\
PHATE               & 0.980 $\pm$ 0.006             & 0.989 $\pm$ 0.002          & 0.404 $\pm$ 0.061             & 0.500 $\pm$ 0.051             & 0.246 $\pm$ 0.057 \\
HeatGeo             & 0.992 $\pm$ 0.002    & \underline{0.993 $\pm$ 0.001} & 0.770 $\pm$ 0.050             & 0.808 $\pm$ 0.043             & \textbf{0.428 $\pm$ 0.048} \\
DTNE                & 0.989 $\pm$ 0.003             & 0.992 $\pm$ 0.003          & \underline{0.859 $\pm$ 0.049} & \underline{0.874 $\pm$ 0.044} & 0.398 $\pm$ 0.070 \\
\entropath           & 0.991 $\pm$ 0.003 & \underline{0.993 $\pm$ 0.002} & \textbf{0.869 $\pm$ 0.046}    & \textbf{0.884 $\pm$ 0.041}    & 0.374 $\pm$ 0.046 \\
\bottomrule
\end{tabular}
\end{table}

\begin{table}[t]
\centering
\small
\setlength{\tabcolsep}{5pt}
\caption{Sparse tree: distance-level row-wise correlation against the
  shortest-path ground truth. \textbf{Bold} = best, \underline{underlined} =
  second-best in each column. \entropath leads both columns, exceeding the
  Isomap/Shortest Path baseline.}
\label{tab:sparse-tree-l1}
\begin{tabular}{lcc}
\toprule
Method        & Spearman (row)          & Pearson (row) \\
\midrule
Euclidean     & 0.806 $\pm$ 0.047       & 0.830 $\pm$ 0.041 \\
Isomap / Shortest Path        & 0.917 $\pm$ 0.042       & 0.921 $\pm$ 0.038 \\
Diffusion Maps & 0.491 $\pm$ 0.088      & 0.543 $\pm$ 0.044 \\
PHATE         & 0.318 $\pm$ 0.070       & 0.615 $\pm$ 0.043 \\
HeatGeo       & 0.918 $\pm$ 0.040       & 0.886 $\pm$ 0.026 \\
DTNE          & \underline{0.927 $\pm$ 0.039} & \underline{0.921 $\pm$ 0.031} \\
\entropath     & \textbf{0.934 $\pm$ 0.037} & \textbf{0.935 $\pm$ 0.033} \\
\bottomrule
\end{tabular}
\end{table}

\begin{table}[t]
\centering
\tiny
\setlength{\tabcolsep}{3pt}
\caption{Dense tree ($B=20$, $\ell=100$, $\mathbb{R}^{100}$, $\sigma=4.0$,
  $30$ seeds): embedding-level metrics and branch silhouette, shortest-path
  ground truth. \textbf{Bold} = best, \underline{underlined} = second-best in
  each column. In this high-dimensional, densely branching regime DTNE leads the
  geodesic-correlation columns with \entropath second; among the diffusion
  methods HeatGeo leads trustworthiness and branch silhouette.}
\label{tab:dense-tree-l2}
\begin{tabular}{lccccc}
\toprule
Method              & Trustworth.\                  & Continuity                    & Spearman (row)                & Pearson (row)                 & Branch silh.\ \\
\midrule
PCA                 & 0.807 $\pm$ 0.023             & 0.926 $\pm$ 0.010             & 0.621 $\pm$ 0.040             & 0.642 $\pm$ 0.033             & -0.020 $\pm$ 0.023 \\
MDS                 & 0.812 $\pm$ 0.019             & 0.856 $\pm$ 0.016             & 0.650 $\pm$ 0.023             & 0.647 $\pm$ 0.020             & -0.071 $\pm$ 0.019 \\
UMAP                & \textbf{0.989 $\pm$ 0.002}             & 0.972 $\pm$ 0.004             & 0.606 $\pm$ 0.034             & 0.618 $\pm$ 0.032             & \textbf{0.480 $\pm$ 0.028} \\
$t$-SNE             & \textbf{0.989 $\pm$ 0.002}             & \textbf{0.975 $\pm$ 0.003}             & 0.628 $\pm$ 0.028             & 0.669 $\pm$ 0.022             & \underline{0.468 $\pm$ 0.021} \\
Isomap              & 0.820 $\pm$ 0.021             & 0.943 $\pm$ 0.007             & 0.711 $\pm$ 0.034             & 0.723 $\pm$ 0.028             & -0.021 $\pm$ 0.022 \\
Shortest Path       & 0.843 $\pm$ 0.014             & 0.879 $\pm$ 0.014             & 0.751 $\pm$ 0.020             & 0.746 $\pm$ 0.016             & -0.035 $\pm$ 0.017 \\
Laplacian Eigenmaps & 0.892 $\pm$ 0.017             & 0.962 $\pm$ 0.005             & 0.697 $\pm$ 0.037             & 0.674 $\pm$ 0.043             & 0.061 $\pm$ 0.040 \\
Diffusion Maps      & 0.893 $\pm$ 0.020 & 0.962 $\pm$ 0.005 & 0.701 $\pm$ 0.033             & 0.682 $\pm$ 0.033             & 0.074 $\pm$ 0.045 \\
PHATE               & 0.779 $\pm$ 0.029             & 0.916 $\pm$ 0.010             & 0.349 $\pm$ 0.070             & 0.405 $\pm$ 0.064             & -0.106 $\pm$ 0.022 \\
HeatGeo             & 0.953 $\pm$ 0.008    & \underline{0.973 $\pm$ 0.008}    & 0.740 $\pm$ 0.025             & 0.757 $\pm$ 0.021             & 0.191 $\pm$ 0.013 \\
DTNE                & 0.872 $\pm$ 0.012             & 0.962 $\pm$ 0.003 & \textbf{0.829 $\pm$ 0.018}    & \textbf{0.827 $\pm$ 0.015}    & -0.006 $\pm$ 0.015 \\
\entropath           & 0.879 $\pm$ 0.041             & 0.953 $\pm$ 0.023             & \underline{0.785 $\pm$ 0.022} & \underline{0.787 $\pm$ 0.022} & 0.019 $\pm$ 0.075 \\
\bottomrule
\end{tabular}
\end{table}

\begin{table}[t]
\centering
\small
\setlength{\tabcolsep}{5pt}
\caption{Dense tree: distance-level row-wise correlation against the
  shortest-path ground truth. \textbf{Bold} = best, \underline{underlined} =
  second-best in each column. DTNE leads Spearman, HeatGeo leads Pearson;
  \entropath is competitive but exhibits higher variance.}
\label{tab:dense-tree-l1}
\begin{tabular}{lcc}
\toprule
Method        & Spearman (row)          & Pearson (row) \\
\midrule
Euclidean     & 0.828 $\pm$ 0.016       & 0.850 $\pm$ 0.014 \\
Isomap / Shortest Path        & 0.924 $\pm$ 0.010       & 0.923 $\pm$ 0.010 \\
Diffusion Maps & 0.846 $\pm$ 0.026      & 0.859 $\pm$ 0.018 \\
PHATE         & 0.358 $\pm$ 0.097       & 0.499 $\pm$ 0.074 \\
HeatGeo       & \underline{0.954 $\pm$ 0.006} & \textbf{0.961 $\pm$ 0.005} \\
DTNE          & \textbf{0.961 $\pm$ 0.004} & \underline{0.956 $\pm$ 0.004} \\
\entropath     & 0.947 $\pm$ 0.022       & 0.918 $\pm$ 0.049 \\
\bottomrule
\end{tabular}
\end{table}

\begin{figure}[t]
\centering
\includegraphics[width=\textwidth]{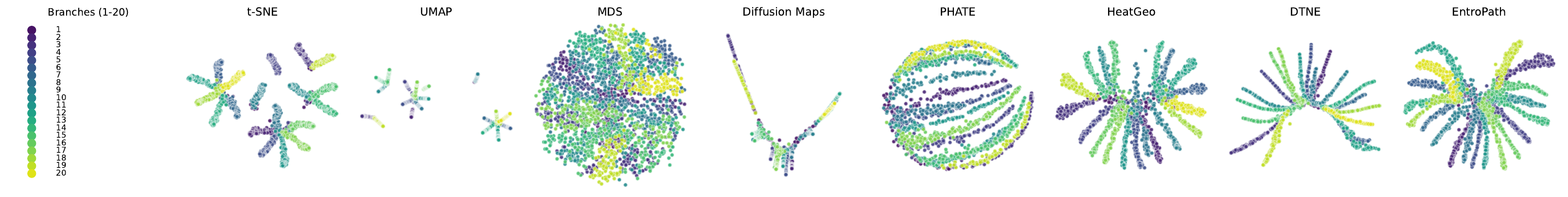}
\caption{Embeddings of the dense artificial tree, coloured by branch identity.
  Path-integral methods (DTNE, HeatGeo, \entropath) partially recover the
  branch structure, while PHATE collapses it (Table~\ref{tab:dense-tree-l2}).}
\label{fig:densetree}
\end{figure}


\paragraph{Discussion of synthetic benchmarks.}
Several patterns recur across the synthetic manifolds. First, under the
analytic protocol --- the most direct test of geodesic preservation, available
on the Swiss roll and sphere --- \entropath achieves the highest distance-level
fidelity among the diffusion methods on the Swiss roll in both sampling regimes
(uniform Spearman row $0.877$ vs.\ DTNE $0.833$, PHATE $0.806$, HeatGeo $0.791$;
non-uniform $0.882$ vs.\ PHATE $0.816$, DTNE $0.808$, HeatGeo $0.796$). Second,
density heterogeneity is where the maximum entropy walk earns its advantage:
under the non-uniform Swiss roll the non-MERW methods (Diffusion Maps, PHATE,
HeatGeo, DTNE) show substantially larger declines in embedding-level analytic
correlation than \entropath, which drops only marginally (Spearman row
$0.879\!\to\!0.875$). 

Where density is constant --- uniform Swiss roll and torus, both uniformly
sampled and locally flat --- this leverage disappears: the diffusion methods
perform comparably, and \entropath is competitive. This is
consistent with the path-uniformization mechanism (\Cref{rem:bottleneck})
conferring an advantage only when sampling density varies.

Third, the shortest-path protocol systematically favours methods aligned with
the kNN graph, quantified on the Swiss roll where an analytic reference exists
(\Cref{tab:protocol-agreement}): on uniform sampling the inflation is enough to
reverse the analytic ordering of \entropath relative to HeatGeo and DTNE at the
distance level, but on the non-uniform benchmark the \entropath advantage
survives both protocols. The sphere illustrates the opposite limit: chord and
great-circle distance are related by the strictly monotone
$d_{\mathrm{chord}}=2R\sin(d_{\mathrm{geo}}/2R)$, so rank correlations
near-saturate for any ambient-geometry-respecting method (the diffusion methods
cluster within $[0.75,0.78]$ on Spearman) and the two protocols agree to
$\le 0.002$ --- the empirical baseline for protocol equivalence. On the sphere
\entropath instead distinguishes itself on trustworthiness, leading by a
substantial margin ($0.904$ vs.\ $\le 0.861$ for the next-best diffusion
method) while ceding $\approx 0.01$ on geodesic Spearman.

On the Swiss hole, HeatGeo achieves the best geodesic-correlation metrics
(Spearman $0.931$, Pearson $0.934$), with DTNE and \entropath tying for the
best trustworthiness ($0.970$); the boundary forces geodesics to route around
the hole, which HeatGeo's heat-kernel distances appear to capture slightly more
accurately than the MERW path ensemble. As on the torus, the shortest-path bias
likely flatters the kNN-aligned methods here, though without an analytic
reference we cannot quantify it. We include the torus and Swiss hole as evidence
that \entropath remains competitive on non-trivial topology, not as benchmarks
where it leads.

The two tree benchmarks present opposite outcomes. On the sparse tree
\entropath leads both distance-level and embedding-level geodesic-correlation
columns among diffusion methods (embedding Spearman row $0.869$ vs.\ DTNE
$0.859$, HeatGeo $0.770$). On the dense tree ($20$ short branches in
$\mathbb{R}^{100}$) DTNE leads the embedding-level geodesic columns with
\entropath second, and DTNE and HeatGeo split the distance-level columns. A plausible explanation is \entropath's entropy-based diffusion-time selection:
the von Neumann entropy of $T^k$ is estimated from far fewer samples per branch
in the dense setting ($100$ vs.\ $500$), which would yield noisier knee-point
estimates and higher seed-to-seed variance --- consistent with \entropath's
trustworthiness standard deviation rising from $0.003$ on the sparse tree to
$0.040$ on the dense tree. DTNE and HeatGeo, using fixed-bandwidth heat kernels, are not subject to this estimation noise; we return to this as a limitation in \Cref{sec:conclusion}.
PHATE collapses on both tree configurations (embedding-level $\rho\approx0.4$
sparse, $\approx0.35$ dense), consistent with the failure mode reported by
\citet{Wei2025}. Together these results support the path-uniformization claim
(\Cref{rem:bottleneck}) and the short-time geodesic approximation of
\Cref{thm:geodesic}, while delineating where the entropy-based depth selection
is reliable.


\subsection{Clustering evaluation}
\label{app:clustering}
\entropath is designed to preserve manifold geometry---geodesic distances on the
data manifold---rather than to maximise the separation of discrete clusters. A
natural question is whether this geometric objective comes at the expense of
performance on standard clustering tasks. We therefore evaluate \entropath as a
general-purpose two-dimensional embedding against eight baselines, and find that
it remains competitive without any clustering-specific tuning.

\paragraph{Datasets.}
We use three datasets that span a spectrum from continuous geometry to discrete
clusters: a synthetic branching \emph{Tree} (a continuous one-dimensional
manifold with five branches, generated by diffusion-limited aggregation), the
\emph{MNIST} handwritten digits (ten discrete classes that nonetheless carry
strong intra-class manifold structure), and \emph{PBMC}, a single-cell RNA-seq
dataset of peripheral blood mononuclear cells with eight annotated cell types.
This ordering---continuous manifold, clusters-with-structure, discrete cell
types---lets us read \entropath's behaviour as a function of how cluster-like the
target structure is.

\paragraph{Protocol.}
Following the clustering protocol of \citet{Huguet2023}, each method
embeds the data into two dimensions; the embedding is standardised
(zero mean, unit variance per axis) and clustered with $k$-means, with the number
of clusters fixed to the number of ground-truth classes. We report the mean and
standard deviation over 30 random seeds. As headline metrics we use
\emph{homogeneity} and the \emph{adjusted mutual information} (aMI) between the
$k$-means assignment and the ground-truth labels.

Baselines are run at each method's published per-dataset configuration: for the
diffusion-based methods this means the manifold neighbourhood
($k_{\mathrm{nn}}=10$) on the synthetic Tree and the single-cell clustering
neighbourhood ($k_{\mathrm{nn}}=5$, with PCA to $40$ components) on MNIST and
PBMC. \entropath uses a single fixed recipe across all datasets ($k_{\mathrm{nn}}=15$,
$\alpha$-decay kernel, k-means landmarks with $M=2000$ auto-enabled for
$n>2000$). Under this rule landmarks are active only on the Tree ($n=2500$);
PBMC ($n=1319$) and MNIST ($n=898$) are embedded at full rank. We emphasise that absolute clustering scores are sensitive to protocol choices (standardisation, dataset construction, and neighbourhood size); we therefore re-evaluate every method under this single standardised protocol rather than
quoting previously published scores, and treat the relative ordering of methods
as the quantity of interest.

\begin{table}[t]
  \centering
  \small
  \setlength{\tabcolsep}{4pt}
  \caption{Clustering quality (mean\,$\pm$\,std over 30 seeds). Homogeneity and
  adjusted mutual information (aMI) of $k$-means on the 2D embedding. Best per
  column in \textbf{bold}. \entropath leads the continuous Tree manifold ---
  with the lowest variance among the competitive methods --- is competitive on
  MNIST, and is mid-table on the cluster-like PBMC data, where methods that
  produce more compact cluster separation lead.}
  \label{tab:clustering-main}
  \begin{tabular}{l cc cc cc}
    \toprule
     & \multicolumn{2}{c}{\textbf{Tree}} & \multicolumn{2}{c}{\textbf{PBMC}}
       & \multicolumn{2}{c}{\textbf{MNIST}} \\
    \cmidrule(lr){2-3}\cmidrule(lr){4-5}\cmidrule(lr){6-7}
    Method & Homog. & aMI & Homog. & aMI & Homog. & aMI \\
    \midrule
    t-SNE          & $0.621{\scriptstyle\,\pm\,0.092}$ & $0.623{\scriptstyle\,\pm\,0.093}$ & $0.514{\scriptstyle\,\pm\,0.020}$ & $0.450{\scriptstyle\,\pm\,0.018}$ & $\mathbf{0.880}{\scriptstyle\,\pm\,0.000}$ & $\mathbf{0.879}{\scriptstyle\,\pm\,0.000}$ \\
    UMAP           & $0.700{\scriptstyle\,\pm\,0.066}$ & $0.709{\scriptstyle\,\pm\,0.063}$ & $0.140{\scriptstyle\,\pm\,0.018}$ & $0.113{\scriptstyle\,\pm\,0.016}$ & $0.851{\scriptstyle\,\pm\,0.014}$ & $0.852{\scriptstyle\,\pm\,0.013}$ \\
    Isomap         & $0.607{\scriptstyle\,\pm\,0.038}$ & $0.630{\scriptstyle\,\pm\,0.037}$ & $0.155{\scriptstyle\,\pm\,0.001}$ & $0.139{\scriptstyle\,\pm\,0.001}$ & $0.665{\scriptstyle\,\pm\,0.005}$ & $0.663{\scriptstyle\,\pm\,0.004}$ \\
    Metric MDS     & $0.654{\scriptstyle\,\pm\,0.067}$ & $0.663{\scriptstyle\,\pm\,0.065}$ & $0.058{\scriptstyle\,\pm\,0.010}$ & $0.040{\scriptstyle\,\pm\,0.009}$ & $0.582{\scriptstyle\,\pm\,0.012}$ & $0.574{\scriptstyle\,\pm\,0.013}$ \\
    PHATE          & $0.555{\scriptstyle\,\pm\,0.077}$ & $0.557{\scriptstyle\,\pm\,0.078}$ & $0.725{\scriptstyle\,\pm\,0.011}$ & $0.703{\scriptstyle\,\pm\,0.016}$ & $0.732{\scriptstyle\,\pm\,0.006}$ & $0.734{\scriptstyle\,\pm\,0.006}$ \\
    Diffusion Maps & $0.650{\scriptstyle\,\pm\,0.046}$ & $0.672{\scriptstyle\,\pm\,0.037}$ & $0.325{\scriptstyle\,\pm\,0.002}$ & $0.339{\scriptstyle\,\pm\,0.002}$ & $0.613{\scriptstyle\,\pm\,0.004}$ & $0.678{\scriptstyle\,\pm\,0.008}$ \\
    HeatGeo        & $0.636{\scriptstyle\,\pm\,0.084}$ & $0.640{\scriptstyle\,\pm\,0.085}$ & $\mathbf{0.730}{\scriptstyle\,\pm\,0.006}$ & $0.633{\scriptstyle\,\pm\,0.005}$ & $0.842{\scriptstyle\,\pm\,0.002}$ & $0.840{\scriptstyle\,\pm\,0.002}$ \\
    DTNE           & $0.676{\scriptstyle\,\pm\,0.073}$ & $0.685{\scriptstyle\,\pm\,0.072}$ & $0.691{\scriptstyle\,\pm\,0.003}$ & $\mathbf{0.707}{\scriptstyle\,\pm\,0.001}$ & $0.733{\scriptstyle\,\pm\,0.002}$ & $0.734{\scriptstyle\,\pm\,0.005}$ \\
    \midrule
    \entropath      & $\mathbf{0.707}{\scriptstyle\,\pm\,0.049}$ & $\mathbf{0.717}{\scriptstyle\,\pm\,0.047}$ & $0.514{\scriptstyle\,\pm\,0.016}$ & $0.443{\scriptstyle\,\pm\,0.014}$ & $0.785{\scriptstyle\,\pm\,0.001}$ & $0.786{\scriptstyle\,\pm\,0.002}$ \\
    \bottomrule
  \end{tabular}
\end{table}

\begin{figure}[t]
  \centering
  \includegraphics[width=\linewidth]{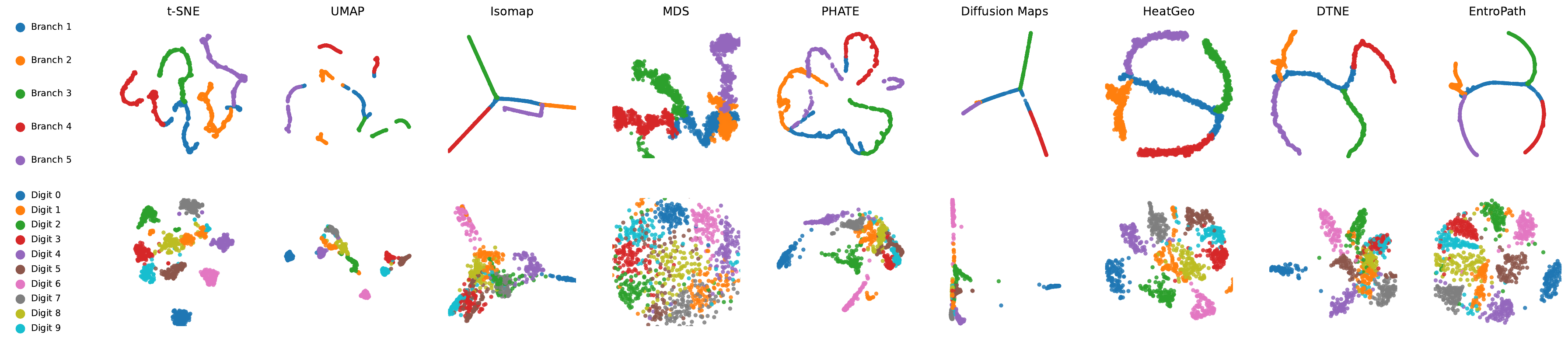}
  \caption{Two-dimensional embeddings on the Tree (top) and MNIST (bottom),
  coloured by ground-truth class. \entropath and DTNE render the Tree's branches
  as clean one-dimensional filaments, whereas the neighbour-embedding methods
  broaden them; this fidelity to the underlying one-dimensional geometry is the
  visual counterpart of \entropath's leading Tree scores in
  Table~\ref{tab:clustering-main}.}
  \label{fig:clustering-grid}
\end{figure}

\paragraph{Results.}
Table~\ref{tab:clustering-main} shows a clear and interpretable pattern across
the continuous-to-discrete spectrum. On the \emph{Tree}, a purely geometric
target, \entropath attains the highest homogeneity and aMI; it is effectively
tied with UMAP within seed-to-seed variance and ahead of DTNE, and among these
competitive methods it has the lowest variance across seeds ($\pm0.049$ versus
$\pm0.06$--$0.09$ for UMAP, DTNE, and HeatGeo). On \emph{MNIST}, \entropath is
competitive (fourth of nine), behind the neighbour-embedding methods t-SNE and
UMAP and the diffusion method HeatGeo, but ahead of PHATE, DTNE, and the
remaining baselines. On \emph{PBMC}, the most cluster-like dataset, \entropath is
mid-table, comparable to t-SNE on homogeneity (marginally behind on aMI) and
ahead of UMAP, Isomap, Diffusion Maps, and MDS, while HeatGeo, PHATE, and DTNE
lead.

This profile is exactly what one expects from a geodesic-preservation method.
\entropath preserves the continuous structure of the data rather than contracting
it into well-separated point clouds; on PBMC this keeps related cell types on a
connected continuum that a hard $k$-means partition then fragments, which a
partition-based score penalises. The same property is an advantage on the Tree,
where the target structure \emph{is} a continuum. Notably, among the geometry-
and trajectory-oriented methods, \entropath leads on the continuous manifold while
DTNE leads on the discrete single-cell data---the two trade places according to
how cluster-like the data are. Overall, the geometric objective does not
compromise general clustering competitiveness: \entropath is at or near the top on
continuous structure and remains a reasonable choice on discrete data, with no
task-specific tuning.

\subsection{Ablation studies}
\label{app:ablation}


\subsubsection{Landmark approximation}
\label{app:ablation-lm}

For $N$ beyond a few thousand, \entropath replaces the full $N\times N$ MDS with a landmark
(Nystr\"om) approximation: $M$ landmarks are embedded and the remaining points projected onto
them. We ablate the landmark count $M$ and the selection rule on the Swiss roll ($N=5000$,
10 seeds), forcing the landmark path below the usual threshold to probe the tradeoff, and
report the row-wise Spearman fidelity of the full embedding against the analytic geodesic
(\Cref{fig:ablation-lm-quality}).

Fidelity is governed by a \emph{complexity threshold} rather than a fixed fraction of $N$: the
spiral requires $M\approx2000$ landmarks to resolve. Below this the landmark set undersamples
the manifold and fidelity collapses ($\rho\approx0.49$ at $M\le500$, where the embedding
degenerates into a folded disc); at the design point $M=2000$
FPS recovers $\rho=0.94\pm0.01$, $95\%$ of the full-rank value ($\rho=0.99$); and by $M=4000$
all selectors are within noise of full-rank. Trustworthiness tracks the same curve ($0.98$ at
$M=2000$ vs.\ $0.997$ full-rank). Because the threshold is set by manifold complexity, the same
$M=2000$ that is $40\%$ of the points here is only $4\%$ at $N=50{,}000$, where it still yields
a faithful unrolling (\Cref{fig:scalability}); the relative speedup therefore grows with $N$.

Among selection rules, FPS $\ge$ $k$-means $\ge$ random holds throughout, with FPS the most
accurate and the most stable at the design point ($\rho=0.94\pm0.01$ for FPS, $0.91\pm0.02$ for
$k$-means, $0.83\pm0.06$ for random); we use FPS by default. At $M=2000$ the landmark path is
$12\times$ faster than full-rank ($8$\,s vs.\ $93$\,s; \Cref{fig:ablation-lm-runtime}), with the
speedup diminishing as $M\to N$ (the landmark MDS cost approaches full-rank by $M=4000$), so
$M\approx2000$ is the quality/speed sweet spot.

\begin{figure}[t]
  \centering
  \begin{subfigure}{0.42\linewidth}
    \includegraphics[width=\linewidth]{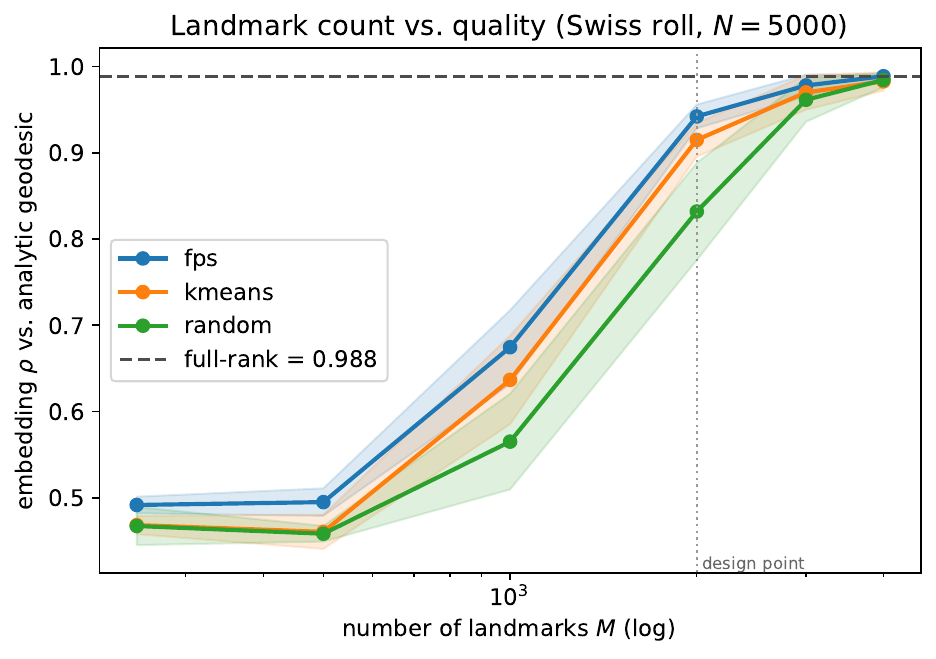}
    \caption{}\label{fig:ablation-lm-quality}
  \end{subfigure}\hspace{0.02\linewidth}
  \begin{subfigure}{0.42\linewidth}
    \includegraphics[width=\linewidth]{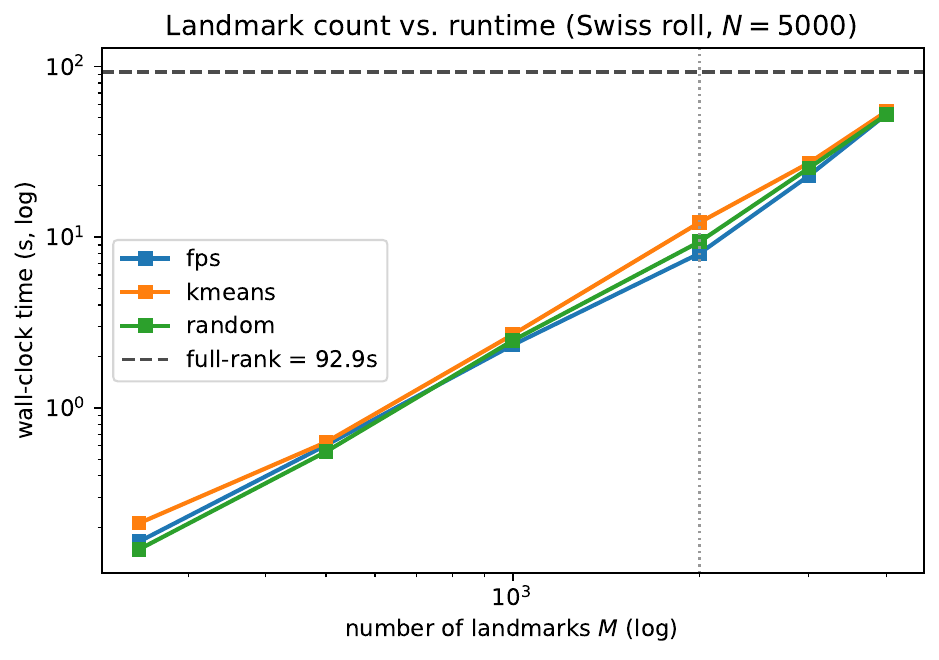}
    \caption{}\label{fig:ablation-lm-runtime}
  \end{subfigure}
  \caption{\textbf{Landmark ablation on the Swiss roll} ($N=5000$, 10 seeds; bands $\pm1$ s.d.).
    \textbf{(a)} Embedding fidelity (row-wise Spearman vs.\ the analytic geodesic) against the
    landmark count $M$, for FPS, $k$-means, and random selection, with the full-rank ceiling
    (dashed) and the $M=2000$ design point (dotted). \textbf{(b)} Wall-clock time on a log
    scale; the landmark path is an order of magnitude faster than full-rank at the design
    point, with the advantage eroding as $M\to N$.}
  \label{fig:ablation-lm}
\end{figure}

\begin{figure*}[t]
  \centering
  \includegraphics[width=0.55\linewidth]{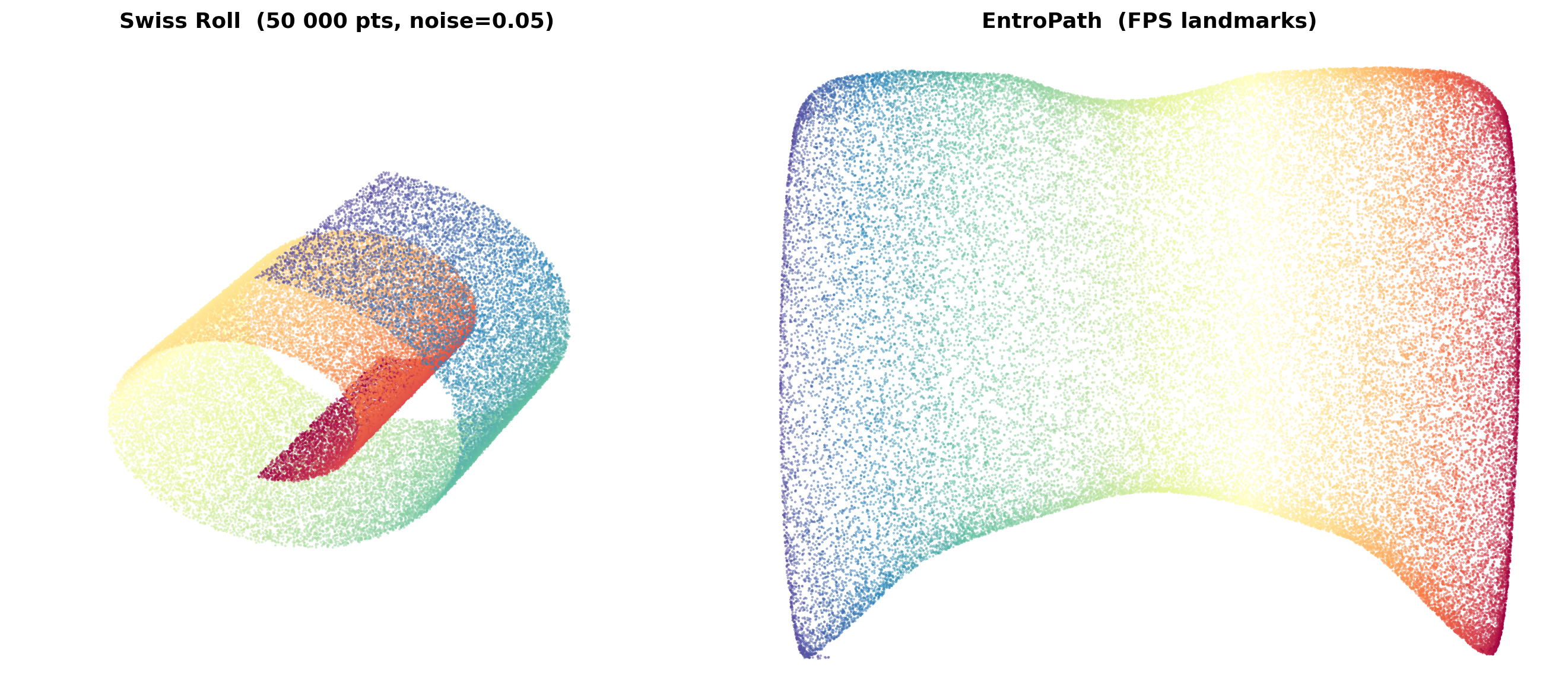}
  \caption{\textbf{Landmark embedding at scale.} A $50{,}000$-point Swiss roll
    (left, coloured by the spiral parameter; Gaussian noise $\sigma=0.05$) and its
    full \entropath embedding (right). $M=2000$ FPS landmarks (only $4\%$ of the
    points) are embedded directly, with the remaining points placed by the
    out-of-sample projection of \Cref{sec:landmarks}, at a scale where full-rank
    MDS is infeasible. The manifold remains well unrolled and the spiral parameter
    is recovered as a monotone gradient across all $50{,}000$ points, empirical
    evidence that the landmark budget of $M=2000$ (\Cref{app:ablation-lm}) remains
    effective at this scale.}
  \label{fig:scalability}
\end{figure*}

%

\subsubsection{Sensitivity to graph connectivity \texorpdfstring{$k_{\mathrm{NN}}$}{kNN}}
\label{app:ablation-knn}

All graph-based methods share a single neighbourhood size $k_{\mathrm{NN}}$,
which for \entropath also sets the adaptive bandwidth $\sigma_i$ (the $k$-th
nearest-neighbour distance) and hence the diffusion scale. We vary
$k_{\mathrm{NN}}\in\{5,10,15,20\}$ on the uniform and non-uniform swiss roll
($N=2{,}000$, 30 seeds) and report the row-wise Spearman correlation between the
embedding and the analytic geodesic (\Cref{fig:ablation-knn},
\Cref{tab:ablation-knn}).

On the \textbf{non-uniform} roll --- the regime that stresses density
heterogeneity --- \entropath is the best diffusion method at \emph{every}
$k_{\mathrm{NN}}$, and its margin over the next-best method (DTNE) grows from
$k_{\mathrm{NN}}=5$ to $10$ and then holds ($0.774,0.854,0.875,0.877$ vs.\
DTNE's $0.764,0.773,0.805,0.820$). The advantage is therefore a property of the
method, not of a particular connectivity. On the \textbf{uniform} roll, where
there is no density challenge, \entropath is best at small $k_{\mathrm{NN}}$
($5,10$); at $k_{\mathrm{NN}}=15,20$ the spectral embedding of Diffusion Maps
edges ahead at the embedding level. The on-thesis effect nonetheless persists:
among the random-walk methods, \entropath is far more robust to the shortcut
edges that appear at large $k_{\mathrm{NN}}$ --- DTNE collapses from $0.954$ to
$0.687$ and HeatGeo from $0.898$ to $0.683$ between $k_{\mathrm{NN}}=10$ and
$20$, whereas \entropath retains $0.813$, because MERW down-weights the
low-degree bridge edges responsible for the shortcuts
(cf.\ Remark~\ref{rem:bottleneck}).

At the distance-matrix level (before the embedding step, where the walk and
distance act directly) \entropath has the best matrix of all diffusion methods at
$k_{\mathrm{NN}}\ge 10$ on \emph{both} datasets; Diffusion Maps' strong uniform
embedding does not come from a better distance matrix (its distance-level Spearman is
only $0.13$--$0.57$) but from its spectral coordinates unrolling the easy
manifold. The method ranking on the non-uniform roll is unchanged across
$k_{\mathrm{NN}}\in\{5,\dots,20\}$, so the fixed choice $k_{\mathrm{NN}}=15$ used
in the main experiments is not a tuned parameter; it sits at the non-uniform
performance plateau, matches the single-cell setting, and --- since \entropath is
only second on the uniform roll at that value --- does not favour our method.

\begin{figure}[t]
  \centering
  \includegraphics[width=0.7\linewidth]{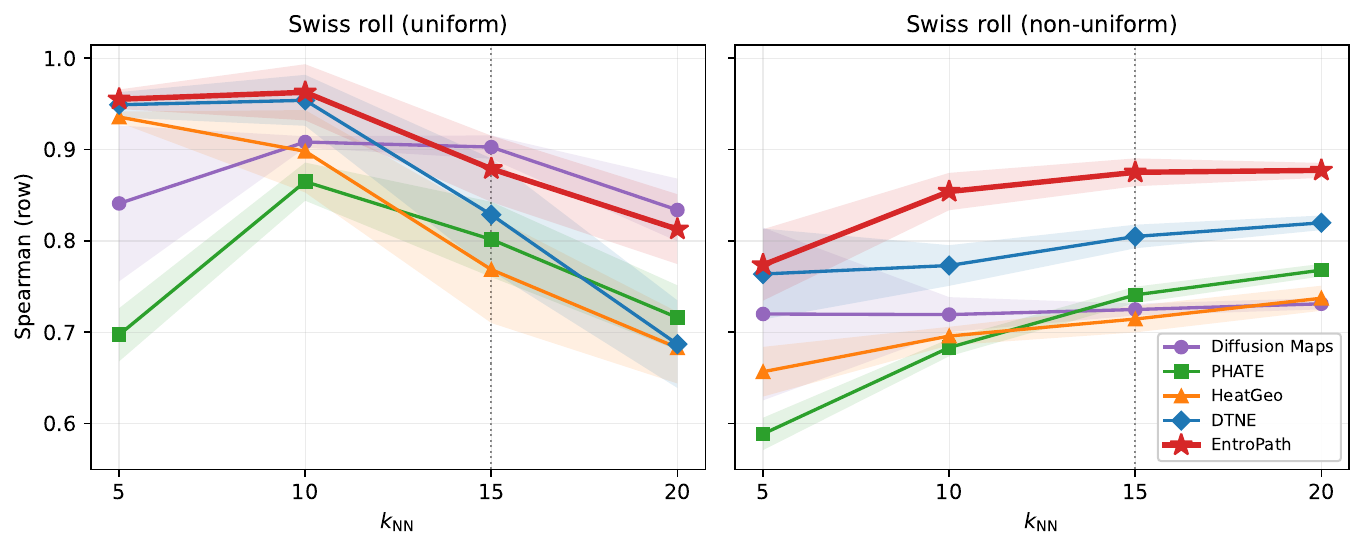}
  \caption{Row-wise Spearman correlation with the analytic geodesic vs.\ graph
    connectivity $k_{\mathrm{NN}}$ (mean over 30 seeds, shaded $\pm$\,1 std), for
    the uniform (left) and non-uniform (right) swiss roll. The dotted line marks
    the fixed value $k_{\mathrm{NN}}=15$. On the non-uniform roll \entropath leads
    at every connectivity; on the uniform roll it leads at small $k_{\mathrm{NN}}$
    and, among the random-walk methods, degrades least as shortcut edges appear
    at $k_{\mathrm{NN}}=20$.}
  \label{fig:ablation-knn}
\end{figure}

\begin{table}[t]
  \centering
  \caption{Embedding-level row-wise Spearman correlation with the analytic
    geodesic vs.\ $k_{\mathrm{NN}}$ (mean over 30 seeds). \textbf{Bold} = best
    diffusion method per column. \entropath leads the non-uniform regime at every
    $k_{\mathrm{NN}}$.}
  \label{tab:ablation-knn}
  \begin{tabular}{lcccccccc}
    \toprule
     & \multicolumn{4}{c}{Uniform} & \multicolumn{4}{c}{Non-uniform} \\
    \cmidrule(lr){2-5}\cmidrule(lr){6-9}
    Method & $k{=}5$ & $k{=}10$ & $k{=}15$ & $k{=}20$
           & $k{=}5$ & $k{=}10$ & $k{=}15$ & $k{=}20$ \\
    \midrule
    Diffusion Maps & 0.841 & 0.908 & \textbf{0.903} & \textbf{0.834} & 0.720 & 0.719 & 0.725 & 0.731 \\
    PHATE          & 0.697 & 0.865 & 0.802 & 0.716 & 0.589 & 0.683 & 0.741 & 0.768 \\
    HeatGeo        & 0.936 & 0.898 & 0.769 & 0.683 & 0.657 & 0.696 & 0.714 & 0.737 \\
    DTNE           & 0.949 & 0.954 & 0.829 & 0.687 & 0.764 & 0.773 & 0.805 & 0.820 \\
    \entropath      & \textbf{0.955} & \textbf{0.963} & 0.879 & 0.813 & \textbf{0.774} & \textbf{0.854} & \textbf{0.875} & \textbf{0.877} \\
    \bottomrule
  \end{tabular}
\end{table}

\subsubsection{Kernel profile (Gaussian vs.\ \texorpdfstring{$\alpha$}{alpha}-decay)}
\label{app:ablation-kernel}
\entropath supports two affinity kernels: an adaptive Gaussian kernel
$A_{ij}=\exp(-\|x_i-x_j\|^2/\sigma_i\sigma_j)$ and an $\alpha$-decay kernel
$A_{ij}=\exp(-(d_{ij}/\sigma_i)^{\alpha})$ with a large exponent ($\alpha=40$),
which gives a sharper, near-box profile within the adaptive bandwidth. On the
synthetic manifolds the two perform similarly. Where they differ, the $\alpha$-decay
kernel yields marginally higher global rank correlation and marginally lower
trustworthiness---consistent with its flatter in-bandwidth weighting placing
relatively more weight on the longer-range edges \emph{within} the neighbourhood
graph than the peaked Gaussian, nudging global coherence upward at a small cost to
the tightest local neighbourhoods (\Cref{tab:ablation-kernel}).

Both kernels are evaluated under the same embedding-level protocol as the main synthetic
benchmarks (\Cref{sec:exp-synthetic}): on three manifolds with
heterogeneous geometry---non-uniform Swiss roll, sphere, and dense tree---we
report the row-wise Spearman correlation between embedding distances and the
$k$-nearest-neighbour shortest-path geodesic (global fidelity), together with
trustworthiness (local neighbourhood preservation), at $k_{\mathrm{NN}}=15$ over
30 seeds. Only the \entropath kernel is varied; every other pipeline setting is
held fixed.

\begin{table}[t]
  \centering
  \caption{\entropath with the Gaussian vs.\ $\alpha$-decay kernel: row-wise
    Spearman against the shortest-path geodesic and trustworthiness
    ($k_{\mathrm{NN}}=15$, mean over 30 seeds). The kernel trades local fidelity
    (trustworthiness) against global coherence (Spearman); the net effect across
    datasets is a wash.}
  \label{tab:ablation-kernel}
  \begin{tabular}{llcc}
    \toprule
    Dataset & Kernel & Spearman (row) & Trustworthiness \\
    \midrule
    \multirow{2}{*}{Swiss roll (non-uniform)}
      & Gaussian          & 0.890 & \textbf{0.981} \\
      & $\alpha$-decay     & \textbf{0.896} & 0.971 \\
    \addlinespace
    \multirow{2}{*}{Sphere}
      & Gaussian          & 0.762 & \textbf{0.904} \\
      & $\alpha$-decay     & \textbf{0.764} & 0.882 \\
    \addlinespace
    \multirow{2}{*}{Tree (dense)}
      & Gaussian          & \textbf{0.785} & 0.879 \\
      & $\alpha$-decay     & 0.784 & \textbf{0.899} \\
    \bottomrule
  \end{tabular}
\end{table}

The observed differences are small and do not change \entropath's relative
standing on any dataset, so the kernel is not a performance-critical choice. We
therefore choose the kernel on theoretical grounds rather than by empirical
tuning: the \textbf{Gaussian} kernel is used for all synthetic experiments and
for the geodesic theory, since the graph-Laplacian consistency underlying the
geodesic recovery result (\Cref{thm:geodesic}) is established for the Gaussian
profile and its finite moments; the adaptive \textbf{$\alpha$-decay} kernel is
used for the single-cell data (\Cref{sec:singlecell}), where heavy-tailed
connectivity is needed for robustness across scales and no ground-truth geodesic
is recovered. Thus each application regime uses a single fixed kernel, with no
per-dataset hyperparameter tuning.

\subsubsection{Robustness to input noise}
\label{app:ablation-noise}

A positioning claim in \Cref{sec:related} is that \entropath's path ensemble
smooths shortcut-inducing perturbations while still approximating the same
underlying geodesic geometry in the short-time regime of \Cref{thm:geodesic}:
because the free-energy dissimilarity aggregates over many paths (a log-sum-exp),
local perturbations to individual edges contribute through many alternative paths
rather than through a single optimal path, whereas methods built on a single
shortest path (Isomap, Shortest Path) are sensitive to the spurious short-circuit
edges that noise introduces. We test this directly on the uniform Swiss roll,
sweeping the input noise level $\sigma\in\{0,0.05,0.1,0.25,0.5,1.0\}$ over 30
seeds; $\sigma=1.0$ is $\sim\!10\%$ of the manifold's arc-length scale, beyond
which the local neighbourhood structure is destroyed for every method.

The evaluation uses the \emph{analytic} geodesic, computed in closed form from the
true $(t,h)$ parameters and therefore completely immune to the input noise. This
isolates robustness of the distance estimator itself: unlike a graph
shortest-path ground truth, the reference does not degrade as the input graph
becomes noisier, so any drop in fidelity as $\sigma$ rises is purely method
degradation, not ground-truth degradation. The uniform sampling further isolates
the noise axis from density heterogeneity. We report the distance-level (Level-1)
row-wise Spearman correlation between each method's dissimilarity matrix and the
analytic geodesic, which probes the distance representation directly, without the
embedding step (\Cref{fig:ablation-noise}).

\entropath attains the highest distance-level fidelity at \emph{every} noise level
and degrades the most gracefully. From the noiseless baseline to the heavy-noise
point $\sigma=0.5$ it retains $88\%$ of its correlation ($0.878\to0.774$), against
$78\%$ for the single-path methods ($0.834\to0.650$), with the remaining
diffusion-distance methods (DTNE, PHATE, HeatGeo) in between ($83$--$86\%$). Its
margin over the single-path methods \emph{widens} through the mid-noise range
where shortcut edges proliferate---from $+0.04$ at $\sigma=0$ to $+0.12$ at
$\sigma=0.5$---before all methods converge toward the Euclidean floor
($\rho\!\approx\!0.53$) at $\sigma=1.0$, where the manifold is no longer locally
recoverable by anyone. This is consistent with the path-ensemble argument: because
the free-energy dissimilarity aggregates contributions from many paths rather than
relying on a single shortest path, it is less sensitive to perturbations that
alter any individual path, so the advantage is largest in the regime where
shortcut edges---but not yet total structural collapse---dominate. This experiment
isolates robustness to observation noise from robustness to density heterogeneity;
the latter is addressed separately in the non-uniform Swiss roll experiments
(\Cref{app:swiss-roll-full}).

\begin{figure}[t]
  \centering
  \includegraphics[width=0.4\linewidth]{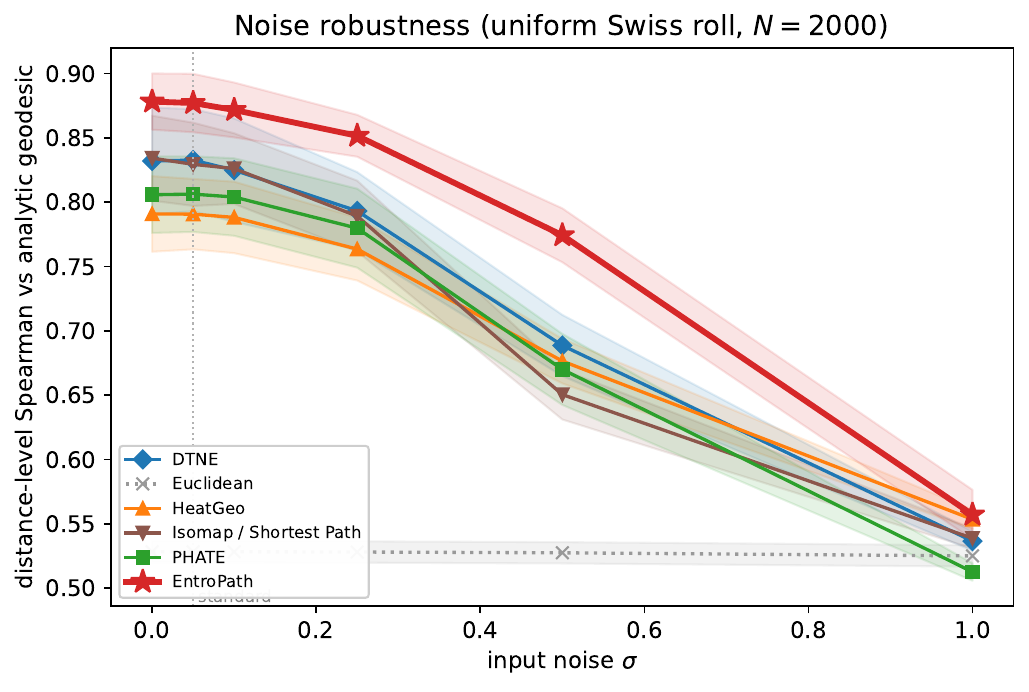}
  \caption{\textbf{Robustness to input noise} (uniform Swiss roll, $N=2000$, 30 seeds;
    bands $\pm1$ s.d.). Distance-level row-wise Spearman between each method's distance
    matrix and the noise-immune analytic geodesic, versus input noise $\sigma$. \entropath
    is most accurate at every $\sigma$ and degrades most gracefully; its lead over the
    single-path methods (Isomap and Shortest Path, which share one distance matrix at this
    level and so appear as a single curve) widens through the mid-noise range before all
    methods converge toward the Euclidean floor as local structure is destroyed. Diffusion
    Maps is omitted: as a spectral-embedding method its diffusion-distance matrix does not
    track the geodesic, so its Level-1 curve sits near the Euclidean floor and is
    uninformative.}
  \label{fig:ablation-noise}
\end{figure}

At the embedding level (Level-2) the ranking is broadly the same, with one exception:
the spectral embedding of Diffusion Maps carries its own noise robustness in the
mid-noise range---its low-frequency eigenvectors act as a smoother---so it leads on
Level-2 for $\sigma\le0.5$ before collapsing below all methods at $\sigma=1.0$. This is
a property of the embedding, not of geodesic recovery; among methods that reconstruct a
geodesic \emph{distance}, which is the quantity \entropath is designed to produce,
\entropath remains the most robust.

\subsection{Single-cell additional experiments}
\label{app:singlecell}

\subsubsection{Pseudotime analysis}
\label{app:pseudotime}


To infer a continuous cell progression along a developmental trajectory, we define
a pseudotime $\tau:V\to[0,1]$ based on MERW diffusion potentials from
user-specified root cells.

\paragraph{Root-based diffusion potential.}
Let $\mathcal{R}\subset V$ be a set of root cells.
Define
\begin{equation}\label{eq:droot}
  d_{\mathrm{root}}(x)
  = -\log\!\left(\sum_{r\in\mathcal{R}}T^k_{rx}\right),
\end{equation}

where $d_{\mathrm{root}}(x)$ is the free energy of the MERW path ensemble from
$\mathcal{R}$ to $x$ (\Cref{eq:free-energy}): the negative log partition function
over all $k$-step paths from the root set to $x$. We call this scalar field the
\emph{diffusion potential}; cells connected to the root by high-probability MERW
paths have lower potential and are assigned earlier pseudotime.

\paragraph{Normalisation.}
Two strategies are supported:
\begin{itemize}
  \item \textit{Min-max:} $\;\tau(x)=
    \bigl(d_{\mathrm{root}}(x)-\min d_{\mathrm{root}}\bigr)/
    \bigl(\max d_{\mathrm{root}}-\min d_{\mathrm{root}}\bigr)$.
  \item \textit{Rank:} $\;\tau(x)=\mathrm{rank}\bigl(d_{\mathrm{root}}(x)\bigr)/(N-1)$,
    which is more robust to density variations and outliers.
\end{itemize}

\paragraph{Landmark approximation.}

For large $N$, let $\mathcal{L}$ denote a set of $M \ll N$ landmark points. Compute diffusion on the landmark graph yielding $T^k_{\mathcal{L}}\in\R^{M\times M}$, and use the projection operator $P \in \mathbb{R}^{N \times M}$ from
\Cref{sec:landmarks}.
Root cell $r$ is represented \emph{softly} over landmarks via $P(r,:)$, and the
landmark diffusion potential is
\begin{equation}\label{eq:droot-lm}
  d^{\mathcal{L}}_{\mathrm{root}}(y)
  = -\log\!\left(
      \sum_{r\in\mathcal{R}}\sum_{z=1}^M P(r,z)\,T^k_{\mathcal{L}}(z,y)
    \right), \quad y\in\{1,\ldots,M\}.
\end{equation}
After normalising to landmark pseudotimes $\tau_{\mathcal{L}}(y)$, the full-data
pseudotime is recovered by
\[
  \tau(x) = \sum_{y=1}^M P(x,y)\,\tau_{\mathcal{L}}(y).
\]
The soft root representation avoids artefacts from hard landmark assignment and
ensures consistency between full-data and approximate computations.

The landmark approximation reduces runtime on large datasets ($n>2000$) while
preserving embedding quality, as required for the scalability results of
\Cref{sec:experiments}.


\paragraph{Terminal correction (optional).}
Given terminal cells $\mathcal{T}\subset V$, compute analogously
$d_{\mathrm{term}}(x)=-\log\bigl(\sum_{s\in\mathcal{T}}T^k_{sx}\bigr)$,
normalised to $[0,1]$.
The corrected pseudotime
\[
  \tau'(x) = \frac{\tau(x)}{\tau(x)+d_{\mathrm{term}}(x)},
\]
re-normalised to $[0,1]$, pulls late cells towards the terminal set while
preserving the root-anchored ordering.

\begin{remark}[Pseudotime as geodesic distance]
By \Cref{thm:geodesic}, in the short-time regime $t=k/\lmax\to 0$ the diffusion
potential \eqref{eq:droot} satisfies
$d_{\mathrm{root}}(x)\propto\min_{r\in\mathcal{R}}\dM(r,x)^2/4t$,
so pseudotime is proportional to the squared geodesic distance from the nearest
root on the underlying manifold.
\end{remark}

\paragraph{DEMaP protocol.}
We follow the DEMaP protocol of \citet{Moon2019} as configured by DTNE
\citep{Wei2025}. For each dataset we draw random subsamples and, on each,
compute the Spearman correlation between shortest-path geodesic distances on a
$k$-NN graph of the high-dimensional reference and Euclidean distances in the
two-dimensional embedding, reporting the mean over repetitions. Following
DTNE's per-dataset settings, the subsample size is $500$ for the smaller
datasets (Paul15, Nestorowa, Pancreas) and $2{,}000$ for the larger ones
(Lymphoid, Embryoid Body, root atlas), with $50$ repetitions throughout; the
geodesic $k$-NN matches DTNE's correlation notebook per dataset. Two
high-dimensional references are used, by design: trustworthiness and
continuity are computed against the representation each method received as
input (the PCA projection), measuring whether a method preserved what it saw;
DEMaP is computed against the strictest available reference, which for
Embryoid Body and Lymphoid is the raw sqrt-transformed expression (full LSI),
aligning our DEMaP with DTNE's published appendix, and coincides with the
input representation on the remaining datasets. All methods on a given dataset
are scored identically, so within-dataset comparisons are exact; the
per-dataset reference differences mean DEMaP values are not directly
comparable \emph{across} datasets.

\subsubsection{Pseudotime: empirical comparison}
\label{app:pseudotime_comparison}

The pseudotime construction of Section~\ref{app:pseudotime} induces a cell
ordering as the free-energy dissimilarity from a designated root cell. Here we assess
that ordering empirically against reference orderings and against DTNE, the
strongest manifold-distance baseline. For the developmental datasets that lack a
temporal annotation (Paul15, Nestorowa, Pancreas) the reference is the normalized
shortest-path (NSP) distance to the root on the high-dimensional $k$NN graph; for
Embryoid Body it is the annotated developmental stage; for the root atlas it is
the consensus developmental ordering of \citet{Shahan2022}. We report Spearman
and Kendall rank correlations (Pearson in Table~\ref{tab:app-pseudotime}).

\paragraph{\entropath and DTNE are concordant.} Across datasets the two orderings
are highly correlated with each other (pairwise Spearman $0.947$ on Paul15) and
similar against the reference, with DTNE ahead on every dataset --- marginally on
Paul15, Nestorowa, Lymphoid, and Embryoid Body ($\le 0.012$), and by a larger
margin on Pancreas ($0.041$) and the root atlas ($0.056$)
(Table~\ref{tab:app-pseudotime}). Because \entropath and DTNE share the same
free-energy dissimilarity and differ only in the underlying random walk---maximum
entropy versus standard---this concordance indicates that the maximum entropy
random walk recovers essentially the same developmental ordering while improving
the geometric fidelity of the embedding (Table~\ref{tab:sc-geometry}).
Figure~\ref{fig:paul15-pseudotime} illustrates the two orderings on Paul15.

\begin{figure}[t]
\centering
\includegraphics[width=0.7\textwidth]{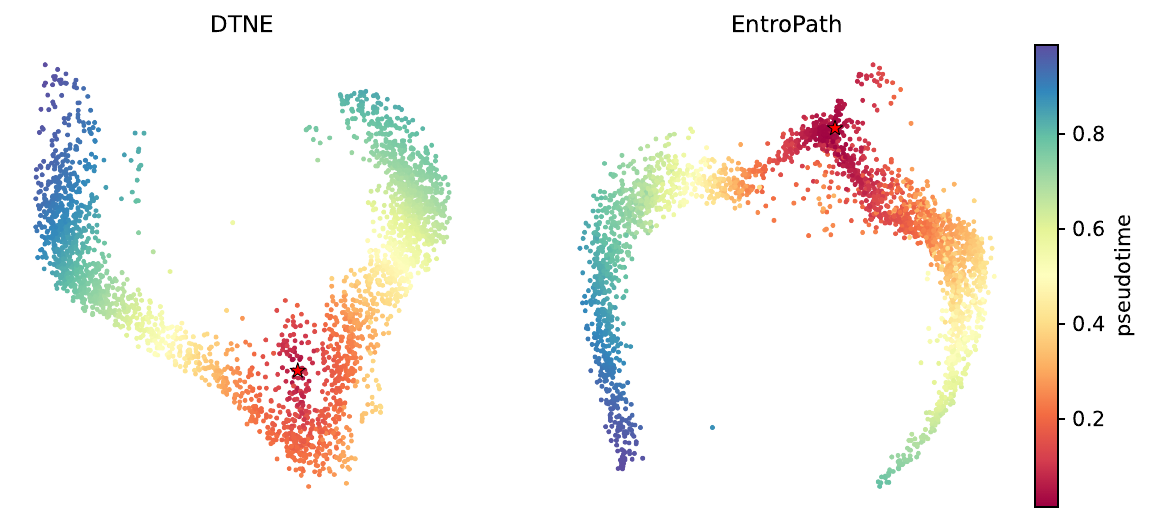}
\caption{Root-anchored pseudotime on Paul15 for DTNE (left) and \entropath (right), coloured by inferred pseudotime (early to late); the red star marks the
root cell. The two orderings are highly concordant (Spearman $0.947$), consistent
with the shared free-energy dissimilarity.}
\label{fig:paul15-pseudotime}
\end{figure}

\paragraph{Limitation.} \entropath does not lead this metric, and the reason is
specific: the pseudotime is read from the same diffusion operator $T^k$ used for
the embedding, but the diffusion power $k$ that is optimal for embedding quality
over-diffuses for \emph{ordering}---on the root atlas, ordering quality peaks at
a markedly lower power than the embedding-selected one and degrades beyond it.
A diffusion power selected specifically for ordering---decoupled from the
embedding and validated independently of the reference---may reduce this gap; we
leave such decoupled optimisation to future work and report all results at the
embedding-selected power, without tuning against the reference.

\begin{table}[t]
\centering
\caption{Pseudotime rank correlations with reference orderings (normalized
shortest-path distance to the root for Paul15, Nestorowa, Pancreas, Lymphoid, and
Embryoid Body; root geodesic for the root atlas); higher is better.}
\label{tab:app-pseudotime}
\begin{tabular}{lcccc}
\toprule
 & \multicolumn{2}{c}{DTNE} & \multicolumn{2}{c}{EntroPath} \\
\cmidrule(lr){2-3}\cmidrule(lr){4-5}
Dataset    & Spearman & Kendall & Spearman & Kendall \\
\midrule
Paul15     & 0.963 & 0.827 & 0.951 & 0.825 \\
Nestorowa  & 0.969 & 0.859 & 0.959 & 0.834 \\
Pancreas   & 0.992 & 0.924 & 0.951 & 0.832 \\
Lymphoid   & 0.651 & 0.492 & 0.645 & 0.461 \\
Embryoid   & 0.955 & 0.825 & 0.951 & 0.812 \\
Root atlas\footnotemark & 0.932 & 0.788 & 0.876 & 0.697 \\
\bottomrule
\end{tabular}
\end{table}
\footnotetext{Against the consensus developmental ordering of
\citet{Shahan2022}, \entropath attains Spearman $0.740$ / Kendall $0.545$ and DTNE
$0.811$ / $0.613$; we report the root-geodesic reference here for consistency
with the other datasets.}

\section{Spectral basis of time selection}
\label{app:appendix-spectral-basis}

Our entropy-based selection of the diffusion power $k$ follows the
protocol introduced by PHATE~\citep{Moon2019}: compute the von
Neumann entropy of the diffusion operator at varying powers, and
identify the knee point of the resulting entropy curve. A subtle
question is which spectral object the entropy should be computed from.
We compute it from the singular values of the row-stochastic MERW
transition matrix
\begin{equation}
  T \;=\; D_\psi^{-1} \tilde{A} D_\psi,
  \qquad D_\psi = \mathrm{diag}(\psi),
\end{equation}
where $\psi$ is the MERW stationary measure satisfying
$\tilde{A}\psi = \lmax\psi$. This differs from computing the
entropy directly from the eigenvalues of the symmetric kernel
$\tilde{A}$, in a way worth making explicit.

\paragraph{Implicit symmetrization.}
Computing entropy from the singular values of $T^k$ admits an
equivalent formulation as eigenvalues of a symmetric operator. For
any matrix $T$ with singular value decomposition $T = U\Sigma V^\top$,
\begin{equation}
  T T^\top \;=\; U \Sigma^2 U^\top,
  \qquad
  \sqrt{T T^\top} \;=\; U \Sigma U^\top,
\end{equation}
where $\sqrt{\cdot}$ denotes the principal matrix square root. The
matrix $\sqrt{T T^\top}$ is symmetric positive semidefinite, and its
eigenvalues are precisely the singular values of $T$. Computing
von Neumann entropy from the singular values of $T^k$ is therefore
equivalent to computing it from the eigenvalues of
$\sqrt{T^k (T^k)^\top}$. We use SVD in the implementation to avoid the
explicit matrix square root, but the symmetric reformulation clarifies
what spectral object is actually being analyzed.

\paragraph{Why $T$ rather than $\tilde{A}$.}
The matrices $T$ and $\tilde{A}$ are similar, related by the
diagonal conjugation $T = D_\psi^{-1} \tilde{A} D_\psi$, and therefore
share the same eigenvalue spectrum $\{\mu_m\}$. However, they have
different \emph{singular value} spectra: the singular values of $T$
are determined by $T T^\top$, which is not unitarily equivalent to
$\tilde{A}^2$ when $\psi$ is non-uniform. Specifically, the
conjugation by the non-orthogonal $D_\psi$ stretches the singular
spectrum of $T$ away from the eigenvalue spectrum of $\tilde{A}$, with
the magnitude of the stretch governed by
$\kappa(D_\psi) = \max_i \psi_i / \min_i \psi_i$.

This stretching has a clear geometric interpretation. The singular
values of $T$, equivalently the eigenvalues of $\sqrt{T T^\top}$,
encode not only the spectral decay structure shared with $\tilde{A}$,
but also the heterogeneity of the stationary measure $\psi$, which
itself reflects the local geometry of the graph (high-density regions
correspond to large $\psi_i$). The entropy curve computed from
$\{\sigma_m^k\}$ thus reflects both the multiscale structure of the
kernel and the geometry of the sampling distribution. On uniformly
sampled data ($\psi$ approximately constant), $T$ and $\tilde{A}$ have
nearly identical singular spectra and the two entropy criteria
coincide. On non-uniformly sampled data such as branching trees, the
$T$-based criterion produces more stable knees and yields better
downstream embedding quality.

We retain $\tilde{A}^k$ as the kernel for distance computation, where
the symmetry of $\tilde{A}$ and the direct interpretation via
Varadhan's formula are essential. The asymmetric matrix $T$ is used
only for time selection, where its richer singular spectrum is
informative.

\subsection{Relation to DTNE}
\label{sec:dtne-compare}

DTNE \citep{Wei2025} defines the distance
$D^{\mathrm{DTNE}}_{ij}=-2\log(\hat p_i\cdot\hat p_j)$ using cosine-normalised
modified personalised-PageRank vectors. At the level of the log-kernel
construction, \entropath and DTNE coincide; the essential differences lie in the
underlying diffusion process (MERW versus SRW) and the diffusion horizon (see
\Cref{tab:dtne}). A cosine-normalised variant of our distance,
$-2\log(\hat v_i\cdot\hat v_j)$, matches the DTNE form exactly. We use the
unnormalised version $D_{ij}=-\log(v_i\cdot v_j)$ by default, as it avoids the
diagonal rescaling that can distort distances to peripheral nodes on heterogeneous
graphs. Details on both variants, the kernel-trick derivation of the factor $-2$,
and the Bhattacharyya kernel connection appear in \Cref{app:dtne}.

\begin{table}[t]
\centering
\caption{\entropath and DTNE share the log-kernel distance construction but differ
in the diffusion operator and horizon.}
\label{tab:dtne}
\begin{tabular}{lll}
\toprule
& DTNE & \entropath \\
\midrule
Base walk & SRW: $A_{ij}/d_i$ & MERW: $A_{ij}\psi_j/(\lmax\psi_i)$ \\
Stationary distribution & $\pi_i\propto d_i$ & $\pi_i\propto\psi_i^2$ \\
Diffusion profile & mPPR (infinite-horizon mixture) & $T^k$, finite horizon $k$ \\
Kernel & cosine in $\sqrt{\,\cdot\,}$-space & inner product in spectral space \\
Default distance & $-2\log(\hat p_i\cdot\hat p_j)$ & $-\log(v_i\cdot v_j)$ \\
Geodesic approximation & not established & \Cref{thm:geodesic} \\
\bottomrule
\end{tabular}
\end{table}

\section{Distance variants and the DTNE correspondence}
\label{app:dtne}

\subsection{Computational remarks}
\label{comp:remarks}

In practice there is no need to form the matrix power $T^k$ explicitly. After
computing the top $d_{\mathrm{emb}}$ eigenpairs $(\tilde\mu_\ell,\phi^{(\ell)})$ of
$\tilde A$ (equivalently of $A$, with $\tilde\mu_\ell=\lambda_\ell/\lmax$), the
dissimilarity is
\[
  D_{ij}\approx-\log\sum_{\ell=1}^{d_{\mathrm{emb}}}
    \tilde\mu_\ell^{\,k}\,\phi^{(\ell)}_i\phi^{(\ell)}_j,
\]
which costs $O(n d_{\mathrm{emb}}^2)$ per evaluation after the initial
eigendecomposition. The dominant cost is therefore the sparse eigendecomposition
of $A$, achievable in $O(n\,k_{\mathrm{NN}}\,d_{\mathrm{emb}})$ time via Lanczos or
randomised SVD.

\paragraph{Numerical floor.}
Whichever route is used, entries of $\tilde A^{\,k}$ (or the truncated spectral
sum above) are floored at a small $\epsilon$ (default $10^{-21}$) before the
logarithm, to absorb the negative round-off that the matrix power can produce.
The value is immaterial and plays no modelling role.

\subsection{Two distance variants}
\label{app:variants}

Two natural distances arise from the Gram kernel $S_{ij}=v_i\cdot v_j$
(\Cref{prop:gram}), differing in whether the self-similarity $S_{ii}=\|v_i\|^2$ is
divided out. We work throughout with the positive-mode representation of
\Cref{prop:gram}: the feature vectors $v_i$ of \eqref{eq:vi} range over the
positive modes $\tilde\mu_\ell>0$, so they are real for any $k$ and no global
positive-semidefiniteness of $\tilde A^k$ is required. This is the part of the
spectrum retained by classical MDS; for even $k$, $\tilde A^k\succeq0$ and the
representation holds over the full spectrum.

\begin{definition}[Unnormalised distance]\label{def:Dunn}
\[
  D^{\mathrm{unn}}_{ij}=-\log S_{ij}=-\log(v_i\cdot v_j)\quad(i\ne j),
  \qquad D^{\mathrm{unn}}_{ii}:=0,
\]
the diagonal zeroed before classical MDS as in \Cref{rem:metric}.
\end{definition}

\begin{definition}[Normalised (canonical) distance]\label{def:Dcan}
\[
  D^{\mathrm{norm}}_{ij}
  =-2\log\frac{S_{ij}}{\sqrt{S_{ii}S_{jj}}}
  =-2\log(\hat v_i\cdot\hat v_j),
  \qquad D^{\mathrm{norm}}_{ii}=0\ \text{exactly},
\]
with $\hat v_i=v_i/\|v_i\|$. Here $D^{\mathrm{norm}}_{ii}=0$ holds exactly (not by
convention), since $\hat v_i\cdot\hat v_i=1$.
\end{definition}

\subsection{Kernel-trick interpretation of the normalised distance}
\label{app:kernel-trick}

The factor $-2$ and the cosine normalisation in \Cref{def:Dcan} arise naturally
from viewing the distance as a squared Euclidean distance in a feature space.
Following DTNE \citep{Wei2025}, let $G_{ij}=\hat v_i\cdot\hat v_j$ (so $G_{ii}=1$).
Whenever $[\log G_{ij}]$ is positive semidefinite, there exist feature vectors
$z_i$ with $\langle z_i,z_j\rangle=\log G_{ij}$, whence $\|z_i\|^2=\log G_{ii}=0$
and
\[
  \|z_i-z_j\|^2=-2\log(\hat v_i\cdot\hat v_j)=D^{\mathrm{norm}}_{ij}.
\]
This PSD condition, on the log-kernel $[\log G_{ij}]$, is distinct from the
positive-mode condition on $\tilde A^k$ used in \Cref{app:variants}: the latter
makes the vectors $v_i$ real, the former makes $D^{\mathrm{norm}}$ a squared
Euclidean distance. In general $[\log G_{ij}]$ is not PSD (\Cref{rem:metric}), so
the construction motivates the canonical form rather than certifying a global
Euclidean embedding.

\subsection{Normalised vs.\ unnormalised: when to use which}
\label{app:comparison}

The two variants are related by
\[
  D^{\mathrm{norm}}_{ij}=-2\log(v_i\cdot v_j)+\log\|v_i\|^2+\log\|v_j\|^2.
\]
On heterogeneous graphs the self-similarity
$S_{ii}=\|v_i\|^2=\sum_\ell\tilde\mu_\ell^{\,k}[\phi^{(\ell)}_i]^2$ is dominated by
the Perron mode at large $k$ ($S_{ii}\approx\psi_i^2$) and varies significantly
across nodes: peripheral nodes have small $S_{ii}$, so normalising by
$\sqrt{S_{ii}S_{jj}}$ inflates distances involving them. The unnormalised distance
avoids this distortion, remains numerically stable, and encodes absolute
diffusivity in addition to geometry; we therefore adopt it by default. The
normalised variant is preferable on homogeneous graphs or when the exact
kernel-trick Euclidean interpretation is required.

\begin{center}
\begin{tabular}{p{4.3cm}p{2.4cm}p{2.6cm}p{3.4cm}}
\toprule
Formula & $D_{ii}=0$? & $=\|z_i-z_j\|^2$? & Recommended use \\
\midrule
$-\log S_{ij}$, $D_{ii}:=0$ & before MDS & no &
  heterogeneous graphs (default) \\[4pt]
$-2\log(\hat v_i\cdot\hat v_j)$ & exactly & where log-kernel is PSD &
  homogeneous graphs; kernel interpretation \\
\bottomrule
\end{tabular}
\end{center}

\subsection{Kernel equivalence with DTNE}
\label{app:dtne-kernel}

DTNE \citep{Wei2025} employs modified personalised-PageRank vectors $p_i$ and the
distance $D^{\mathrm{DTNE}}_{ij}=-2\log(\hat p_i\cdot\hat p_j)$. It interprets its
kernel as the Bhattacharyya coefficient
$G^{\mathrm{DTNE}}_{ij}=\sum_u\sqrt{p_i(u)p_j(u)}$, the inner product of the
square-root-probability vectors. Cosine-normalising these recovers the same
algebraic form as our $\hat v_i\cdot\hat v_j$.

Both methods thus rely on cosine similarity in an embedded feature space
(square-root probability for DTNE, spectral for \entropath), differing only in the
underlying walk (SRW vs.\ MERW) and the diffusion horizon (an infinite-horizon
PageRank mixture vs.\ a finite $k$-step ensemble), as summarised in
\Cref{tab:dtne}. The geodesic-recovery guarantee (\Cref{thm:geodesic}) holds for
the MERW-based construction; no analogous result is established for the SRW-based
DTNE distance.

\section{Proof of \texorpdfstring{\Cref{thm:geodesic}}{Theorem~\ref*{thm:geodesic}}}
\label{app:proof}
We use the notation of \Cref{thm:geodesic}: graphs $\{G_n\}$ converging spectrally
(\Cref{ass:spectral-conv}) to $(\calM,g)$, vertices $i\in V_n$ identified with
points $x_i\in\calM$, and the MERW matrix
$T_n=\Psimat_n^{-1}(A_n/\lmax^{(n)})\Psimat_n$, where
$\Psimat_n=\operatorname{diag}(\psi^{(n)})$ and $\psi^{(n)}$ is the
($\ell^2$-normalised) Perron vector of $A_n$. We drop the superscript $(n)$ when
unambiguous, and write $H_n=\lmax^{(n)}I-A_n$, whose spectral convergence
(\Cref{ass:spectral-conv}) identifies the limiting operator $H=-\Delta_\calM+V$.
The proof combines three standard ingredients.

\subsection{Step 1 --- Discrete-to-continuum limit \texorpdfstring{($n\to\infty$, $t$ fixed)}{(n to infinity, t fixed)}}

\paragraph{Idea of Step 1.}
The matrix power $(A_n/\lmax^{(n)})^{k_n}$ is a discrete heat kernel: in the
eigenbasis, each mode is reweighted by $(1-\mu_\ell/\lmax^{(n)})^{k_n}$, which for
the smooth low-lying modes approaches the continuum weight $e^{-t\mu_\ell}$.
Summing the modes turns the discrete operator into the eigenfunction expansion of
the continuum heat kernel
$K^H_t(x,y)=\sum_\ell e^{-t\mu_\ell}u_\ell(x)u_\ell(y)$. Two bookkeeping points
complete the argument: discrete and continuum eigenvectors use different
normalisations, contributing a single global factor $Z_n=\Theta(n)$; and the
rough high-frequency modes, which the continuum picture discards, must not survive
in the discrete sum.

\begin{lemma}[Discrete-to-continuum limit]\label{lem:dtoc}
Fix $t>0$, $k_n=\lfloor t\,\lmax^{(n)}\rfloor$. Under \Cref{ass:spectral-conv}
there is a pair-independent factor $Z_n=\Theta(n)$ (bounded above and below by
constant multiples of $n$; it depends only on the global sampling density, not on
$i,j$) such that, for every pair $x_i\neq x_j$,
\[
  Z_n\,\bigl[(A_n/\lmax^{(n)})^{k_n}\bigr]_{ij}\;\longrightarrow\;K^H_t(x_i,x_j),
\]
the integral kernel of the heat semigroup $e^{-tH}$ ($H=-\Delta_\calM+V$) against
the sampling measure.
\end{lemma}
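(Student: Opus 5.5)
The plan is to expand the matrix power in the $\ell^2$-orthonormal eigenbasis of $A_n$ (equivalently of $H_n$), as in \Cref{prop:gram}(i):
\[
  \bigl[(A_n/\lmax^{(n)})^{k_n}\bigr]_{ij}=\sum_{\ell=1}^{n}\Bigl(1-\tfrac{\mu^{(n)}_\ell}{\lmax^{(n)}}\Bigr)^{k_n}\phi^{(\ell)}_i\phi^{(\ell)}_j ,
\]
where $\mu^{(n)}_\ell\ge 0$ are the eigenvalues of $H_n$. I then compare this term by term with the continuum expansion $K^H_t(x,y)=\sum_\ell e^{-t\mu_\ell}u_\ell(x)u_\ell(y)$.

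\textbf{Normalisation.} The discrete vectors have unit $\ell^2$ norm, while the interpolated eigenfunctions have unit $L^2(\calM)$ norm. With $n$ sample points, a Riemann-sum argument gives $\sum_i f(x_i)^2\approx n\int f^2\,d\rho$ against the sampling measure. The natural identification is therefore $\phi^{(\ell)}_i\approx Z_n^{-1/2}u_\ell(x_i)$ with $Z_n=n$, up to a density-dependent constant. This constant is the same for every $\ell$ and every pair, which gives the pair-independent factor $Z_n=\Theta(n)$.

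\textbf{Low modes.} Fix a cutoff $L$. For each $\ell\le L$, \Cref{ass:spectral-conv} gives $\mu^{(n)}_\ell\to\mu_\ell$. Together with $\lmax^{(n)}\to\infty$ and $k_n=\lfloor t\lmax^{(n)}\rfloor$, the elementary limit $(1-a/\lambda)^{\lfloor t\lambda\rfloor}\to e^{-ta}$ yields
\[
  \Bigl(1-\tfrac{\mu^{(n)}_\ell}{\lmax^{(n)}}\Bigr)^{k_n}\to e^{-t\mu_\ell}.
\]
For the products $Z_n\phi^{(\ell)}_i\phi^{(\ell)}_j$, I need \emph{pointwise} convergence to $u_\ell(x_i)u_\ell(x_j)$. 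The assumption only gives $L^2$ convergence, so I would strengthen the reading of \Cref{ass:spectral-conv} to uniform convergence of the interpolants. This is available for kernel graph Laplacians (Belkin--Niyogi, Calder--García Trillos type results), and I would state it explicitly as part of the assumption. With it, the finite sum over $\ell\le L$ converges to the truncated heat kernel.

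\textbf{Tail and main obstacle.} I have to show that
\[
  Z_n\sum_{\ell>L}\bigl|1-\mu^{(n)}_\ell/\lmax^{(n)}\bigr|^{k_n}\,\bigl|\phi^{(\ell)}_i\phi^{(\ell)}_j\bigr|
\]
is small uniformly in $n$, and that the continuum tail $\sum_{\ell>L}e^{-t\mu_\ell}\|u_\ell\|_\infty^2$ vanishes as $L\to\infty$. The continuum tail follows from Weyl's law and Hörmander's sup-norm bound $\|u_\ell\|_\infty\lesssim\mu_\ell^{(d-1)/4}$.

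For the discrete tail I would split into two ranges:
\begin{itemize}
  \item \emph{Intermediate modes,} $\mu^{(n)}_\ell\le c\lmax^{(n)}$. Here $1-x\le e^{-x}$ gives a weight of at most $e^{-t\mu^{(n)}_\ell}$ up to rounding. I then need a uniform discrete Weyl-type lower bound $\mu^{(n)}_\ell\gtrsim\ell^{2/d}$, combined with the trivial bound $|\phi_i|\le 1$, made effective by a delocalisation bound $Z_n|\phi^{(\ell)}_i|^2\lesssim\mathrm{poly}(\ell)$.
  \item \emph{Top of the spectrum,} where $1-\mu^{(n)}_\ell/\lmax^{(n)}$ may approach $-1$. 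Here I would use that $A_n$ is strongly diagonally dominated for Gaussian affinities (a positive diagonal and a bounded negative spectrum relative to $\lmax^{(n)}$). Then $\sup_{\ell>L}|\tilde\mu_\ell|\le 1-\delta$, so these at most $n$ terms contribute $n(1-\delta)^{k_n}\to 0$, because $k_n\sim t\lmax^{(n)}\gg\log n$.
\end{itemize}

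This tail control, and the uniform eigenvector bounds it requires, is the step I expect to be hardest. It is not contained in \Cref{ass:spectral-conv} as stated, and I would add it as an explicit hypothesis (equivalently, uniform trace-class convergence of the discrete semigroups).

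Letting $n\to\infty$ and then $L\to\infty$ completes the lemma.
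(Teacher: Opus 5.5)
Your proposal follows the paper's proof essentially step for step. It uses the same eigen-expansion of $(A_n/\lmax^{(n)})^{k_n}$ and the same low-mode limit $(1-\mu_\ell^{(n)}/\lmax^{(n)})^{k_n}\to e^{-t\mu_\ell}$ with the $\ell^2$-to-$L^2$ factor $Z_n=\Theta(n)$. It also uses the same tail split: modes near the bottom of $\operatorname{spec}(A_n)$ are killed by a uniform gap, the remaining modes are dominated by $e^{-t\mu}$, and $n\to\infty$ is taken before the cutoff.

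The hypotheses you add are uniform (not $L^2$) eigenvector convergence, discrete Weyl and delocalisation bounds, and $k_n\gg\log n$ to absorb the factor $Z_n$. They sit exactly where the paper's own argument is informal: it infers pointwise convergence from continuity of $u_\ell$, takes the bottom gap ``as part of the regime'', appeals to a uniform bound on $\|u_\ell\|_\infty$, and writes $O((1-\delta)^{k_n})$ with no factor $Z_n$. So making them explicit is right. One correction: your diagonal-dominance justification of the bottom gap fails, because the diagonal entries are $1$ while $\lmax^{(n)}\le d_{\max}\to\infty$. That gap, like $k_n\gg\log n$, must therefore be assumed rather than derived.
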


\begin{proof}
Expanding in the $\ell^2(V_n)$-orthonormal eigenbasis of $A_n$ (equivalently
$H_n$), with eigenvalues $\mu_\ell^{(n)}$ of $H_n$ in increasing order,
\[
  \bigl[(A_n/\lmax^{(n)})^{k_n}\bigr]_{ij}
  =\sum_\ell\Bigl(1-\tfrac{\mu_\ell^{(n)}}{\lmax^{(n)}}\Bigr)^{k_n}
   \phi^{(n)}_\ell(i)\,\phi^{(n)}_\ell(j).
\]

\emph{Low modes.} For each fixed $\ell$, $\mu_\ell^{(n)}=O(1)$ (it converges to
the $\ell$-th eigenvalue of $H$), and
\[
  k_n\log\Bigl(1-\tfrac{\mu_\ell^{(n)}}{\lmax^{(n)}}\Bigr)
   =-t\mu_\ell+O\!\bigl(t\,\mu_\ell^2/\lmax^{(n)}\bigr)\to-t\mu_\ell,
\]
so the discrete weight $(1-\mu_\ell^{(n)}/\lmax^{(n)})^{k_n}\to e^{-t\mu_\ell}$.
The interpolated eigenvectors converge in $L^2(\mu)$ to the continuum
eigenfunctions (\Cref{ass:spectral-conv}); since the discrete vectors are
$\ell^2(V_n)$-normalised ($\sum_i\phi_\ell(i)^2=1$) and the continuum ones are
$L^2(\mu)$-normalised, they differ by the pair-independent factor
$Z_n^{1/2}=\Theta(\sqrt n)$, i.e.\
$\phi^{(n)}_\ell(i)=Z_n^{-1/2}u_\ell(x_i)\,(1+o(1))$. The eigenfunctions $u_\ell$
are continuous on the compact $\calM$, so this holds pointwise at $x_i,x_j$, and
\[
  Z_n\,\phi^{(n)}_\ell(i)\phi^{(n)}_\ell(j)\to u_\ell(x_i)u_\ell(x_j).
\]

\emph{High modes.} Split the spectrum at a fixed cutoff $M$. The finitely many
modes with $\mu_\ell^{(n)}\le M$ converge term-by-term as above. For the
remainder we use two facts. (i) The kNN affinity graphs are non-bipartite and
connected---overlapping neighbourhoods create odd cycles---so the bottom of
$\operatorname{spec}(A_n)$ stays bounded away from $-\lmax^{(n)}$, uniformly in
$n$; we take this bottom gap as part of the regime. The corresponding oscillating
modes (those with $A_n$-eigenvalue near $-\lmax^{(n)}$) then satisfy
$|1-\mu_\ell^{(n)}/\lmax^{(n)}|\le 1-\delta$ and contribute
$O((1-\delta)^{k_n})\to0$. (ii) The remaining modes satisfy
$|(1-\mu_\ell^{(n)}/\lmax^{(n)})^{k_n}|\le e^{-t\mu_\ell^{(n)}}$, and with the
uniform bound on $\|u_\ell\|_\infty$ their total contribution is dominated by the
trace-class tail $\sum_{\mu_\ell>M}e^{-t\mu_\ell}$, which $\to0$ as $M\to\infty$.
Letting $n\to\infty$ then $M\to\infty$ gives
\[
  Z_n\bigl[(A_n/\lmax^{(n)})^{k_n}\bigr]_{ij}
  \to\sum_\ell e^{-t\mu_\ell}u_\ell(x_i)u_\ell(x_j)=K^H_t(x_i,x_j).\qedhere
\]
\end{proof}

\subsection{Step 2 --- Symmetrisation cancels the Perron ratio (exactly, at finite \texorpdfstring{$n$}{n})}

Since $T_n^{k}=\Psimat_n^{-1}(A_n/\lmax^{(n)})^{k}\Psimat_n$,
\[
  T_n^{k}(i,j)=\frac{\psi_j}{\psi_i}\bigl[(A_n/\lmax^{(n)})^{k}\bigr]_{ij},
  \qquad
  T_n^{k}(j,i)=\frac{\psi_i}{\psi_j}\bigl[(A_n/\lmax^{(n)})^{k}\bigr]_{ji}.
\]
As $A_n$ is symmetric, so is $(A_n/\lmax^{(n)})^{k}$; the Perron ratios cancel
with \emph{no residual term}:
\begin{equation}\label{eq:cancel}
  \sqrt{T_n^{k}(i,j)\,T_n^{k}(j,i)}
  =\sqrt{\tfrac{\psi_j}{\psi_i}\tfrac{\psi_i}{\psi_j}}\;
   \bigl[(A_n/\lmax^{(n)})^{k}\bigr]_{ij}
  =\bigl[(A_n/\lmax^{(n)})^{k}\bigr]_{ij}.
\end{equation}
This identity is \emph{exact at finite $n$}---in contrast to the continuum
approximation of Step 1---and is the key structural feature of the symmetrised
free-energy dissimilarity. Combining \eqref{eq:cancel} with \Cref{lem:dtoc},
\begin{equation}\label{eq:Dexpand}
  D^{(n)}_{ij}(t)
   = -\log\sqrt{T_n^{k}(i,j)\,T_n^{k}(j,i)}
   = \log Z_n - \log K^H_t(x_i,x_j) + o(1),
\end{equation}
where $\log Z_n$ is a pair-independent level (the log-partition / mixing offset;
cf.\ \Cref{prop:largek}), not a geometric distortion.
\begin{remark}
The exact cancellation is the purpose of the geometric mean: the one-sided
quantity $-\log T_n^{k}(i,j)$ carries the asymmetric bias
$\log(\psi_i/\psi_j)=O(1)$, which \eqref{eq:cancel} eliminates rather than merely
controlling asymptotically.
\end{remark}

\subsection{Step 3 --- Short-time (Varadhan) asymptotics (\texorpdfstring{$t\to0^+$}{t to 0+})}

\begin{lemma}[Varadhan short-time asymptotics]\label{lem:varadhan}
Let $H=-\Delta_\calM+V$ on a closed Riemannian manifold $(\calM,g)$ with
$V\in C(\calM)$, and let $K^H_t$ be its heat kernel. Then for each $x\neq y$,
\[
  \lim_{t\to0^+} -4t\,\log K^H_t(x,y) \;=\; d_\calM(x,y)^2 .
\]
\end{lemma}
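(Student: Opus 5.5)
The plan is to reduce to the classical Varadhan formula for the free heat kernel $K_t$ of $-\Delta_\calM$, which on a closed manifold gives $\lim_{t\to0^+}-4t\log K_t(x,y)=\dM(x,y)^2$ for $x\neq y$. This is Varadhan's theorem~\citep{Varadhan1967}, stated there on $\R^d$ with variable coefficients and extended to compact manifolds by standard localisation, or alternatively by the Gaussian two-sided bounds $c_1t^{-d/2}e^{-\dM^2/(4t)(1+\epsilon)}\le K_t\le c_2t^{-d/2}e^{-\dM^2/(4t)(1-\epsilon)}$ valid for small $t$. We then show that the bounded potential $V$ changes $K^H_t$ only by a multiplicative factor lying between $e^{-t\|V\|_\infty}$ and $e^{t\|V\|_\infty}$. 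After taking $-4t\log$, this factor contributes at most $4t^2\|V\|_\infty\to0$, so the potential has no effect on the limit.

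The comparison step goes through the Feynman--Kac formula. Let $X_s$ be Brownian motion on $\calM$ generated by $\Delta_\calM$, matching the convention $\partial_tu=\Delta u$ fixed in \Cref{rem:factor4}. Then
\[
  K^H_t(x,y)=K_t(x,y)\,\mathbb{E}^{x\to y}_t\Bigl[\exp\Bigl(-\int_0^t V(X_s)\,ds\Bigr)\Bigr],
\]
where the expectation is over the Brownian bridge from $x$ to $y$ in time $t$. Since $V\in C(\calM)$ and $\calM$ is compact, $|\int_0^tV(X_s)\,ds|\le t\|V\|_\infty$ pathwise. This gives
\[
  e^{-t\|V\|_\infty}K_t(x,y)\le K^H_t(x,y)\le e^{t\|V\|_\infty}K_t(x,y).
\]
The sign-indefiniteness of $V$ is irrelevant here, since only $\|V\|_\infty$ enters.

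There is also an operator-theoretic route that avoids bridge measures. Write $V=V_+-V_-$ and use the Trotter product formula $e^{-t(-\Delta+V)}=\lim_m\bigl(e^{t\Delta/m}e^{-tV/m}\bigr)^m$. Each factor $e^{-tV/m}$ is a multiplication operator bounded between $e^{-t\|V\|_\infty/m}$ and $e^{t\|V\|_\infty/m}$, and the semigroup $e^{t\Delta/m}$ is positivity preserving. The same two-sided kernel inequality therefore follows by monotonicity, first for nonnegative test functions and then passing to kernels using continuity of both heat kernels.

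To conclude, take logarithms:
\[
  -4t\log K^H_t(x,y)=-4t\log K_t(x,y)+O\bigl(t^2\|V\|_\infty\bigr).
\]
Letting $t\to0^+$ and applying Varadhan to the first term gives $\dM(x,y)^2$.

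The main obstacle is justifying the free Varadhan limit on a general closed manifold, as opposed to $\R^d$. I would handle it by citing the Gaussian heat-kernel bounds (Li--Yau or Cheeger--Yau type) together with the observation that polynomial prefactors $t^{-d/2}$ vanish under $t\log$. A secondary point is that $K^H_t$ must be strictly positive so that its logarithm is defined; this follows from the lower comparison bound together with $K_t>0$.
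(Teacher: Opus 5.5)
Your proposal is correct and follows the paper's proof: the Feynman--Kac bridge representation bounds the effect of the potential by a multiplicative factor in $[e^{-t\|V\|_\infty},e^{t\|V\|_\infty}]$, which contributes only $O(t^2)$ after multiplying $-\log$ by $4t$, and the classical Varadhan formula for $-\Delta_\calM$ then gives the limit. Your additional remarks fill in points the paper leaves implicit without changing the argument: the Trotter-product alternative to bridge measures, the justification of Varadhan's formula on a closed manifold via two-sided Gaussian bounds, and the strict positivity of $K^H_t$.
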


\begin{proof}
For $-\Delta_\calM$ this is Varadhan's formula \citep{Varadhan1967}:
$-4t\log K^{-\Delta}_t(x,y)\to d_\calM(x,y)^2$. By the Feynman--Kac
representation,
\[
  K^H_t(x,y)
  = K^{-\Delta}_t(x,y)\,
    \mathbb{E}^{x\to y,\,t}\!\Bigl[e^{-\int_0^t V(\gamma_s)\,ds}\Bigr],
\]
the bridge pinned at $\gamma_0=x$, $\gamma_t=y$. Since $\calM$ is compact and $V$
continuous, $\|V\|_\infty<\infty$, so $\bigl|\int_0^t V(\gamma_s)\,ds\bigr|\le
t\|V\|_\infty$ for every path; the Feynman--Kac factor lies in $e^{O(t)}$,
uniformly. Hence $-\log K^H_t(x,y)=-\log K^{-\Delta}_t(x,y)+O(t)$, and multiplying
by $4t$ gives the claim.
\end{proof}

\subsection{Step 4 --- Combining along a diagonal sequence}

By Step 2, for each fixed $t>0$,
\[
  D^{(n)}_{ij}(t) = \log Z_n - \log K^H_t(x_i,x_j) + \epsilon_n(t),
  \qquad \epsilon_n(t)\xrightarrow[n\to\infty]{}0,
\]
with $Z_n=\Theta(n)$ pair-independent (\Cref{lem:dtoc}). A diagonal extraction
over $t\to0^+$ yields a sequence $t_n\to0^+$, with $n$ taken large enough that
both $\epsilon_n(t_n)\to0$ and $t_n\log Z_n=t_n\bigl(\log n+O(1)\bigr)\to0$, along
which $k_n=\lfloor t_n\lmax^{(n)}\rfloor\to\infty$ and
\[
  4t_n\,D^{(n)}_{ij}(t_n)
   = \underbrace{4t_n\log Z_n}_{\to\,0}
     \;\underbrace{-\,4t_n\log K^H_{t_n}(x_i,x_j)}_{\to\,d_\calM(x_i,x_j)^2}
     + \underbrace{4t_n\,\epsilon_n(t_n)}_{\to\,0}
   \;\longrightarrow\; d_\calM(x_i,x_j)^2,
\]
the middle term by \Cref{lem:varadhan}. This is the joint (diagonal) extraction
over $t\to0^+$ and $n\to\infty$ that yields \eqref{eq:varadhan}. \qed

\begin{remark}[Kernel-agnosticity]\label{rem:kernel}
The theorem uses the affinity kernel only through \Cref{ass:spectral-conv}: the
proof relies on symmetry and non-negativity of $A_n$ (Step~2) and on the spectral
convergence of $H_n=\lmax^{(n)}I-A_n$ to a continuum operator $H=-\Delta_\calM+V$
(Step~1), but not on the explicit form of the kernel. The argument therefore
applies to any affinity construction satisfying \Cref{ass:spectral-conv},
including the adaptive-bandwidth Gaussian used in our synthetic experiments and
the $\alpha$-decay kernel of \citet{Moon2019} used for single-cell data, provided
each meets that assumption. Since \Cref{ass:spectral-conv} fixes the principal
part to be the Laplace--Beltrami operator $-\Delta_\calM$, the kernel choice can
alter only the lower-order potential $V$ and the constants in the estimates; the
recovered geodesic, determined by the principal part, is unchanged.
\end{remark}

\end{document}